\newtheorem{theorem}{Theorem}
\newtheorem{corollary}{Corollary}[theorem]
\newtheorem{lemma}[theorem]{Lemma}
\newtheorem{definition}{Definition}
\newtheorem{assumption}{Assumption}
\begin{document}
\renewcommand\arraystretch{0.5}
%
\title{Transferable Time-Series Forecasting under Causal Conditional Shift}
%
%
%
%

\author{Zijian Li,
        Ruichu Cai*,
        Tom Z. J. Fu, Zhifeng Hao ~\IEEEmembership{Senior Member,~IEEE,} and Kun Zhang
\IEEEcompsocitemizethanks{\IEEEcompsocthanksitem
Zijian Li is with the School of Computing, Guangdong University of Technology, Guangzhou China, 510006.\protect
E-mail: leizigin@gmail.com
\IEEEcompsocthanksitem Ruichu Cai is with the School of Computer Science, Guangdong University of Technology, Guangzhou, China, 510006 and Peng Cheng Laboratory, Shenzhen, China, 518066.
Email: cairuichu@gmail.com\protect
\IEEEcompsocthanksitem Tom Z. J. Fu is with the School of Computer Science, Guangdong University of Technology, Guangzhou China, 510006.\protect Email: fuzhengjia@gmail.com\protect
\IEEEcompsocthanksitem Zhifeng Hao is with the College of Science, Shantou University, Shantou, Guangdong, China, 515063.\protect Email: haozhifeng@stu.edu.cn\protect
\IEEEcompsocthanksitem Kun Zhang is with the Department of Philosophy, Carnegie Mellon University, Pittsburgh, PA 15213 USA. E-mail: kunz1@cmu.edu\protect
}
\thanks{This research was supported in part by  the National Key R\&D Program of China (2021ZD0111501), National Science Fund for Excellent Young Scholars (62122022), Natural Science Foundation of China (61876043, 61976052), the major key project of PCL (PCL2021A12). (*Ruichu Cai is the Corresponding author.)}}

%
%

\markboth{Journal of \LaTeX\ Class Files,~Vol.~14, No.~8, August~2015}%
{Shell \MakeLowercase{\textit{et al.}}: Bare Demo of IEEEtran.cls for Computer Society Journals}
%



\IEEEtitleabstractindextext{%
\begin{abstract}\justifying
This paper focuses on the problem of semi-supervised domain adaptation for time-series forecasting, which is underexplored in literature, despite being often encountered in practice. Existing methods on time-series domain adaptation mainly follow the paradigm designed for static data, which cannot handle domain-specific complex conditional dependencies raised by data offset, time lags, and variant data distributions. In order to address these challenges, we analyze variational conditional dependencies in time-series data and find that the causal structures are usually stable among domains, and further raise the causal conditional shift assumption. Enlightened by this assumption, we consider the causal generation process for time-series data and propose an end-to-end model for the semi-supervised domain adaptation problem on time-series forecasting. Our method can not only discover the Granger-Causal structures among cross-domain data but also address the cross-domain time-series forecasting problem with accurate and interpretable predicted results. We further theoretically analyze the superiority of the proposed method, where the generalization error on the target domain is bounded by the empirical risks and by the discrepancy between the causal structures from different domains. Experimental results on both synthetic and real data demonstrate the effectiveness of our method for the semi-supervised domain adaptation method on time-series forecasting. 


\end{abstract}

\begin{IEEEkeywords}
Time Series, Granger Causality, Transfer Learning, Semi-Supervised Domain Adaptation
\end{IEEEkeywords}}

\maketitle

\IEEEdisplaynontitleabstractindextext

%
\IEEEpeerreviewmaketitle

\IEEEraisesectionheading{\section{Introduction}\label{sec:introduction}}

\textcolor{black}{Science is all about generalizations. Scientific applications of new theorems or discoveries also need to transfer the experimental discoveries from the lab or the virtual environments to real environments.} Domain adaptation \cite{pan2009survey,zhang2019bridging,li2021learning,stojanov2021domain}, which leverages the labeled source domain data (or a few labeled target domain data) to make a prediction in the unlabeled data from the target domain, essentially aims to address the intractable \textit{domain shift} phenomenon and find a robust forecasting model.

Various methods have been proposed for domain adaptation \cite{cai2021graph,hao2021semi,shui2021aggregating,li2021causal}, with \textit{covariate shift assumption} that the marginal distribution $P(X)$ varies with different domains while the conditional distribution $P(Y|X)$ is stable across domains. Based on this assumption, methods based on MDD \cite{long2015learning,pan2010domain,Cai_Chen_Li_Chen_Zhang_Ye_Li_Yang_Zhang_2021} or adversarial training \cite{cai2019learning,ganin2015unsupervised,xie2018learning,cai2019learning}, which aim to extract the domain-invariant representation \textcolor{black}{ for ideal transferability}, have been proposed and achieved great successes in non-time series data. Considering that the covariate shift scenario may not be satisfied, Zhang et.al \cite{zhang2013domain} leverage the thoughts of causality and further study three different possible scenarios: the target shift, the conditional shift, and the generalized target shift. Recently, Zhang et.al \cite{zhang2020domain} describe the changes of the data distribution by leveraging the bayesian treatment and propose a causality-based method to address the unsupervised domain adaptation problem.

Recently, domain adaptation for time-series data is receiving more and more attention. Though many researchers expand the aforementioned ideas on non-time series data to the field of time-series data \cite{da2020remaining,purushotham2016variational} and make a brave push, these methods implicitly reuse the covariate shift assumption for non-time series data to time-series data. 
\textcolor{black}{The covariate shift is one of the most conventional assumptions that is used by the existing time-series domain adaptation methods. It simply assumes $P^S(\bm{z}_{t+1}|\bm{z}_{t},\cdots,\bm{z}_1)=P^T(\bm{z}_{t+1}|\bm{z}_{t},\cdots,\bm{z}_1)$ and essentially takes all the relationships among variables into consideration, which is shown in Figure~\ref{fig:motivation1} (a). }

\begin{figure}
	\centering
	\includegraphics[width=0.9\columnwidth]{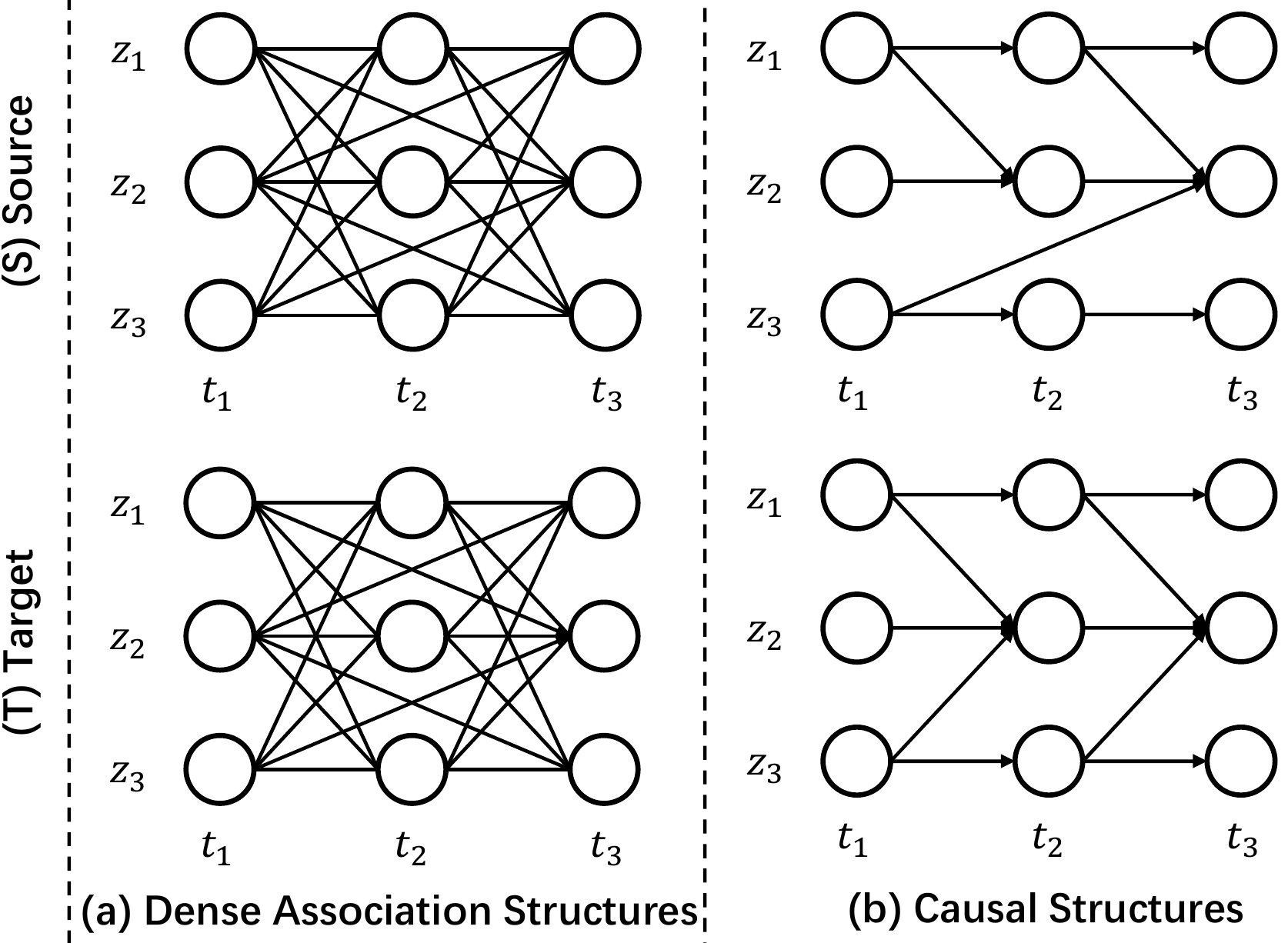}
	\caption{The illustration of \textcolor{black}{different methods of time-series domain adaptation in} three dimensions time-series data. (a) The existing methods, which reuse the covariate shift assumption, take all the relationships into account \textcolor{black}{(including the fake relationships)} and may result in \textcolor{black}{the degenerated performance}. (b) Considering the stable and \textcolor{black}{similar} causal generalization process of time-series data, the proposed method addresses the domain adaptative time-series forecasting problem by extracting the causal structures and modeling the domain-specific conditional distribution.}
	\label{fig:motivation1}
\end{figure}


\textcolor{black}{However, because of the complex dependencies of time series, (i.e., the 1st-order Markov dependence results in the dependence between any two-time steps), the conditional distribution of time-series data $P(\bm{z}_{t+1}|\bm{z}_{1}, \cdots \bm{z}_{t})$ varies sharply \textcolor{black}{among} different domains, making the conventional covariate shift assumption hard to be satisfied in time-series data.}
\textcolor{black}{Let us give an easily comprehensive example: \textcolor{black}{We first assume the following data generation process:} $z_t^1 = 1.5 \times z_{t-1}^1;
    z_t^2 = 3 \times z_{t-1}^2 + 2 \times (z_{t-1}^1)^3.$ 
We further let $z_0^1=1, z_0^2=1$, so $z_5^2=208403$. If we change  $z_0^1=1.1$, then $z_5^2 \approx 277303$. This example shows that even a small modification of $z_0^1$ may result in a huge change of $z_5^2$.} \textcolor{black}{Hence}, it is a challenging task to \textcolor{black}{devise a transferable and robust model for} time-series data.
Fortunately, as shown in Figure~\ref{fig:motivation1}(b), we find that the causal structures among different domains are stable and compact. Since the generation process of time-series data usually follows physical rules, which actually denote causality. Therefore, the causal structures among the source and the target domains are also similar and stable. \textcolor{black}{Give an example in the medicinal field}, in the physiological mechanism, the causal structure among `Blood Glucose'', ``Glucagon'' and ``Insulin'' stably holds for everyone. 
Inspired by the aforementioned observation, it is natural and intuitive to avoid the conventional covariate shift assumption and tackle the semi-supervised time-series forecasting domain adaptation problem with the help of causality. 

Following the aforementioned intuition, we propose a solution to semi-supervised transferable time-series forecasting by assuming that the causal structures are similar and stable across domains. We first analyze the drawbacks in the existing time-series domain adaptation methods and raise the \textbf{Causal Conditional Shift Assumption}. Under this assumption, we explore the time-series forecasting domain adaptation problem by borrowing the idea of the causal generation process of time-series data. Based on this insight, we devise the \textbf{G}ranger \textbf{C}ausality \textbf{A}lignment model (\textbf{GCA} in short). Technically, we devise a Recurrent Granger Causality Variational Autoencoder, \textcolor{black}{which is composed of the recurrent Granger Causality Reconstruction module and the domain-sensitive Granger-Causality-based prediction module. The recurrent Granger Causality Reconstruction module} is used to discover the Granger Causality with different lags from the source and the target domains. Considering that the strengths of causal structures from different domains may be different, we devise the domain-sensitive Granger-Causality-based prediction module. We further propose the Granger Causality \textcolor{black}{discrepancy} regularization, which is beneficial to reconstruct the target causal structures and restrict the discrepancy of the causal structures from different domains. \textcolor{black}{Moreover, by discovering and aligning the causal structures, the proposed method can not only provide model interpretability but also facilitate the usage of expert knowledge.} Our proposed method outperforms the state-of-the-art domain adaptation methods for time-series data on synthetic and real-world datasets. Moreover, we not only provide a theoretical analysis of the advantages of the causality mechanisms as well as the visualization and interpretability results.

\textcolor{black}{The rest of the paper is organized as follows. Section \ref{related_works} reviews the existing studies on domain adaptation on non-time series data and time-series data as well as the works about Granger Causality. We also elaborate on the proposed semi-supervised time-series domain adaptation model under causal conditional shift assumption in section \ref{model}. \textcolor{black}{In section \ref{sec:vae}, we introduce how to learn the distribution of time-series data with Granger Causality and variational inference.} Then we provide the implementation detail of the Granger Causality Alignment approach in section \ref{sec:imple}. Section \ref{theorm} presents the theoretical analysis of the proposed method. And section \ref{exp} shows the experimental results as well as the insightful visualization. We conclude the paper with future works discussion in section \ref{conclusion}.} 


\section{Related Works}\label{related_works}
In this section, we mainly focus on the existing techniques on domain adaptation on non-time series data, time-series domain adaptation as well as Granger Causality.

\subsection{Domain Adaptation on Non-Time Series Data} Domain Adaptation \cite{pan2009survey,zhang2019bridging,ganin2015unsupervised,xie2018learning,long2015learning,zhang2013domain, stojanov2021domain}, which leverages the labeled source domain data and limited labeled or unlabeled target domain data to make predictions in the target domain, have applications in various fields \cite{hao2021semi,shui2021aggregating,li2021causal,cai2021graph}. The Approaches of domain adaptation follow the covariate shift assumption and aim to extract the domain-invariant representation. Technically, these methods can be grouped into Maximum Mean Discrepancy-based methods and adversarial training methods. 

In view of the causal generation process, some researchers find that the conditional distribution $P(Y|X)$ usually changes and the covariate shift assumption does not hold, and they address such limitations from the views of causality. Zhang et.al \cite{zhang2013domain, zhang2015multi} address this problem by assuming only $P(Y)$ or $P(X|Y)$ changes and raising the target shift and conditional shift. 
Cai et.al \cite{cai2019learning,cai2021graph} are motivated by the causal generation process and raise disentangled semantic representation for unsupervised domain adaptation. Zhang et.al \cite{zhang2020domain} consider domain adaptation as the problem of graphical model inference and model the changes of the conditional distribution. \textcolor{black}{Recently, Stojanov and Li et.al \cite{stojanov2021domain} prove that the domain-invariant representation can not be extracted by a single encoder and take into account the domain-specific information. Li et.al \cite{li2021learning} address the semi-supervised domain adaptation under label shift. Finding that the unsupervised domain adaptation methods perform poorly with a few labeled target data, Saito et.al \cite{saito2019semi} propose the MME approach.} 
In this paper, we focus on semi-supervised domain adaptation on time-series data, which is more challenging because the complex conditional dependencies among time stamps are hard to be modeled.

\subsection{Domain Adaptation on Time-Series Data}
Time-series data is another type of common data. Recently, increasing attention is paid to the domain adaptation on time-series data. Previously, Da Costa et al. \cite{da2020remaining} straightforwardly extend the idea of domain adaptation for non-time series data and leverage the RNN as the feature extractor to extract the domain-invariant representation. Purushotham et al. \cite{purushotham2016variational} further improve it by using the variational recurrent neural network \cite{chung2015recurrent}. However, these methods are not able to well extract the domain-invariant information because of the complicated dependency between timestamps. \textcolor{black}{ Li et.al \cite{li2021causal} proposed the Causal Mechanism Transfer Network (CMTN), which captures and transfers the dynamic and temporal causal mechanism to alleviate the influence of time lags and different value ranges among different domains.} Recently, Cai et.al  \cite{Cai_Chen_Li_Chen_Zhang_Ye_Li_Yang_Zhang_2021} consider that the sparse associate structures among variables are stable across domains, and propose the sparse associative structure alignment methods for adaptative time series classification and regression tasks. 
\textcolor{black}{And Jin et.al \cite{jin2022domain} proposed the Domain Adaptation Forecaster (DAF), which leverages the statistical strengths from a relevant domain to improve the performance on the target domains. }
In order to address these problems, we assume that the causal structures are stable across domains and \textcolor{black}{propose the GCA approach that discovers the causal structure behind data and models how the conditional distribution changes.}

\subsection{Granger Causality}
Granger Causality \cite{diks2006new, granger1969investigating,Seth:2007, marcinkevivcs2021interpretable,9376668,lowe2020amortized}, which is a set of directed dependencies among multivariate time-series data, is widely used to determine which past portion of time-series data aids in predicting the future evolution of univariate time-series. Inferring Granger Causality is a traditional research problem and has been applied in several fields \cite{runge2019inferring,chiou2008economic,seth2015granger}. One of the most classical methods for inferring Granger Causality is the vector autoregressive (\textbf{VAR}) model \cite{10.1145/1557019.1557085, wei2006time}, which uses the linear time-lag effects as well as some sparsity techniques like the Lasso \cite{tibshirani1996regression} or the group lasso \cite{yuan2006model}. With the quick growth of the computation power, more Granger Causality inference methods that borrow the expressive power of neural networks have been proposed. Tank et al. \cite{9376668} devise a neural network-based autoregressive model and apply the sparsity penalties on the weights of the neural networks. \cite{marcinkevivcs2021interpretable} is motivated by the interpretability of self-explaining neural networks and proposes the generalized vector autoregression model to detect the signs of Granger Causality. In this paper, we are motivated by the generated process of time-series data and take the Granger-causal structure as the latent variables. We combine the variational inference framework into the vector autoregression model and further used it for semi-supervised domain adaptation of time-series forecasting.

\section{Semi-Supervised Time-Series Domain Adaptation Model  Under  Causal Conditional Shift Assumption}\label{model}
In this section, we first give a definition of the time-series generation process \textcolor{black}{based on Granger Causality}. Based on this process, we further define the problem of semi-supervised domain adaptation of time-series forecasting and finally propose the causal condition shift assumption. 

\subsection{Time-Series Generation Process \textcolor{black}{under Granger Causality}}


In order to illustrate the time-series causal generation process, we first consider the discrete multivariate time-series data with the length of $\Upsilon$ as $\bm{z}=(\bm{z}_1, \bm{z}_2, \cdots, \bm{z}_t,\cdots \bm{z}_{\Upsilon})$, in which $\bm{z}_t\in \mathbb{R}^D$ are \textcolor{black}{the $t$-th} variables with $D$ dimensions. \textcolor{black}{With the abuse of notation, we further let $\bm{z}^d$ be the univariate time-series of $d$-th variables and $z_t^d$ be the value of $d$-th variables at $t$-th timestamp.} Intuitively, according to real-world observation, we usually observe that future values of univariate time-series data are not only influenced by its past values but also decided by the other time series. 

In fact, the data generation processes are usually controlled by a stable causal mechanism among variables. In order to describe the data generation process in the causal view, the \textcolor{black}{nonlinear Granger Causality} \cite{diks2006new, granger1969investigating,Seth:2007, marcinkevivcs2021interpretable,9376668,lowe2020amortized,lutkepohl2005new} is applied for modeling the dependencies between the child nodes and their parents.

Given the observed time-series $\bm{z}=(\bm{z}_{1}, \cdots, \bm{z}_t, \cdots, \bm{z}_{\Upsilon})$, each value of $z_t^d$ can be represented in terms of some functions $F_d$ and its parents $\bm{pa}(z_t^d)$, \textcolor{black}{i.e., those variables directly influence $z_t^d$,} \textcolor{black}{and we let the nonlinear Granger Causality follow the structural equation model \cite{pearl1998graphs} shown as Equation (\ref{equ:granger_generation}).}
\begin{equation}\label{equ:granger_generation}
	z_t^d = F_d(\bm{pa}(z_t^d),\epsilon_d),
\end{equation}
\textcolor{black}{where $\epsilon_d$ are the independent noise terms} \textcolor{black}{and $F_d(\cdot)$ denotes any flexible types of nonlinear function. Actually, Equation (\ref{equ:granger_generation}) specifies that how the future variables $\bm{z}^d_t$ relies on its past values, and $\bm{pa}(\cdot)$, which essentially coincides the nonlinear Granger Causality.} 
Based on the aforementioned Granger-causal generation process for time-series data, it is natural to find that the value of $z_t^d$ actually relies on all the previous observation $(\bm{z}_{1}, \cdots,\bm{z}_{t-1})$. 

\subsection{Semi-Supervised Domain Adaptation for Time-Series Forecasting}


We first let the past observation and the future ground truth be $\bm{x}=(\bm{z}_1,\cdots,\bm{z}_t)$ and $\bm{y}=(\bm{z}_{t+1},\cdots,\bm{z}_{t+\tau})$. \textcolor{black}{It is noted that $\bm{y}$ can be the multivariate time-series or univariate time-series, and here we let $\bm{y}$ be multivariate time-series for generalization.} We further assume that $\{(\bm{x}_i^S, \bm{y}_i^S)\}_{i=0}^N$ and $\{(\bm{x}_i^T, \bm{y}_i^T)\}_{i=0}^M$ are respectively independent and identically distributed (I.I.D.) drawn from the source domain $P^S(\bm{x},\bm{y})$ and the target domain $P^T(\bm{x},\bm{y})$. \textcolor{black}{Note that} $M$ and $N$ are the numbers of random samples respectively drawn from the source and the target domains, and we have $N \gg M$.  Semi-supervised Domain adaptation for time-series forecasting aims to find the model that performs well on the dataset drawn from the target distribution by leveraging the sufficient labeled source domain data and the limited labeled target domain data. 

Existing methods on domain adaptive time-series forecasting straightforwardly follow the idea of the conventional domain adaption methods for non-time series data and typically assume that the conditional distribution $P(\bm{z}_{t+1}|\bm{z}_1,\cdots,\bm{z}_{t})$ remain fixed, which is shown as follows:
\begin{equation}
\label{equ:cov}
\begin{split}
    P^S(\bm{z}_{t+1}|\bm{z}_1,\cdots,\bm{z}_{t})&=P^T(\bm{z}_{t+1}|\bm{z}_1,\cdots,\bm{z}_{t})\\
    P^S(\bm{z}_1,\cdots,\bm{z}_{t}) &\neq P^T(\bm{z}_1,\cdots,\bm{z}_{t}).
\end{split}
\end{equation}
However, according to the aforementioned Granger-causal generation process for time-series data, \textcolor{black}{since $\bm{z}_{t+1}$ depends on $(\bm{z}_1,\cdots,\bm{z}_{t})$, the inappreciable value changes of $\bm{z}_1$ may magnify or minify the influence to $\bm{z}_{t+1}$ by the stable causal relationships, which makes the conditional distribution from different domains vulnerable to changes}. Therefore, the covariate shift assumption is very difficult to be satisfied in time-series data.

In order to bypass the aforementioned difficulties, one straightforward solution is to replace the covariate shift assumption with the ``Parent Shift Assumption'', which is shown in Equation (\ref{equ:pa}). This assumption can mitigate the accumulated changes from $\bm{z}_1$ and essentially hints that the future values only depend on their direct parents. 
\begin{equation}
\label{equ:pa}
\begin{split}
    P^S(\bm{z}_{t+1}|\bm{pa}(\bm{z}_{t+1}))&=P^T(\bm{z}_{t+1}|\bm{pa}(\bm{z}_{t+1}))\\
    P^S(\bm{pa}(\bm{z}_{t+1})) &\neq P^T(\bm{pa}(\bm{z}_{t+1})).
\end{split}
\end{equation}

However, the non-negligible dependencies still exist in time-series data with the aforementioned assumption, few changes of the parents $\bm{pa}(\bm{z}_t)$ like the different time-lags and value ranges result in palpable variation of the future values, which further leads to the unsatisfactory of Equation (\ref{equ:pa}).

Fortunately, given any domains, the causal mechanism among variables is stable \textcolor{black}{and similar}, though the conditional distributions are easily influenced by the different value ranges, time lags, and offsets. \textcolor{black}{Recall the intuitive medicinal example mentioned in Section \ref{sec:introduction}, where the physiological mechanism stably works in everyone}. Therefore, with the inspiration of the domain-invariant causal mechanism between the source and the target domain, 
we propose the following \textbf{causal conditional shift assumption}.
\begin{assumption}
\label{ass:csa} \textbf{Causal Conditional Shift Assumption}: Given causal structures $A$, $\bm{z}_1, \cdots, \bm{z}_t$, and $\bm{z}_{t+1}$, we assume that the conditional distribution $P(\bm{z}_{t+1}|\bm{z}_1,\cdots,\bm{z}_{t})$ varies with different domains while the causal structures remain fixed. Formally, we have:
\begin{equation}
\label{equ:ccsa}
    \begin{split}
    P^S(\bm{z}_{t+1}|\bm{pa}(\bm{z}_{t+1}))\neq&P^T(\bm{z}_{t+1}|\bm{pa}(\bm{z}_{t+1})),\\
    \bm{A}^S =& \bm{A}^T
\end{split}
\end{equation}
in which $\bm{A}^S$ and $\bm{A}^T$ are the causal structures from the source and the target domain. 
\end{assumption}

\textcolor{black}{Based on the aforementioned assumption, the difference between our causal conditional shift and other distribution shift learning scenarios can be summarized as follows. First, our causal conditional shift is totally different from the existing scenarios. Specifically, our causal conditional shift takes the causal structures among covariate variables i.e., $\bm{x}$ into account, while the existing scenarios mainly focus on the relationships among the label $\bm{y}$, covariate variables $\bm{x}$ and domain variables $D$ but ignoring the relationships behind $\bm{x}$. Second, our causal conditional shift is more reasonable than the existing scenarios on the time series. In detail, due to the temporal dependence among the time-series data, even a small perturbation may result in a huge change in the high dimensional feature space $\bm{x}$ and further results in $p(\bm{y}|\bm{x})$ changes sharply. Our causal conditional shift solves this problem by exploring the stable causal structure behind $\bm{x}$, but the existing scenarios (without considering the structure behind $\bm{x}$) may face the challenge that how to describe the distribution shifts among domains. Thus, compared with existing learning scenarios, causal conditional shifts can characterize complex distribution shifts in a compact way and enables the development of effective models with better generalization performance.}

Based on the aforementioned \textbf{causal conditional shift assumption}, we can find that the key to semi-supervised domain adaptation for time-series forecasting is to leverage the domain-invariant causal structures to model the domain-specific conditional distribution. In summary, given the labeled source domain data and the limited labeled target domain data, our goal is to (1) discover the causal structures of time-series data from the source and the target domains; (2) forecast the future time-series data over the target distribution with the help of Granger causality.

However, the causal structures are usually changed by other domain-specific properties along with different domains\textcolor{black}{, e.g. different time-lags}, which further leads to the \textcolor{black}{violation} of the \textbf{causal conditional shift assumption}. \textcolor{black}{Recall the medicinal example again}, the responding time between ``Blood Glucose'' and ``Insulin'' in the elder is usually longer than that of the young, which leads to the different causal structures. In order to address this issue, \textcolor{black}{we not only need to extract the domain-invariant causal substructures among domains but also model the domain-specific components.}

Therefore, the objective function of the semi-supervised domain adaptation for time-series forecasting can be summarized as follow:
\begin{equation}
\label{equ:loss1}
\begin{split}
    \min&\;\mathbb{E}_{\bm{A}^S}P\left(\bm{y}^S|\bm{x}^S,\bm{A}^S\right) +\mathbb{E}_{\bm{A}^T}P\left(\bm{y}^T|\bm{x}^T,\bm{A}^T\right) \\& + Disc(\bm{A}^S,\bm{A}^T),
\end{split}
\end{equation}
where $Disc(\bm{A}^S,\bm{A}^T)$ is the \textcolor{black}{discrepancy} regularization term of the Granger-causal structures between the source and the target domains. \textcolor{black}{This regularization term can not only bridge the common causality from the source to the target but also allow the model to preserve the domain-specific parts.} 
Note that $\mathbb{E}_{\bm{A}^S}P\left(\bm{y}^S|\bm{x}^S,\bm{A}^S)\right)$ and $\mathbb{E}_{\bm{A}^T}P\left(\bm{y}^T|\bm{x}^T,\bm{A}^T)\right)$ can be considered as an autoregression of time-series data for different domains.

\textcolor{black}{According to Equation (\ref{equ:loss1}), to address the semi-supervised domain adaptation problem for time-series data we should address the following two challenges: 1) how to model time-series data with  causal structures, i.e., optimizing the expectation terms in Equation (\ref{equ:loss1}); 2) how to transfer the causal structures from the source to target domain in a unified framework, i.e., optimizing $Disc(\bm{A}^S,\bm{A}^T)$. As for the first challenge, we take the Granger Causal structures as latent variables and learn the distribution of time-series data via variational inference, which is illustrated in Section \ref{sec:vae}. As for the second challenge, we devise Granger Causality Alignment (GCA) model, which is illustrated in Section \ref{sec:imple}.}

\begin{figure}[t]
	\centering
	\includegraphics[width=0.9\columnwidth]{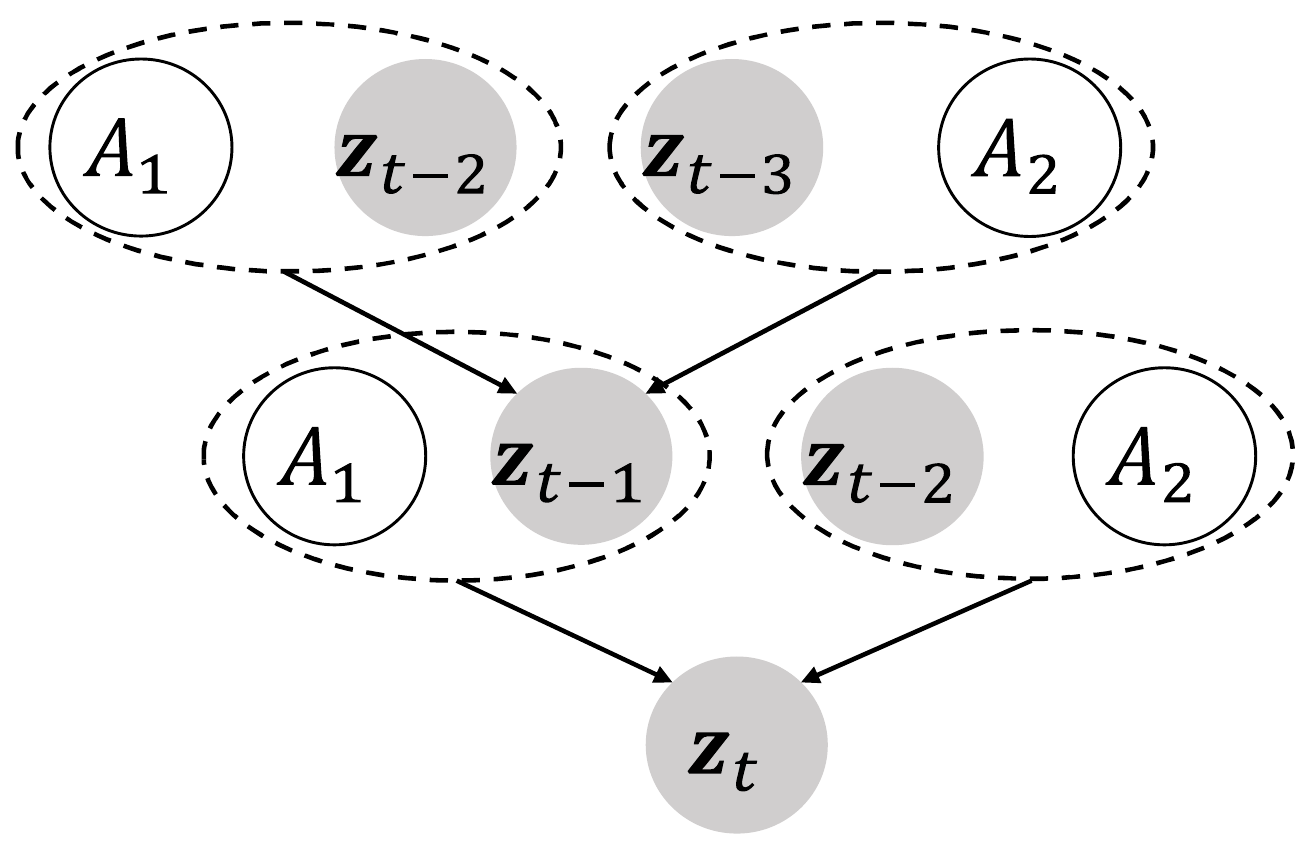}
	\caption{The illustration of how time-series data is iteratively generated via Granger-causal structures with a maximum lag of two. (The parent of $\bm{z}_{t-2}$ are omissive due to the limited space.) $A_1, A_2$ are the substructures of the Granger-causal structure with the lag of one and two. The Granger-causal structures are stable in all the timestamps. $\bm{z}$ is the observed data of different timestamps.}
	\label{fig:generation}
\end{figure}

\section{{Learning the Distribution of Time-Series Data with Granger Causality}}\label{sec:vae}


{To Simultaneously discover the Granger Causality and predict based on the Granger Causality,} we start with the graphical model of the causal generation process for time series data\textcolor{black}{, which is shown in Figure \ref{fig:generation}}. 
\textcolor{black}{First, we assume the causal structure $\bm{A}$ is composed of $k$ substructures with the maximum lag of $k$, which is shown as follows:}
\begin{equation}
    \bm{A}=(A_1, \cdots, A_j, \cdots, A_k),
\end{equation}
\textcolor{black}{in which the italic symbol $A_j$ denotes the local substructures with the lag of $j$. For convenient illustration, we let $k=2$ in Figure \ref{fig:generation}.}
Given the observed data $\bm{z}_t$, it is generated from the $\bm{z}_{t-1}$ and $\bm{z}_{t-2}$ under the substructure of Granger-causal graph whose lag is one and two (We assume that the maximum lag is two for convenient illustration.). Furthermore, $\bm{z}_{t-1}$ and $\bm{z}_{t-2}$ are recursively generated from the previously observed data under the same substructures of Granger Causality. 

\textcolor{black}{In light of the power of variational autoencoder (VAE) \cite{kingma2013auto,jang2016categorical} in reconstructing the latent variables, we consider the Granger-causal structures as the latent variables and iteratively reconstruct the Granger-causal structures with different lags as shown in Figure \ref{fig:rgc-vae}.}
\textcolor{black}{According to this causal generation process (the solid lines in Figure 3), since $z_t$ is the common effect of $A_1$, $z_{t-1},z_{t-2}$ and $A_2$, there are three ''V-Structures`` \footnote{\textcolor{black}{``V-structure'' \cite{pearl1998graphs} is an ordered triple of variables ($a,b,c$) such that (1) contains the arcs $a\rightarrow b$ and $b\leftarrow c$, and (2) $a$ and $c$ are not adjacent.}} (i.e., \textcolor{black}{$A_1 \rightarrow z_t \leftarrow A_2$}, $A_1\rightarrow z_t \leftarrow (z_{t-1},z_{t-2})$ and $A_2\rightarrow z_t \leftarrow (z_{t-1},z_{t-2})$), $A_1, A_2$ are independent of each other. As for the inference process (the dashed lines in Figure 3, the reconstruction of $A_1$ and $A_2$ can be separated into two steps. First, in the ``V-structure'' $A_1\rightarrow z_t \leftarrow (z_{t-1}, z_{t-2})$, we can reconstruct $A_1$ given $z_{t-1},z_{t-2}$ and $z_t$. Second, we can reconstruct $A_2$ given $A_1, \bm{z}_t$ and $(\bm{z}_{t-k}, \cdots,\bm{z}_{t-1})$} 


\textcolor{black}{Therefore, we can learn the distributions of time-series data by first reconstructing the latent causal structures and using them to forecast future values. Mathematically, we learn the distributions of the time-series data by modeling the logarithm of the probability density $lnP(\bm{z}_t|\bm{z}_{t-1}, \bm{z}_{t-2}, \cdots, \bm{z}_{t-k})$, which can be derived as shown in Theorem \ref{the:derive}.}

\begin{figure}[t]
	\centering
	\includegraphics[width=1.0\columnwidth]{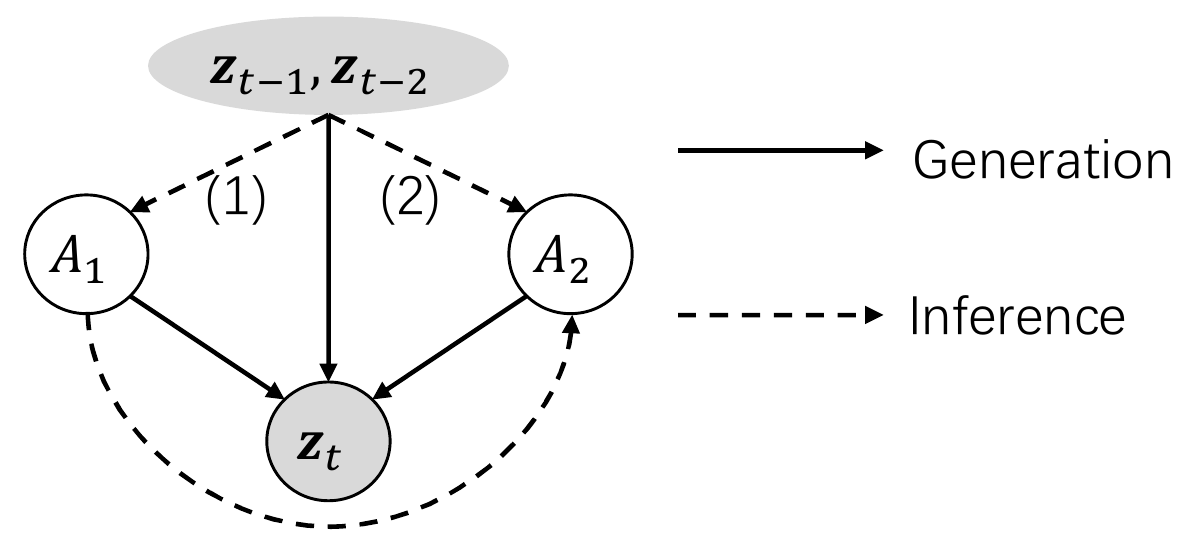}
	\caption{{
	The solid lines denote the generation process and the dashed lines denote the inference process. The inference process iteratively reconstructs the Granger-Causal structures. In this toy example, the inference process consists of two steps. Step (1) is used to infer $A_1$ and step (2) is used to infer $A_2$ based on $A_1$. The generative process forecasts future data with the help of historical data and reconstructed causal structures.
	}}
	\label{fig:rgc-vae}
\end{figure}

\begin{theorem}
\label{the:derive}
    \textcolor{black}{Suppose that the maximum lag is $k$, and then $\ln P(z_t|z_{t-1}, z_{t-2}, \cdots, z_{t-k})$ can be derived as follows:}
\begin{equation}
\small
\color{black}
\label{equ:variation}
\begin{split}
    lnP(&\bm{z}_t|\bm{z}_{t-1}, \bm{z}_{t-2}, \cdots, \bm{z}_{t-k})= \\& \mathbb{E}_{Q(A_1|\cdot)},\cdots\mathbb{E}_{Q(A_k|\cdot)}\left[\ln P(\bm{z}_t|\bm{z}_{t-1},\cdots,\bm{z}_{t-k},A_1,\cdots, A_k)\right. \\&+ \left.\ln \frac{P(A_1|\cdot)\prod_{j=2}^k P(A_j|\cdot)}{Q(A_1|\cdot)\prod_{j=2}^k Q(A_j|\cdot)}\right] + \sum_{j=1}^k{D_{KL}(Q(A_j|\cdot)||P(A_j|\cdot)}),
\end{split}
\end{equation}
\textcolor{black}{in which $P(A_j|\cdot)=P(A_j|A_1,\cdots,A_{j-1},\bm{z}_{t-1},\cdots,\bm{z}_{t-k})$ and $Q(A_j|\cdot) = Q(A_j|A_1,\cdots,A_{j-1},\bm{z}_{t-1},\cdots,\bm{z}_{t-k})$ denote the prior distribution and the approximated distribution, respectively.}
\end{theorem}

\textcolor{black}{The proof of Theorem \ref{the:derive} can be found in the Appendix A. This result tells us that we can model the logarithm of the probability density by modeling an expectation term and a Kullback–Leibler divergence term. However, optimizing the Kullback–Leibler divergence $\sum_{j=1}^k{D_{KL}(Q(A_j|\cdot)||P(A_j|\cdot)})$ is intractable. Fortunately, since the values of the Kullback–Leibler divergence are always greater than zero, we can model the logarithm of the probability density $\ln P(z_t|z_{t-1}, z_{t-2}, \cdots, z_{t-k})$ by maximizing the evidence lower bound (\textit{ELBO}), which is shown as Corollary \ref{cor1}.}

\begin{corollary}
\label{cor1}
    \textcolor{black}{Assuming that the maximum lag is $k$, we can model the logarithm of the probability density $\ln P(z_t|z_{t-1}, z_{t-2}, \cdots, z_{t-k})$ by maximizing the evidence lower bound (\textit{ELBO}) as shown as follows:}
\begin{equation}
\label{elbo}
\color{black}
\begin{split}
\mathcal{L}_{\textit{ELBO}}=&\mathbb{E}_{Q(A_1|\cdot)},\cdots\mathbb{E}_{Q(A_k|\cdot)}\ln P(\bm{z}_t|\bm{z}_{t-1}, \cdots, \bm{z}_{t-k},A_1, \cdots, A_k) \\& - D_{KL}(Q(A_1|\cdot)||P(A_1|\cdot)) \\&-\sum_{j=2}^k \mathbb{E}_{Q(A_1|\cdot)},\cdots\mathbb{E}_{Q(A_k|\cdot)} D_{KL}(Q(A_j|\cdot)||P(A_j|\cdot)),
\end{split}
\end{equation}
\end{corollary}
\textcolor{black}{The proof of Corollary \ref{cor1} can be found in the Appendix A. According to this corollary, we can use Granger Causal structures to learn the distribution of time-series data under the framework of variation inference. For easy comprehensibility, we let $\mathcal{L}_{\textit{ELBO}}^S$ and $\mathcal{L}_{\textit{ELBO}}^T$ be the ELBO for the source and target data, respectively.}

\section{\textcolor{black}{Algorithm and Implement}}\label{sec:imple}
\subsection{\textcolor{black}{Overview of the Granger Causality Alignment Model}}

\begin{figure*}[htbp]
\label{fig:model}
	\centering
	\includegraphics[width=2.0\columnwidth]{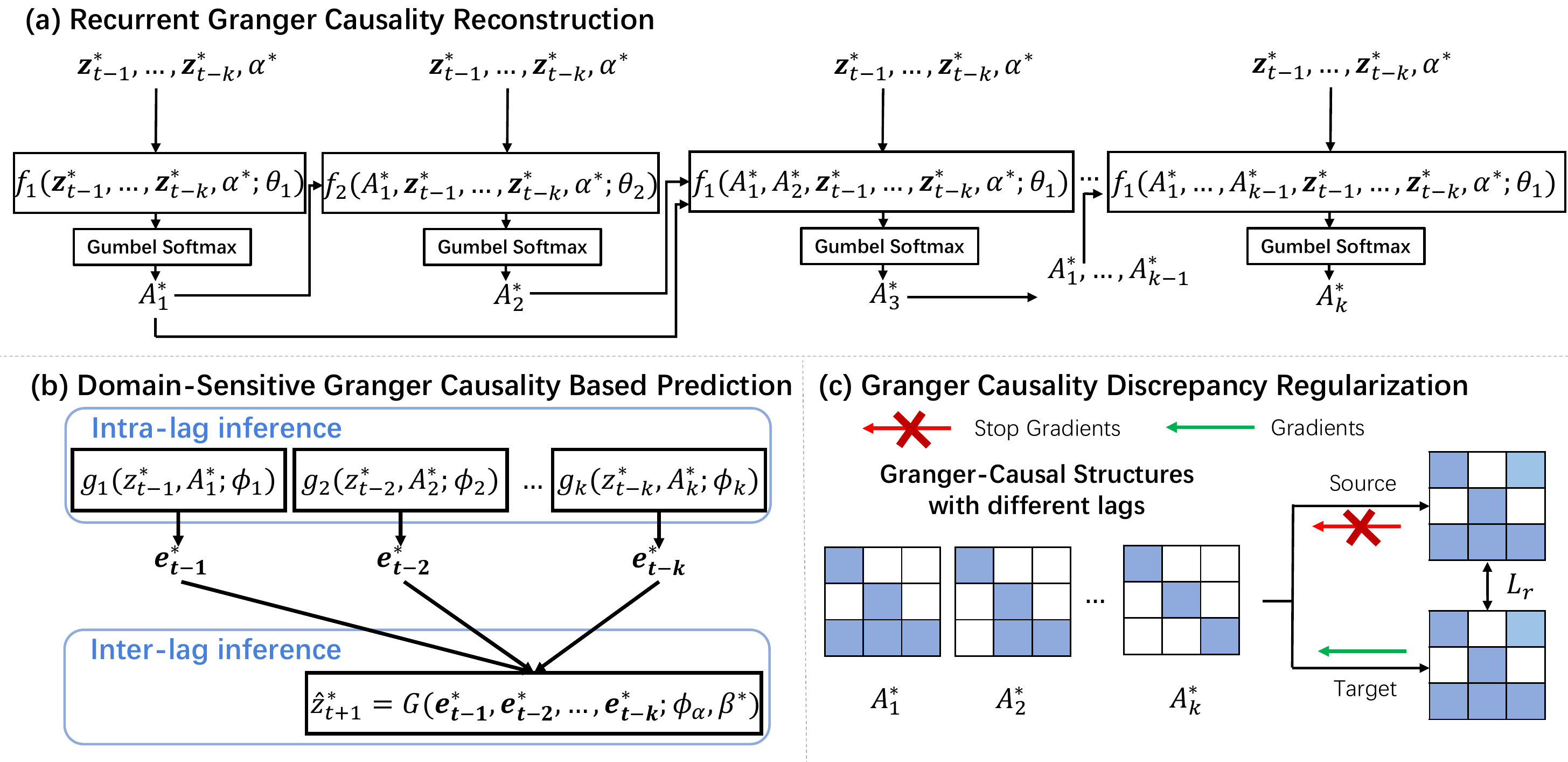}
	\caption{The framework of the proposed Granger Causality Alignment (GCA) model. (a) The Recurrent Granger Causality reconstruction module. $\alpha$ denotes the structured domain latent variables that are used to reconstruct the domain-specific Granger-Causal substructure. (b) The Domain-Sensitive Granger Causality Inference module aims to model how the conditional distribution changes across domains with the help of the learned Granger Causality. $\beta$ is the trainable domain-sensitive latent variables. (c) \textcolor{black}{The Granger Causality Discrepancy Regularization between the source and the target domain is used to bridge the domain-invariant Granger Causality from the source to the target domain.}}
	\label{fig:model}
\end{figure*}

\textcolor{black}{Based on the aforementioned theories, we aim to devise a model to address the semi-supervised time-series domain adaptation problem.}
\textcolor{black}{Since we can simultaneously estimate the causal structures from different domains and model $P(z_t|z_{t-1}, \cdots, z_{t-k}, A_1, \cdot, A_k)$ by maximizing $\mathcal{L}_{ELBO}^S$ and $\mathcal{L}_{ELBO}^T$, we can replace the negative of evidence lower bounds (i.e., $\mathcal{L}^S_{ELBO}$ and $\mathcal{L}^T_{ELBO}$) with $\mathbb{E}_{\bm{A}^S}P\left(\bm{y}^S|\bm{x}^S,\bm{A}^S\right) $ and $\mathbb{E}_{\bm{A}^T}P\left(\bm{y}^T|\bm{x}^T,\bm{A}^T\right)$ in Equation (\ref{equ:loss1}), which can be rewritten as follows:}
\begin{equation}
\color{black}
    \mathcal{L}_{total} = -\mathcal{L}_{ELBO}^S - \mathcal{L}_{ELBO}^T + Disc(\bm{A}^S, \bm{A}^T).
    \label{equ:9}
\end{equation}

\textcolor{black}{By optimizing Equation (\ref{equ:9}), we can reconstruct the sufficient structures that benefit the time-series forecasting. However, simply optimizing Equation (\ref{equ:9}) might bring some redundant structures which do not belong to the ground-truth data generation process and are usually influenced by different domains. If these redundant structures are taken into account, the learned model might not well generalize to the target domain. To solve this problem, we need to prefer the sparsity of the Granger Causal structure to remove redundant structures and preserve the necessary causal structures.}

\textcolor{black}{To achieve this, \textcolor{black}{we further include a sparsity-inducing penalty term $\mathcal{R}(\cdot)$ and hence} obtain the loss function based on the causal time-series generation process, which is shown as follows:}
\begin{equation}
\label{equ:loss2}
\color{black}
\small
\begin{split}
    \!\!\mathcal{L}_{total}\!&=\!-\mathcal{L}_{ELBO}^S \!-\! \mathcal{L}_{ELBO}^T\!+\! \gamma Disc(\!\bm{A}^S,\! \bm{A}^T\!) \!+\! \lambda(\mathcal{R}(\bm{A}^S)\!+\!\mathcal{R}(\bm{A}^T)),
\end{split}
\end{equation}
\textcolor{black}{in which $\mathcal{L}_{\text{ELBO}^S}$ and $\mathcal{L}_{\text{ELBO}^T}$ equal to evidence lower bound in Equation (\ref{elbo}), and we employ the superscript $S$ and $T$ to distinguish if the data used to optimize ELBO come from the source or the target domain.}
\textcolor{black}{
Note that $\gamma$ is a hyper-parameter for the causal structure discrepancy regularization and $\lambda$  is a hyper-parameter for the Granger causal structure sparsity regularization, respectively}; 
\textcolor{black}{$Q(A_j^*|\cdot)$ 
\footnote{\textcolor{black}{we use superscript $*$ to denote source or target domains.}}
denotes the approximated distribution and $P(A_j^*|\cdot)$ denotes the prior distribution; the conditional distribution $P\left(\bm{z}_t^*|\bm{z}^*_{t-1},\cdots,\bm{z}^*_{t-k},A_1^*,\cdots,A_k^*\right)$ work as the generative process (as shown in Figure \ref{fig:rgc-vae}) and are used to predict the future values with the help of Granger causality; }\textcolor{black}{We also employ $\mathcal{R}(\cdot)$ as the sparsity-inducing penalty term, i.e., the elastic-net-style penalty \cite{nicholson2017varx} for $\mathcal{R}(\cdot)$ for better Granger causal structures, which can be defined as:}
\begin{equation}
\color{black}
\begin{split}
    \mathcal{R}(\bm{A}^*)=\frac{1}{2}||\bm{A}^*||_1 + \frac{1}{2}||\bm{A}^*||_2,
\end{split}
\end{equation}
\textcolor{black}{in which $||\cdot||_1$ and $||\cdot||_2$ respectively denote the L1 and L2 Norm. Note that \textcolor{black}{L1 Norm of a matrix is calculated by $||A||_1=\sum\limits_{uv}|A_{uv}|$ , where $A_{uv}$ is the $u$-th row $v$-th column element in matrix $A$. } And L2 Norm of a matrix is calculated by $\displaystyle  ||A^*||_2=\sqrt{ \sum\limits_{uv}(A_{uv}^*)^2}$, where $A_{uv}^* $ denotes the $u$-th row $v$-column element in matrix $A^*$.}

\textcolor{black}{Based on the objective function as shown in Equation (\ref{equ:loss2}), we devise the Granger Causality Alignment (GCA) model for semi-supervised domain adaptation for time-series data. The framework of the proposed GCA model is shown in Figure \ref{fig:model}. }
\textcolor{black}{As shown in Figure \ref{fig:model} (a), the \textbf{Recurrent Granger Causality Reconstruction Module} (Subsection \ref{sec:encoder}) is used to reconstruct the Granger-causal structures with different lags with the observed time-series data from the source and the target domains, which is used to estimate the distribution of latent variables $Q(A_j^*|\cdot)$. Sequentially, the \textbf{Domain-Sensitive Granger Causality based Prediction Module} (Subsection \ref{sec:decoder}) as shown in Figure \ref{fig:model} (b) learns how the condition distribution $P(z_t^*|z^*_{t-1}, \cdots, z^*_{t-k}, A_1^*, \cdots,A_k^*)$ changes across different domains with the help of Granger-causal structures and the historical observed data. Since the causal structures are stable across domains, we introduce the implementation of the \textbf{Granger Causality discrepancy regularization} (Subsection \ref{sec:reg}) as shown in Figure \ref{fig:model} (c) to restrict the discrepancy of Granger Causality between the source and the target domain. Finally, we introduce the training and inference process of the proposed method (subsection \ref{sec:summary}). The implementation details will be introduced in the following subsections. The source code of the proposed GCA method can be found at \href{https://github.com/DMIRLAB-Group/GCA}{\textcolor{black}{https://github.com/DMIRLAB-Group/GCA}}
.}


\subsection{Recurrent Granger Causality Reconstruction Module} \label{sec:encoder}


\textcolor{black}{In this subsection, we provide the details of the Recurrent Granger Causality Reconstruction, which is illustrated in Figure \ref{fig:model} (a). The Recurrent Granger Causality Reconstruction is used to model the approximated conditional distribution $Q(A_j^*|A_1^*, \cdots, A_{j-1}^*, z_{t-1}^*, \cdots, z_{t-k}^*)$ in Equation (\ref{elbo}), from which we can sample the causal structures. Note that Granger causal structures with different lags are independent of each other. But we use $(A_1^*, \cdots, A_{j-1}^*)$ as the conditions when estimating $A_j^*$, which benefits obtaining the sparse and precise causal structures. If we estimate the causal structures without other structures like $P(A_j^*|\bm{z}_{t-1}^*,\cdots,\bm{z}_{t-k})^*$, some overlapped and redundant edges might be brought in. }


\textcolor{black}{Technically, for $Q(A_j^*|A_1^*,\cdots, A_{j-1}^*,\bm{z}^*_1,\cdots,\bm{z}^*_{t-k})$, we implement it with the MLPs and the categorical reparameterization trick \cite{jang2016categorical}, where we let $A_j^*$ be the substructure with the lag of $j$ from the source or the target domains. In detail, we use Equation (\ref{encoder_imp_1}) as the universal approximator of $Q(A_j^*|A_1^*,\cdots,A_{j-1}^*,\bm{z}_1^*,\cdots,\bm{z}_{t-k}^*)$ to generate $A_j^*$ from the historical data. Consequently, we implement $Q(A_j^*|A_1^*,\cdots,A_{j-1}^*,\bm{z}_1^*,\cdots,\bm{z}_{t-k})^*$ in the following functional form:}
\begin{equation}\label{encoder_imp_1}
\color{black}
\begin{split}
     \hat{A_j}^*=f_j(A_1^*,&\cdots ,A^*_{j-1},\bm{z}^*_{t-1},\cdots,\bm{z}^*_{t-k};\theta_j),\\
     A_j^* &= Softmax((\hat{A^*_j}+\epsilon_g)/\tau),
\end{split}
\end{equation}
\textcolor{black}{in which $\hat{A}_j^*$ denotes the parameters of the categorical distribution; $\theta_j$ are the trainable parameters of MLP $f_j$; $\epsilon_g$ denotes the Gumbel distributed noise; and $\tau$ denotes the temperature hyperparameter.} 
\textcolor{black}{According to Equation (\ref{encoder_imp_1}), it is interesting to find that the process of reconstructing the Granger-causal substructures with different lags is in a recurrent form. Therefore, we follow the iterative property and first reconstruct  $A_1$, the Granger-causal structures whose lag is $1$ then we reconstruct $A_2$ of the Granger-causal structures whose lag is 2 since $Q_2=Q(A_2|A_1,\bm{z}_{t-1},\cdots,\bm{z}_{t-k})$ is based on $A_1$, and so on so forth. By parity of reasoning, the other substructures with larger lags follow the same rule.}

\textcolor{black}{Besides, since the causal structures may vary with different domains due to domain-specific factors like time lags, it is necessary to explicitly consider these factors.} For example, the responding time between ``Blood Glucose'' and ``Insulin'' of the elder is longer than that of the young, which leads to the different time lags of the causal relationship between ``Blood Glucose'' and ``Insulin''. 
One straightforward solution is to make use of two separate transformation functions for the source and the target domains. However, since the labeled target domain data is limited, it tends to result in suboptimal performance on the target domain. In order to address this problem, we further introduce the trainable structural domain latent variables $\alpha^S, \alpha^T \in \mathbb{R}^{d_{\alpha}}$ to separately assist to model the domain-specific module of the Granger Causality, \textcolor{black}{where $d_{\alpha}$ is the dimension of the trainable structural domain latent variables and the dimension of $\alpha^S$ and $\alpha^T$ are same. }Therefore, we extend Equation (\ref{encoder_imp_1}) to Equation (\ref{encoder_imp_2}) as follows:
\begin{equation}\label{encoder_imp_2} 
\color{black}
\begin{split}
     \hat{A_j}^*=f_j(A_1^*,&\cdots ,A^*_{j-1},\bm{z}^*_{t-1},\cdots,\bm{z}^*_{t-k},\alpha^*;\theta_j),\\
     A_j^* &= Softmax((\hat{A^*_j}+\epsilon_g)/\tau),
\end{split}
\end{equation}
\textcolor{black}{in which $f_j$ is shared across different domains and $\alpha^*$ denotes $\alpha^S$ or $\alpha^T$}. For convenience, We further let $\Theta=(\theta_1, \cdots, \theta_j, \alpha^*)$ be all the trainable parameters of the recurrent Granger Causality reconstruction module.

\subsection{Domain-sensitive Granger Causality based Prediction Module}\label{sec:decoder}
\textcolor{black}{In this subsection, we provide the details of the Domain-sensitive Granger-Causality-based Prediction Module, which is used to model $P(z_t^*|z_{t-1}^*, \cdots, z_{t-k}^*, A_1^*,\cdots, A_k^*)$. As shown in Figure \ref{fig:model} (b), the Granger-Causality-based prediction process can be separated into two steps: the intra-lag inference step and the inter-lag inference step.}

\textcolor{black}{In the intra-lag inference step, we aim to calculate the effectiveness of the substructures of each time lags, \textcolor{black}{i.e., the contributions of each substructure $A_j^*$ to the final prediction}. In detail, given the $j$-th lag Granger-causal structure $A_j^*$, we first use $A_j^*$ to mask the observed input $\bm{z}_{t-j}^*$, e.g., $\bm{z}_{t-j}^*\odot A_j^*$. \textcolor{black}{Then we use the multilayer perceptron (MLP) in the form of  $g_j(\bm{z}_{t-j}^*, A_j^*;\phi_j)$ to calculate $e_{t-j}^*$ for the source and target domain respectively, note that $e_{t-j}^*$ denotes how $z_{t-j}^*$ contributes to the prediction of $z^*_{t+1}$ by the granger causal structure $A^*_j$ via function $g_j(\cdot)$, i.e., $e_j^*=g_j(z_{t-j}^*,A_j^*;\phi_j)$.}}


In the inter-lag step, we aim to aggregate the effectiveness of all the time lags and predict future data. In detail, we simply use another MLP-based architecture in the form of $G(e_{t-1}^*, e_{t-2}^*, \cdots, e_{t-k}^*;\phi_a)$ to predict the future value and $\phi_a$ are the trainable parameters. 

Besides the domain-specific time lags, the strengths of the causal structures might be different even if the source and the target domains share the same causal structures. 
In order to cope with these differences, we further introduce another trainable domain-sensitive latent variable $\beta^S,\beta^T \in \mathbb{R}^{d_\beta}$ to model the different strengths of the causal structures, \textcolor{black}{in which $d_\beta$ is the dimension of the trainable domain-sensitive latent variables and the dimension of $\beta^S$ and $\beta^T$ are same.} Therefore, we can formulate the domain-sensitive Granger-Causality-based prediction as follows:
\begin{equation}
\color{black}
\begin{split}
    \widehat{\bm{z}}_{t+1}^* = G\left(g_1\left(z_1^*,A_1^*;\phi_1\right),\cdots,\right.g_j\left(z_j^*,A_j^*;\phi_j\right),\cdots,\\\left.g_k\left(z_k^*,A_k^*;\phi_k\right);\phi_a,\beta^*\right),
\end{split}
\end{equation}
where $\widehat{\bm{z}}_{t+1}^*$ is the predicted result of source or target domain. For convenience, we let $\Phi=(\phi_a,\phi_1, \cdots, \phi_k,\beta^*)$ be all the trainable parameters in the domain-sensitive Granger-Causality-based prediction module. 

\textcolor{black}{
\subsection{Granger Causality Discrepancy Regularization}\label{sec:reg}
\subsubsection{\textcolor{black}{A Relaxed variant for the Causal Conditional Shift Assumption}}}
\textcolor{black}{According to Assumption \ref{ass:csa}, we assume that the causal structures are fixed across different domains. In fact, this assumption may be too strong to some extent, the causal structures might vary with different domains.
If we directly forecast the target values with the help of the source causal structures, we may end up with suboptimal performance. Therefore, we relax the Causal Conditional Shift assumption and assume that the causal structures from different domains are slightly different. Based on this relaxed assumption, we propose a regularization term, which can leverage the labeled source data to learn the domain-invariant components and exploit the limited labeled target data to learn the domain-specific components. There are different ways to achieve this goal, and we employ L1 Norm in our implementation. Formally, we have: 
\begin{equation}\label{summary_simi2}
\color{black}
\small
    Disc(\bm{A}^S, \bm{A}^T) = \frac{1}{D \times D \times k}\sum_{j=1}^k||A_j^S - A_j^T||_1,
\end{equation}
where $A_j^S$ and $A_j^T$ are the structures with lag $j$ and $||\cdot||_1$ denotes the L1 Norm. 
}

\textcolor{black}{
\subsubsection{Causal Structure Discrepancy Regularization with Gradient Stopping}}


\textcolor{black}{
In the ideal case, the source structures are learned from the source labeled data while the target structures are optimized according to the regularization term shown in Equation (\ref{summary_simi2}). However, straightforwardly using L1 Norm can only make the causal structures from different domains become similar, but it can not guarantee that the model can learn correct components and may learn degenerated causal structures. In the worst case, the target structure solely learned on the insufficient labeled data is wrong. In the meanwhile, the source structures are pushed by the regularization term to become similar to the wrong target structures, which results in the negative transfer and finally the suboptimal results. 
}

\textcolor{black}{
In order to address the aforementioned issue, we further apply the gradient stopping operation $C(\cdot)$ \cite{valvano2021self,metz2016unrolled} on the source domain causal structures (as shown in Figure \ref{fig:model} (c)), which is devised to stop the gradients of regularization term propagating through the source structures estimation module. Therefore, it is confident to learn the correct source causal structures with the massive source labeled data and without obstruction from the target causal structures. Furthermore, the target causal structures are guided by the discrepancy term and the well-trained source causal structures. In this way, we let the source structures be an anchor and push the target structures to be close to the source ones, so we can avoid the negative transfer and achieve the ideal case.
}


\textcolor{black}{Therefore, we combine the gradient stopping operation with Equation (\ref{summary_simi2}), the Granger Causality discrepancy regularization term can be formulated as follows:
\begin{equation}\label{summary_simi}
\color{black}
    Disc(\bm{A}^S, \bm{A}^T) = \frac{1}{D \times D \times k}\sum_{j=1}^k||C(A_j^S) - A_j^T||_1,
\end{equation}
where $C(\cdot)$ is used to stop the gradients from propagating to the source causal structure estimator.\\} 

\subsection{Training and Inference}\label{sec:summary}
By discovering the Granger Causality and further predicting the future values with the help of the Granger Causality, we summarize the model as follows.

During the training steps, we take the labeled source data $(\bm{x}^S, \bm{y}^S)\in (\mathcal{X}_S, \mathcal{Y}_S)$ and the few labeled data from the target domain $(\bm{x}^T, \bm{y}^T)\in (\mathcal{X}_T, \mathcal{Y}_T)$ into the proposed method and predict the future values of the next timestamps. In the testing steps, we need to predict the future $\tau$ stamps, so we use the previous predicted value as the new input and obtain $\bm{z}_{t+1}^*,\cdots,\bm{z}_{t+\tau}$ in an autoregressive way.

\textcolor{black}{If the dimension of $\bm{y}$ equals that of $\bm{x}$ (e.g. the case of human motion prediction), the proposed \textbf{GCA} method needs to predict the multivariate time-series data. If the predicted results $\bm{y}$ is the univariate time-series (e.g. PM2.5 Prediction), it is not necessary to take all the predicted time-series into the loss function. In this case, we can directly use Equation (\ref{equ:loss2}) to optimize the proposed method.} 

\textcolor{black}{If the predicted result is the univariate time-series $\bm{z}^d$ and we need to estimate the causal structure $\bm{A}$ for interpretability, we need to take all the predicted time-series into the loss function. However, the performance will easily suffer from degeneration, because the model might optimize other variables that are unrelated to $\bm{y}$. In order to address this issue, we further devise an extra strengthen loss. We formulate the extra optimization term below:}
\begin{equation}
    \mathcal{L}_e = MSE(\widehat{\bm{z}}^d_{t+1}, \bm{z}^d_{t+1}), 
\end{equation}
in which $\widehat{\bm{z}}^d_{t+1}$ is the predicted time-series and $\bm{z}^d_{t+1}$ is the groud truth value; And $MSE$ denotes the mean square error. \textcolor{black}{We will investigate the effectiveness of the extra optimization term $\mathcal{L}_e$ in the ablation studies.} 


Under the above objective functions, the total loss of the proposed GCA method can be summarized as the following equation:
\begin{equation}
\label{equ:loss_finall}
\small
\color{black}
\begin{split}
    &\!\!\mathcal{L}_{total}\!\\=&\!-\mathcal{L}_{\text{ELBO}}^S \!-\! \mathcal{L}_{\text{ELBO}}^T\!+\! \gamma Disc(\!\bm{A}^S,\! \bm{A}^T\!) \!+\! \lambda(\mathcal{R}(\bm{A}^S)\!+\!\mathcal{R}(\bm{A}^T))  + \delta \mathcal{L}_e \\=&-\mathbb{E}_{Q(A_1^S|\cdot)},\cdots\mathbb{E}_{Q(A_k^S|\cdot)}\left[\ln P\!\left(\bm{z}_t^S|\bm{z}^S_{t-1},\!\cdots\!,\bm{z}^S_{t-k},A_1^S,\!\cdots\!,A_k^S\right)\right] \\&+ D_{KL}(Q(A_1^S|\cdot)||P(A^S_1|\cdot)) \\&+ \sum_{j=2}^k \mathbb{E}_{Q(A^S_1|\cdot)},\cdots\mathbb{E}_{Q(A^S_k|\cdot)} D_{KL}(Q(A^S_j|\cdot)||P(A^S_j|\cdot)) \\&-\mathbb{E}_{Q(A_1^T|\cdot)},\cdots\mathbb{E}_{Q(A^T_k|\cdot)}\left[\ln P\!\left(\bm{z}^T_t|\bm{z}^T_{t-1},\!\cdots\!,\bm{z}^T_{t-k},A_1^T,\!\cdots\!,A_k^T\right)\right] \\&+ D_{KL}(Q(A^T_1|\cdot)||P(A^T_1|\cdot)) \\&+ \sum_{j=2}^k \mathbb{E}_{Q(A^T_1|\cdot)},\cdots\mathbb{E}_{Q(A^T_k|\cdot)} D_{KL}(Q(A^T_j|\cdot)||P(A^T_j|\cdot))\\&+ \gamma\sum_{j=1}^{k}\mathbb{E}_{Q(A_j^S|\cdot)}\mathbb{E}_{Q_j(A_j^T|\cdot)}Disc\left(A_j^S,A_j^T\right)\\&+ 
    \lambda\left(\sum_{j=1}^{k}\mathbb{E}_{Q(A_j^S|\cdot)}\mathcal{R}(A^S_j) + \sum_{j=1}^{k}\mathbb{E}_{Q(A_j^T|\cdot)}\mathcal{R}(A^T_j)\right) + \delta \mathcal{L}_e,
\end{split}
\end{equation}
in which $\gamma, \delta$ and $\lambda$ are the hyper-parameters. 

In summary, the proposed model is trained on the labeled source and target domain data using the following procedure:
\begin{equation} 
    (\hat{\Theta}, \hat{\Phi}, )=\mathop{\arg\min}\limits_{\Theta, \Phi} \mathcal{L}_{total}.
\end{equation}


\section{Theoretical Analysis}\label{theorm}
In this section, we discuss the theoretical insights and establish the generalization bound, with respect to the discrepancy of causal structures between the source and the target domain.

There are several works about the generalization theory of domain adaptation \cite{mohri2018foundations,zhang2019bridging,mansour2009domain,cortes2011domain,ben2007analysis}, but many of them require that the loss function is bounded by a positive value. In this paper, we aim to establish the generalization bound that bridges the causal structures and the error risk. We start with the relationship between the distance of the source and the target causal structures, as well as the error risks. As we focus on the time-series forecasting task, we let the loss function be the following form: $L: \bm{y} \times \bm{y}\rightarrow \mathbb{R}_+$, and it can be MSE, MAPE and so on. 

We let $\mathcal{H}$ be a hypothesis space that maps $(\bm{z}, \bm{A})$ to $\mathbb{R}$, and $h \in \mathcal{H}, h: (\bm{z}, \bm{A}) \rightarrow \mathbb{R}$ denotes a function take a time-series sample $\bm{z}$ and a causal structures $\bm{A}$ as input. We further let $h^S: (\bm{z}, \bm{A}^S) \rightarrow \mathbb{R}$ and $h^T: (\bm{z}, \bm{A}^T) \rightarrow \mathbb{R}$ be the function that respectively takes the source and target causal structures as input, and $h^S, h^T \in \mathcal{H}$.
Therefore, we can make the following assumption:
\begin{assumption}\textbf{Causal Structures Discrepancy Bound Assumption}: Given any two hypotheses $h^S, h^T$ and the causal structures $\bm{A}^S,\bm{A}^T$ from the source and target domain, a positive value $K$ exits that makes the following two inequalities hold:
\begin{equation}
\begin{split}
    \mathcal{L}^S(h^S,h^T)=\mathbb{E}_{\bm{x}\sim P^S(\bm{x})}L(h^S,h^T)\leq K ||\bm{A}^S-\bm{A}^T||_1,\\
    \mathcal{L}^T(h^S,h^T)=\mathbb{E}_{\bm{x}\sim P^T(\bm{x})}L(h^S,h^T)\leq K ||\bm{A}^S-\bm{A}^T||_1.
\end{split}
\end{equation}
\end{assumption}

This assumption implies that for any samples from any domains, the expectation of $L(h^S,h^T)$ between the $h^S$ and $h^T$ are bounded by the similarity between $\bm{A}^S$ and $\bm{A}^T$ over any domains. This assumption is reasonable when the loss function $L$ is local smoothness.

Before introducing the generalization bound of the proposed method, we first give the definition of \textit{Rademacher Complexity} and \textit{Rademacher Complexity Regression Bounds}, which are shown as follows:
\begin{definition}
\textbf{Rademacher Complexity.} \cite{mansour2009domain} Let $\mathcal{H}$ be the set of real-value functions defined over a set $X$. Given a dataset $\mathcal{X}$ whose size is $m$, the empirical Rademacher Complexity of $\mathcal{H}$ is defined as follows:
\begin{equation}
\widehat{\mathfrak{R}}_{\mathcal{X}}(\mathcal{H})=\frac{2}{m}\mathbb{E}_{\sigma}\left[\sup_{h\in \mathcal{H}}\left.|\sum_{i=1}^{m}\sigma_i h(\bm{x}_i)|\right|\mathcal{X}=(\bm{x}_1, \cdots,\bm{x}_m)\right],
\end{equation}
in which $\sigma_=(\sigma_1, \cdots, \sigma_m)$ are independent uniform random variables taking values in $\{-1, +1\}$. And the \textit{Rademacher Complexity} of a hypothesis set $\mathcal{H}$ is defined as the expectation of $\mathfrak{R}_{\mathcal{X}}(\mathcal{H})$ over all the dataset of size $m$, which is shown as follows:
\begin{equation}
    \mathfrak{R}_{m}(\mathcal{H})=\mathbb{E}_\mathcal{\zeta}\left[\left.\widehat{\mathfrak{R}}_{\mathcal{X}}(\mathcal{H})\right|\left|\mathcal{\zeta}\right|=m\right].
\end{equation}
\end{definition}

\begin{lemma}\label{def:rade_bound}
\textbf{(Rademacher Complexity Regression Bounds)} \cite{mansour2009domain} Let $L: \bm{y} \times \bm{y} \rightarrow \mathbb{R}_+$ be a non-negative loss upper bounded by $M>0$ ($L(\bm{y}, \bm{y'})\leq M$ for all $\bm{y},\bm{y'} \in \mathcal{Y}$) and we denote by $\eta^{\mathcal{D}}: \mathcal{X}\rightarrow\mathcal{Y}$ the labeling function of domain $\mathcal{D}$. Given any fixed $\bm{y}' \in \mathcal{Y}, \bm{y}\rightarrow L(\bm{y},\bm{y}')$ is $\mu-$Lipschitz for some $\mu>0$, and the \textit{Redemacher Complexity Regression Bounds} are shown as follows:
\begin{equation}
\small
\begin{split}
        \mathcal{L}^{\mathcal{D}}(h^{\mathcal{D}}, \eta^{\mathcal{D}})\leq&\frac{1}{m^\mathcal{D}}\sum_{i=1}^{m^\mathcal{D}} L(h^{\mathcal{D}}(\bm{x}_i^{\mathcal{D}}), \bm{y}_i^{\mathcal{D}}) +  2\mu\mathfrak{R}_{m^{\mathcal{D}}}(\mathcal{H})\\&+M\sqrt{\frac{\log\frac{2}{\sigma}}{2m^{\mathcal{D}}}},
\end{split}
\end{equation}
in which $m^{\mathcal{D}}$ is the dataset of domain $\mathcal{D}$ with the size of $m^{\mathcal{D}}$.
\end{lemma}

Based on the aforementioned definition and assumption, we propose the generalization bound of \textbf{GCA}, which is shown as follows.
\begin{theorem}
We first let $\eta^S$ and $\eta^T$ be the source and target labeling function. We further let $h^*$ be the ideal predictor that simultaneously obtains the minimal error on the source and the target domain. Formally, we have:
\begin{equation}
h^*\triangleq\mathop{\arg\min}_{h^S,h^T\in\mathcal{H}}\{\mathcal{L}^{\mathcal{S}}(h^S,\eta^S)+\mathcal{L}^{\mathcal{T}}(h^T,\eta^T)\}.
\end{equation}
Then we can obtain the following inequation for any $h^S, h^T \in \mathcal{H}$:
\begin{equation}
\small
\begin{split}
    \mathcal{L}^T(h^T, \eta^T)\leq & \frac{1}{m^S}\sum^{m^S}_{i=1}L\left(h^S(\bm{x}_i^S),\bm{y}^S_i\right) + 2\mu\mathfrak{R}_{m^S}\left(\mathcal{H}\right) \\&+3M\sqrt{\frac{\log\frac{2}{\sigma}}{2m^S}} + \frac{1}{m^T}\sum^{m^T}_{i=1}L\left(h^T(\bm{x}_i^T),\bm{y}^T_i\right) \\&+ 2\mu\mathfrak{R}_{m^T}\left(\mathcal{H}\right) +3M\sqrt{\frac{\log\frac{2}{\sigma}}{2m^T}} \\&+ 3K|\bm{A}^S-\bm{A}^T| - \mathcal{L}^S\left(h^*,\eta^S\right),
\end{split}
\end{equation}
where $\mathcal{L}^S\left(\eta^S,h^*\right)$ is independent of $h^T$ or $h^S$ and can be considered as a constant; and $m^S$ and $m^T$ are the data size of the source and target domain training set.
\begin{proof}
\begin{equation}
\small
\begin{split}
    &\mathcal{L}^T\left(h^T,\eta^T\right) = \mathcal{L}^T\left(h^T, \eta^T\right) + \mathcal{L}^S\left(h^S, \eta^S\right) - \mathcal{L}^S\left(h^S, \eta^S\right) \\\leq& \mathcal{L}^S\left(h^S, \eta^S\right) + \mathcal{L}^T(h^T, h^S) + \mathcal{L}^T\left(h^S, \eta^T\right) + \mathcal{L}^S\left(h^S, h^T\right) \\&-\mathcal{L}^S\left(h^T, \eta^S\right)\\\leq&2K|\bm{A}^S-\bm{A}^T| + \mathcal{L}^S\left(h^S,\eta^S\right)+\mathcal{L}^T\left(h^S,\eta^T\right)-\mathcal{L}^S\left(h^*, \eta^S\right) \\\leq& 2K|\bm{A}^S-\bm{A}^T| + \mathcal{L}^S\left(h^S,\eta^S\right) + \mathcal{L}^T\left(h^S,h^T\right) \\&+ \mathcal{L}^T\left(h^T,\eta^T\right)-\mathcal{L}^S\left(h^*,\eta^S\right) \\\leq& 
    3K|\bm{A}^S-\bm{A}^T| + \mathcal{L}^S\left(h^S,\eta^S\right) + \mathcal{L}^T\left(h^T,\eta^T\right) \\&- \mathcal{L}^S\left(h^*,\eta^S\right) \\\leq&  \frac{1}{m^S}\sum^{m^S}_{i=1}L\left(h^S(\bm{x}_i^S),\bm{y}^S_i\right) + 2\mu\mathfrak{R}_{m^S}\left(\mathcal{H}\right) +3M\sqrt{\frac{\log\frac{2}{\sigma}}{2m^S}} \\&+ \frac{1}{m^T}\sum^{m^T}_{i=1}L\left(h^T(\bm{x}_i^T),\bm{y}^T_i\right) + 2\mu\mathfrak{R}_{m^T}\left(\mathcal{H}\right) +3M\sqrt{\frac{\log\frac{2}{\sigma}}{2m^T}} \\&+ 3K|\bm{A}^S-\bm{A}^T| - \mathcal{L}^S\left(h^*,\eta^S\right)
\end{split}
\end{equation}
\end{proof}
\end{theorem}
According to the aforementioned generalization bound, we can find that the expected risk of $h^T$ is not only controlled by the empirical loss on the labeled source and target data, but also by the discrepancy between the source and target domain structures.

\section{Experiment}\label{exp}
\subsection{Datasets}
In this section, we give a brief introduction to the dataset and the preprocessing steps. For each dataset, we split it into the training set, validation set, and test set. For all the methods, we run five different random seeds and report the mean and variance. We choose the model with the best validation and evaluate the chosen model on the test set. For all the datasets, we randomly draw 5\% labeled data from the target distribution for training.

\subsubsection{Simulation Datasets}
In this part, we design a series of controlled experiments on the random causal structures with a given sample size and variable size. we simulate three different datasets as three domains with the following nonlinear function.
\begin{equation}
\begin{split}
\bm{z}_t=&A_1\cdot\left(\bm{z}_{t-1}+c\cdot Sin\left(\bm{z}_{t-1}\right)\right) +\cdots\\& +A_k\cdot\left(\bm{z}_{t-k}+c\cdot Sin\left(\bm{z}_{t-k}\right)\right) + \epsilon ,
\end{split}
\end{equation}
in which $c$ is the constant of the nonlinear term; $\epsilon$ is the variance of Gaussian distributions; $A_j$ is the substructure of causal structure with $j$ lag. \textcolor{black}{For each domain, the causal structures are similar with only a little difference.} We further employ different sample intervals for different domains. The details of the dataset are shown in Table 1. And Figure 5 shows an example of the simulated data from different domains. Since the value range of this dataset varies with different cities, we use Z-score Normalization to respectively preprocess all the datasets for each domain.

With the simulated datasets, we can use simulated datasets for both time-series forecasting and cross-domain Granger-causal discovery tasks, since we can achieve the ground truth Granger-causal structures. In this paper, we demonstrate the reasonability of the proposed GCA method and the advantages  of the well-reconstructed Granger Causality.

\begin{table}[t]
\caption{Different settings for the dataset of different domains.}
\centering
\begin{tabular}{c|ccc}
		\toprule
		\small{Domain}  & Variance of Noise & Sample interval &c\\
		\midrule
		\small{Domain 1}       	           & 1  & 1 & 0.02\\ 
		\small{Domain 2}              	   & 5 & 2 & 0.04\\
		\small{Domain 3}            	   & 10 & 3 & 0.06\\
		\bottomrule
\end{tabular}
\label{tab:params}
\end{table}

\begin{figure}[t]
	\centering
\includegraphics[width=\columnwidth]{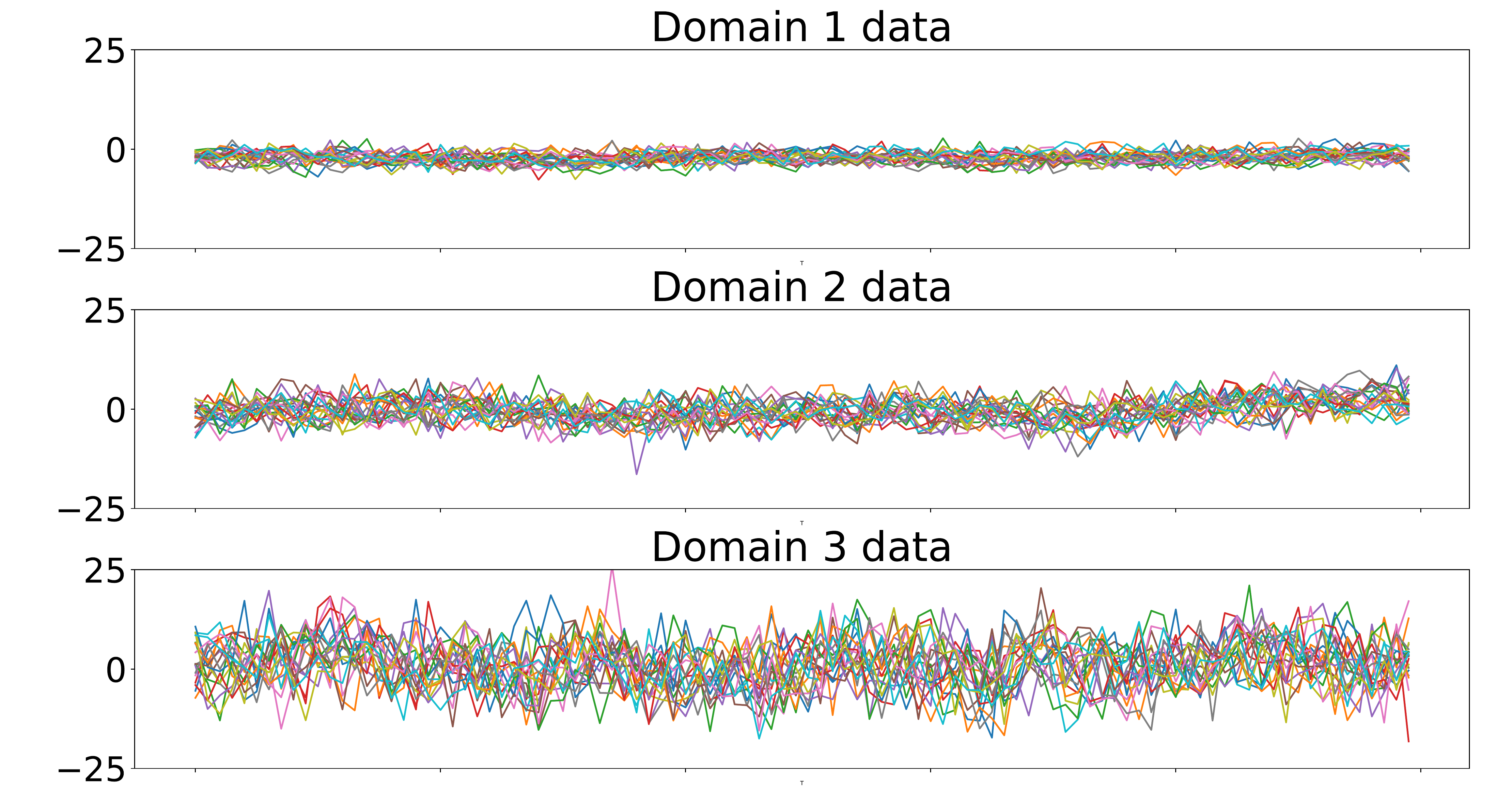}
	\caption{The Samples of three domains, we can find that the distributions are totally different, but they share a similar causal graph.}
	\label{fig:data}
\end{figure}

\subsection{Realworld Datasets}
\textcolor{black}{Besides the simulation dataset, we further evaluate the proposed GCA model on 6 real-world datasets: Air Quality Forecasting Dataset, Human Motion Capture Dataset, PPG-DaLiA dataset, Electricity Load Diagrams datasets, PEMS-BAY dataset, and Electricity Transformer Dataset. we publish the average performance of all the combinations. We repeat each experiment over 5 random seeds. Please refer to Appendix B and C for more detailed dataset descriptions and experiment results.}

\subsection{Baseline}
We first introduce the baselines of semi-supervised domain adaptation for time-series forecasting, including the latest semi-supervised domain adaptation method and domain adaptation methods for time-series data. We employ the same setting, e.g., the same labeled target domain data, for all the methods.
\begin{itemize}[leftmargin=*]
  \item \textbf{LSTM\_S+T }only uses the labeled source domain data and the labeled target domain data to train a vanilla LSTM model and apply it to the unlabeled target domain data.
  \item \textbf{R-DANN} \cite{da2020remaining} uses the domain adaptation architecture proposed in \cite{ganin2015unsupervised} with Gradient Reversal Layer (GRL) on LSTM, which is a straightforward solution for time-series domain adaptation.
  \item \textbf{RDC}: Deep domain confusion is a domain adaptation method proposed in \cite{tzeng2014deep} which minimizes the distance between the source and the target distribution by using Maximum Mean Discrepancy (MMD). Similar to the R-DANN method, we also employ LSTM as the feature extractor for time-series data.
  \item \textbf{VRADA} \cite{purushotham2016variational} is a time-series domain adaptation method that combines the Gradient Reversal Layer and VRNN \cite{chung2015recurrent}.
  \item \textbf{SASA} \cite{Cai_Chen_Li_Chen_Zhang_Ye_Li_Yang_Zhang_2021} is one of the state-of-the-art domain adaptation approaches for time-series data, that extracts and aligns the sparse associative structures.
  \item \textbf{LIRR} \cite{li2021learning} is one of the state-of-the-art semi-supervised domain adaptation approaches for classification and regression, we replace the feature extractor with LSTM for time-series data.
  \item \textcolor{black}{\textbf{CMTN} \cite{li2021causal} leverages the attention mechanisms to capture the dynamic and temporal causal mechanism of time series data for time-series domain adaptation.}
  \item \textcolor{black}{\textbf{DAF} \cite{jin2022domain}  uses an attention-based shared module with a domain discriminator to learn  domain-invariant and domain-specific features, respectively.}
  \item \textcolor{black}{\textbf{Autoformer} \cite{wu2021autoformer}: We employ the transformer-based backbone network, Autoformer, to extract the representation of time-series data. Since the Autoformer model is not devised for domain adaptation, we use a gradient reversal layer to extract the domain-invariant information. }
  \item \textcolor{black}{\textbf{S4D} \cite{gu2022parameterization}: We employ the state-space-model based backbone networks, the diagonal of structured state-space-model (S4D) \cite{gu2021efficiently}, to extract the representation of time-series data. Since the S4D model is not devised for domain adaptation, we use a gradient reversal layer to extract the domain-invariant information. }
\end{itemize}



\subsection{Result on Simulation Dataset}

\begin{table*}
\setlength\tabcolsep{4pt}
\setlength{\abovecaptionskip}{0cm}  
\setlength{\belowcaptionskip}{-0.2cm} 
\small
\color{black}
	\centering
        \flushleft	\label{tab:simulate_forecast}
	\caption{The MAE and MSE on simulated datasets for the baselines and the proposed method. 
 }
	\begin{tabular}{p{1.1cm}<{\centering}|p{1.1cm}<{\centering}|p{1.05cm}<{\centering} p{1.05cm}<{\centering} p{1.05cm}<{\centering} p{1.05cm}<{\centering} p{1.05cm}<{\centering} p{1.05cm}<{\centering} p{1.25cm}
    <{\centering} p{1.25cm}<{\centering} p{1.25cm}
    <{\centering} p{1.05cm}<{\centering} p{1.25cm}<{\centering}}
	    \toprule
	    \footnotesize{Task} & \footnotesize{Metric} & \footnotesize{GCA} & \footnotesize{DAF}& \footnotesize{LIRR}& \footnotesize{SASA}& \footnotesize{VRADA}& \footnotesize{CMTN}& \footnotesize{Autoformer} &
     \footnotesize{S4D} &
     \footnotesize{R-DANN} &
     \footnotesize{RDC} & \footnotesize{LSTM\_S+T}\\
		\midrule
		\multirow{2}*{$1\!\rightarrow\!2$} & RMSE & $\bm{0.9232}$ & $0.9450$ & $0.9585$ & $1.0001$ & $0.9763$ & $0.9841$ & $1.0259$ & $0.9479$ & $1.0148$ & $1.0122$ & $1.0078$\\
		~ & MAE &$\bm{0.7648}$ & $0.7742$ & $0.7656$ & $0.7961$ & $0.7858$ & $0.7924$ & $0.8215$ & $0.7773$ & $0.8017$ & $0.8841$ & $0.7990$\\
		\midrule
		\multirow{2}*{$1\!\rightarrow\!3$} & RMSE & $\bm{0.8556}$ & $0.8799$ & $0.9146$ & $0.8734$ & $0.9005$ & $0.8830$ & $1.0253$ & $0.8624$ & $0.8884$ & $0.8841$ & $0.8739$\\
		~ & MAE & $\bm{0.7399}$ & $0.7479$ & $0.7419$ & $0.7485$ & $0.7598$ & $0.7519$ & $0.8186$ & $0.7404$ & $0.7539$ & $0.7527$ & $0.7489$\\
		\midrule
		\multirow{2}*{$2\!\rightarrow\!1$} & RMSE & $\bm{0.8323}$ & $0.9050$ & $0.9208$ & $0.9386$ & $0.9277$ & $0.9396$ & $1.0033$ & $0.9287$ & $0.9333$ & $0.9397$ & $0.9469$\\
		~ & MAE & $\bm{0.7258}$ & $0.7600$ & $0.7336$ & $0.7741$ & $0.7682$ & $0.7746$ & $0.8019$ & $0.7716$ & $0.7713$ & $0.7740$ & $0.7778$\\
		\midrule
		\multirow{2}*{$2\!\rightarrow\!3$} & RMSE & $\bm{0.8570}$ & $0.8815$ & $0.9237$ & $0.9084$ & $0.9229$ & $0.8912$ & $1.0053$ & $0.8651$ & $0.9136$ & $0.9125$ & $0.9178$\\
		~ & MAE & $\bm{0.7493}$ & $0.7500$ & $0.7601$ & $0.7625$ & $0.7688$ & $0.7541$ & $0.8036$ & $0.7494$ & $0.7649$ & $0.7640$ & $0.7669$\\
		\midrule
		\multirow{2}*{$3\!\rightarrow\!1$} & RMSE & $\bm{0.8314}$ & $0.8952$ & $0.9129$ & $0.8596$ & $0.9177$ & $0.9373$ & $1.0089$ & $0.9120$ & $0.8632$ & $0.8596$ & $0.8631$\\
		~ & MAE & $\bm{0.7255}$ & $0.7578$ & $0.7293$ & $0.7379$ & $0.7635$ & $0.7761$ & $0.8082$ & $0.7636$ & $0.7393$ & $0.7373$ & $0.7395$\\
		\midrule
		\multirow{2}*{$3\!\rightarrow\!2$} & RMSE & $\bm{0.9208}$ & $0.9326$ & $0.9539$ & $0.9685$ & $0.9753$ & $0.9739$ & $1.0156$ & $0.9331$ & $0.9752$ & $0.9710$ & $0.9759$ \\
		~ & MAE & $\bm{0.7634}$ & $0.7865$ & $0.7612$ & $0.7829$ & $0.7855$ & $0.7863$ & $0.8123$ & $0.7698$ & $0.7859$ & $0.7840$ & $0.7863$\\
		\midrule
		\multirow{2}*{Average} & RMSE & $\bm{0.8700}$ & $0.9065$ & $0.9307$ & $0.9249$ & $0.9368$ & $0.9348$ & $1.0140$ & $0.9082$ & $0.9314$ & $0.9299$ & $0.9309$\\
		~ & MAE & $\bm{0.7433}$ & $0.7597$ & $0.7670$ & $0.7481$ & $0.7719$ & $0.7725$ & $0.8110$ & $0.7620$ & $0.7695$ & $0.7688$ & $0.7693$\\
		\bottomrule
	\end{tabular}
\end{table*}

\begin{table*}

\color{black}
\small
	\label{tab:air}
	\centering
	\caption{The MAE and MSE on Air Quality Forecasting dataset for the baselines and the proposed method. 
 }
	\begin{tabular}{p{1.0cm}<{\centering}|p{0.85cm}<{\centering}|p{0.85cm}<{\centering} p{0.85cm}<{\centering} p{0.85cm}<{\centering} p{0.85cm}<{\centering} p{0.85cm}<{\centering} p{0.85cm}<{\centering} p{1.20cm}
    <{\centering} p{1.15cm}<{\centering} p{1.20cm}
    <{\centering} p{0.85cm}<{\centering} p{1.20cm}<{\centering}}
	    \toprule
	    \footnotesize{Task} & \footnotesize{Metric} & \footnotesize{GCA} & \footnotesize{DAF}& \footnotesize{LIRR}& \footnotesize{SASA}& \footnotesize{VRADA}& \footnotesize{CMTN}& \footnotesize{Autoformer} &
     \footnotesize{S4D} &
     \footnotesize{R-DANN} &
     \footnotesize{RDC} & \footnotesize{LSTM\_S+T}\\
		\midrule
		\multirow{2}*{$B\!\rightarrow\! T$} & RMSE & $0.1684$ & $0.1907$ & $0.1824$ & $0.1952$ & $0.2536$ & $0.2129$ & $\bm{0.1548}$ & $0.1803$ & $0.1868$ & $0.2199$ & $0.1910$ \\
		~ & MAE & $0.2669$ & $0.3062$ & $0.2941$ & $0.3037$ & $0.3698$ & $0.3409$ & $\bm{0.2643}$ & $0.2938$ & $0.3036$ & $0.3369$ & $0.2975$ \\
		\midrule
		\multirow{2}*{$G\!\rightarrow\! T$} & RMSE & $\bm{0.1497}$ & $0.1869$ & $0.1757$ & $0.1585$ & $0.2365$ & $0.2356$ & $0.1565$ & $0.1590$ & $0.1693$ & $0.1420$ & $0.1785$\\
		~ & MAE & $\bm{0.2618}$ & $0.3099$ & $0.2777$ & $0.2740$ & $0.3544$ & $0.3671$ & $0.2862$ & $0.2678$ & $0.2938$ & $0.2675$ & $0.3027$\\
		\midrule
		\multirow{2}*{$S\!\rightarrow\! T$} & RMSE & $\bm{0.1673}$ & $0.1990$ & $0.1754$ & $0.2022$ & $0.2662$ & $0.2061$ & $0.1690$ & $0.1746$ & $0.2330$ & $0.1877$ & $0.2058$\\
		~ & MAE & $\bm{0.2639}$ & $0.3214$ & $0.2771$ & $0.3130$ & $0.4287$ & $0.3480$ & $0.2823$ & $0.2846$ & $0.3834$ & $0.3194$ & $0.3195$\\
		\midrule
		\multirow{2}*{$T\!\rightarrow\! B$} & RMSE & $0.2278$ & $0.2364$ & $0.2783$ & $0.2366$ & $0.3112$ & $0.2534$ & $\bm{0.2146}$ & $0.2413$ & $0.3222$ & $0.3182$ & $0.2652$\\
		~ & MAE & $0.2841$ & $0.3213$ & $0.3470$ & $0.2962$ & $0.4007$ & $0.3392$ & $\bm{0.2804}$ & $0.2940$ & $0.3522$ & $0.3580$ & $0.3243$\\
		\midrule
		\multirow{2}*{$G\!\rightarrow\! B$} & RMSE & $\bm{0.2169}$ & $0.2744$ & $0.2657$ & $0.2388$ & $0.2414$ & $0.2552$ & $0.2204$ & $0.2419$ & $0.2706$ & $0.2276$ & $0.3010$\\
		~ & MAE & $\bm{0.2844}$ & $0.3554$ & $0.3217$ & $0.3151$ & $0.3722$ & $0.3493$ & $0.2858$ & $0.2883$ & $0.3672$ & $0.3077$ & $0.3854$\\
		\midrule
		\multirow{2}*{$S\!\rightarrow \!B$} & RMSE & $\bm{0.2319}$ & $0.2446$ & $0.2571$ & $0.2671$ & $0.4222$ & $0.2858$ & $0.2508$ & $0.2758$ & $0.3017$ & $0.2744$ & $0.2503$\\
		~ & MAE & $\bm{0.2802}$ & $0.3259$ & $0.3067$ & $0.3273$ & $0.4621$ & $0.3616$ & $0.3184$ & $0.3271$ & $0.3856$ & $0.3499$ & $0.3303$\\
		\midrule
		\multirow{2}*{$B\!\rightarrow\! G$} & RMSE & $0.1996$ & $0.2203$ & $0.2214$ & $0.2281$ & $0.3135$ & $0.2832$ & $\bm{0.1994}$ & $0.2203$ & $0.2964$ & $0.2566$ & $0.2452$\\
		~ & MAE & $0.2999$ & $0.3341$ & $0.3186$ & $0.3315$ & $0.4256$ & $0.3975$ & $\bm{0.2942}$ & $0.3141$ & $0.4147$ & $0.3682$ & $0.3564$\\
		\midrule
		\multirow{2}*{$T\!\rightarrow\! G$} & RMSE & $\bm{0.1845}$ & $0.2222$ & $0.2071$ & $0.2225$ & $0.2302$ & $0.2325$ & $0.1909$ & $0.1969$ & $0.1941$ & $0.1912$ & $0.1959$\\
		~ & MAE & $\bm{0.2819}$ & $0.3334$ & $0.2988$ & $0.2892$ & $0.3643$ & $0.3573$ & $0.2952$ & $0.3018$ & $0.3259$ & $0.3101$ & $0.3069$\\
		\midrule
		\multirow{2}*{$S\!\rightarrow\! G$} & RMSE & $\bm{0.1769}$ & $0.1907$ & $0.2028$ & $0.2186$ & $0.4187$ & $0.2847$ & $0.1937$ & $0.1957$ & $0.2942$ & $0.2949$ & $0.1862$\\
		~ & MAE & $\bm{0.2887}$ & $0.3056$ & $0.2988$ & $0.3415$ & $0.5227$ & $0.1869$ & $0.2927$ & $0.3009$ & $0.4912$ & $0.4185$ & $0.3871$\\
		\midrule
		\multirow{2}*{$B\!\rightarrow\! S$} & RMSE & $\bm{0.1559}$ & $0.1708$ & $0.1857$ & $0.2281$ & $0.2240$ & $0.2389$ & $0.1787$ & $0.1833$ & $0.2530$ & $0.3350$ & $0.1861$\\
		~ & MAE & $\bm{0.2640}$ & $0.2955$ & $0.2967$ & $0.3315$ & $0.3698$ & $0.3668$ & $0.3067$ & $0.3021$ & $0.3957$ & $0.4521$ & $0.2940$\\
		\midrule
		\multirow{2}*{$T\!\rightarrow\! S$} & RMSE & $\bm{0.1536}$ & $0.1633$ & $0.1812$ & $0.1685$ & $0.2587$ & $0.2524$ & $0.1654$ & $0.1606$ & $0.2520$ & $0.2532$ & $0.1945$\\
		~ & MAE & $\bm{0.2640}$ & $0.2935$ & $0.3119$ & $0.3027$ & $0.4468$ & $0.3774$ & $0.2877$ & $0.2912$ & $0.3986$ & $0.3916$ & $0.2939$\\
		\midrule
		\multirow{2}*{$G\!\rightarrow\! S$} & RMSE & $\bm{0.1360}$ & $0.1538$ & $0.1710$ & $0.1552$ & $0.2914$ & $0.2303$ & $0.1536$ & $0.1411$ & $0.2017$ & $0.1839$ & $0.1529$ \\
		~ & MAE & $\bm{0.2574}$ & $0.2899$ & $0.2938$ & $0.2844$ & $0.4236$ & $0.3613$ & $0.2805$ & $0.2707$ & $0.3418$ & $0.3300$ & $0.2940$\\
		\midrule
		\multirow{2}*{Average} & RMSE & $\bm{0.1810}$ & $0.2044$ & $0.2087$ & $0.2067$ & $0.2889$ & $0.2476$ & $0.1873$ & $0.1976$ & $0.2479$ & $0.2404$ & $0.2128$ \\
		~ & MAE & $\bm{0.2750}$ & $0.3160$ & $0.3029$ & $0.3092$ & $0.4173$ & $0.3625$ & $0.2895$ & $0.2947$ & $0.3711$ & $0.3508$ & $0.3180$\\
		\bottomrule
	\end{tabular}
\end{table*}

The Mean Squared Error(MSE) and Mean Absolute Error (MAE) in the simulated dataset are shown in Table 3. We conduct the Wilcoxon signed-rank test on the reported accuracies, our method significantly outperforms the baselines, with a p-value threshold of 0.05. The proposed GCA method significantly outperforms the other baselines on all the tasks with a large gap. It is worth mentioning that our Granger Causality alignment method has a remarkable MSE promotion on most of the tasks (more than 10\% improvement compared with SASA and LIRR). For some easy tasks like $domain 1\rightarrow domain 2$ and $domain 2 \rightarrow domain 1$, the proposed method achieves the largest improvement among all the tasks. For the other harder tasks like $domain3\rightarrow domain 2$ and $domain 2 \rightarrow domain 3$, our method also achieves a very good result. As for the very challenging tasks like $domain 1\rightarrow domain 3$ and $domain 3 \rightarrow domain 1$, where the value ranges between the source and the target domain are large, GCA also obtains a comparable result. \textcolor{black}{It is noted that the transformer-based method, Autoformer, does not perform well. This is because Autoformer \cite{wu2021autoformer} employ transformer to capture the periodicity of time-series and the dataset generated via the causal mechanism do not contain any obvious periodicity.}

In order to study how the inferred Granger-causality has influenced the performance of the model, we further apply the proposed method to the test dataset under different performances of the Granger-Causality. In detail, we evaluate the GCA on the test set every epoch and record the Area Under the Precision-Recall Curve (AUPRC) of the inferred Granger  Causality. The experiment result is shown in Figure 8 (a). According to the experiment result, we have the following observations. (1) The proposed method can well discover the Granger causality among time-series data. (2) As the improves of the accuracy of the Granger-causality, the performance of our method increases. The proposed method achieves the best performance when the Granger causality is exactly discovered. This phenomenon shows the reasonability of the proposed GCA. 

\textcolor{black}{We also compare the convergence speed of the proposed GCA method with the other baselines, which is shown in Figure 8 (b). According to the convergence curve in the figure, we can find that the proposed GCA method not only achieves better performance but also enjoys faster convergence speed. This is because the GCA approach can discover the causal structures that follow the data generation process, so it can avoid the side effects of superfluous relationships and enjoys an efficient learning process.}

\begin{figure}[t]
\centering
\label{fig:exp1}
\subfigure[AUPRC and RMSE of GCA.]{
\begin{minipage}[t]{\linewidth}
\centering
\includegraphics[width=0.90\columnwidth]{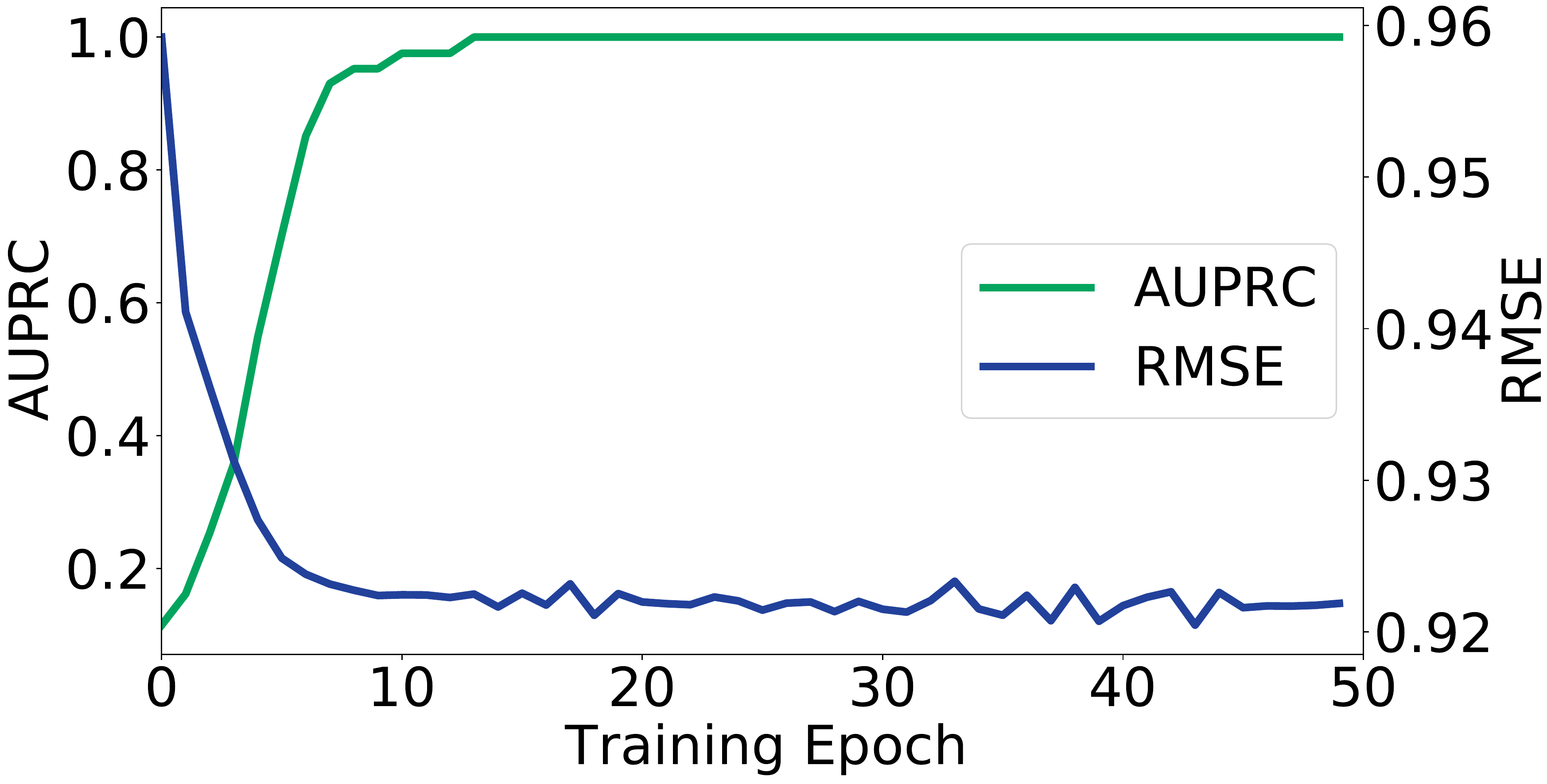}
\end{minipage}%
}%

\subfigure[Convergence of GCA and baselines.]{
\begin{minipage}[t]{\linewidth}
\includegraphics[width=0.9\columnwidth]{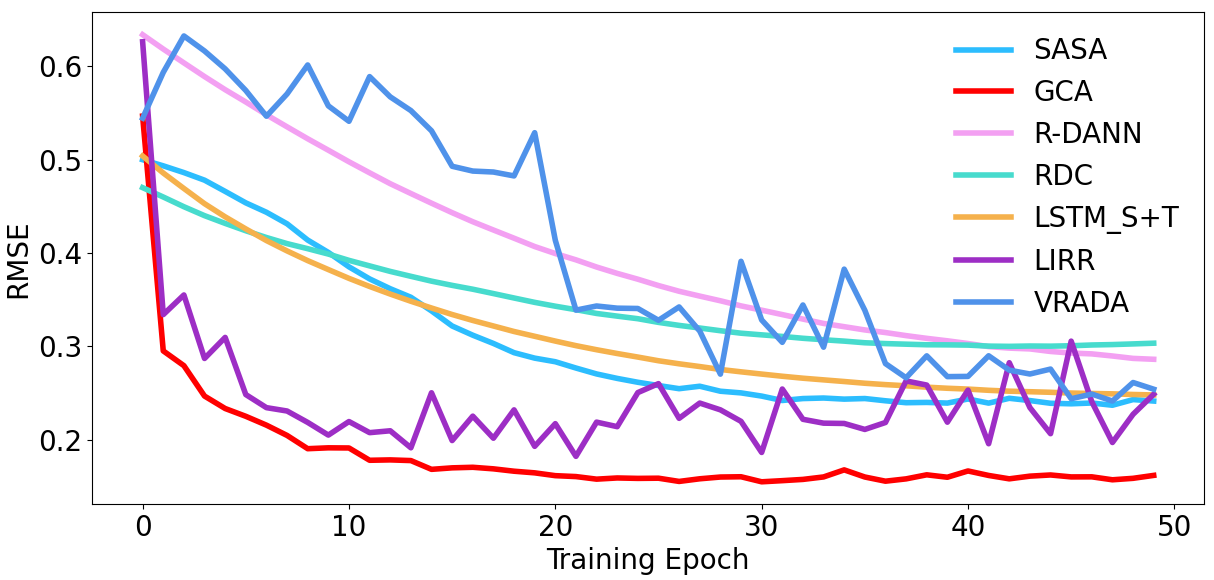}
\end{minipage}%
}%
\centering
\caption{(a) The convergence of the GCA method. We can find that the GCA method achieves the best performance when the Granger Causality is well reconstructed. (b) From the convergence curves of different methods, we can find that the proposed GCA method converges faster than the other methods. \textit{(best view in color.)}}
\end{figure}

\begin{table*}[t]
\color{black}
	\centering
 \small
	\caption{The MAE and MSE on Human Motion Forecasting dataset for the baselines and the proposed method. }
	\begin{tabular}{p{1.0cm}<{\centering}|p{0.85cm}<{\centering}|p{0.85cm}<{\centering} p{0.85cm}<{\centering} p{0.85cm}<{\centering} p{0.85cm}<{\centering} p{0.85cm}<{\centering} p{0.85cm}<{\centering} p{1.20cm}
    <{\centering} p{1.15cm}<{\centering} p{1.20cm}
    <{\centering} p{0.85cm}<{\centering} p{1.20cm}<{\centering}}
	    \toprule
	    \footnotesize{Task} & \footnotesize{Metric} & \footnotesize{GCA} & \footnotesize{DAF}& \footnotesize{LIRR}& \footnotesize{SASA}& \footnotesize{VRADA}& \footnotesize{CMTN}& \footnotesize{Autoformer} &
     \footnotesize{S4D} &
     \footnotesize{R-DANN} &
     \footnotesize{RDC} & \footnotesize{LSTM\_S+T}\\
		\midrule
		\multirow{2}*{$W\!\rightarrow \!G$} & RMSE & $\bm{0.1501}$ & $0.3250$ & $0.2182$ & $0.2316$ & $0.6770$ & $0.3534$ & $0.2710$ & $0.5052$ & $0.6186$ & $0.5160$ & $0.2295$\\
		~ & MAE & $0.2321$ & $0.4477$ & $0.3143$ & $0.3359$ & $0.6204$ & $0.4767$ & $\bm{0.1939}$ & $0.5260$ & $0.5815$ & $0.5301$ & $0.3355$\\
		\midrule
		\multirow{2}*{$W\!\rightarrow\! E$} & RMSE & $\bm{0.0941}$ & $0.2204$ & $0.1849$ & $0.2147$ & $0.8269$ & $0.3224$ & $0.2880$ & $0.4974$ & $0.5617$ & $0.6841$ & $0.2046$\\
		~ & MAE & $\bm{0.1852}$ & $0.3694$ & $0.2940$ & $0.3288$ & $0.6998$ & $0.4505$ & $0.1918$ & $0.5278$ & $0.5753$ & $0.6484$ & $0.3339$\\
		\midrule
		\multirow{2}*{$G\!\rightarrow\! W$} & RMSE & $\bm{0.1429}$ & $0.1504$ & $0.1542$ & $0.1585$ & $0.5944$ & $0.1979$ & $0.3879$ & $0.4161$ & $0.4044$ & $0.3516$ & $0.1680$\\
		~ & MAE & $\bm{0.2163}$ & $0.2994$ & $0.2601$ & $0.2754$ & $0.5843$ & $0.3527$ & $0.2652$ & $0.4897$ & $0.4751$ & $0.4561$ & $0.2893$\\
		\midrule
		\multirow{2}*{$G\!\rightarrow\! E$} & RMSE & $\bm{0.0851}$ & $0.1864$ & $0.1558$ & $0.1516$ & $0.6872$ & $0.3360$ & $0.2549$ & $0.4156$ & $0.4464$ & $0.5655$ & $0.1647$\\
		~ & MAE & $0.1714$ & $0.3234$ & $0.2622$ & $0.2760$ & $0.6265$ & $0.4645$ & $\bm{0.1661}$ & $0.4649$ & $0.5143$ & $0.5971$ & $0.2990$\\
		\midrule
		\multirow{2}*{$E\!\rightarrow \!W$} & RMSE & $\bm{0.1220}$ & $0.2167$ & $0.2072$ & $0.2400$ & $0.6642$ & $0.2425$ & $0.3964$ & $0.4612$ & $0.4904$ & $0.4038$ & $0.1978$\\
		~ & MAE & $\bm{0.1963}$ & $0.3636$ & $0.3042$ & $0.3487$ & $0.6133$ & $0.3891$ & $0.2751$ & $0.4985$ & $0.5305$ & $0.4755$ & $0.3183$\\
		\midrule
		\multirow{2}*{$E\!\rightarrow\! G$} & RMSE & $\bm{0.1383}$ & $0.1946$ & $0.1990$ & $0.2225$ & $0.7707$ & $0.3411$ & $0.3071$ & $0.4988$ & $0.4896$ & $0.5394$ & $0.2411$\\
		~ & MAE & $\bm{0.2042}$ & $0.3391$ & $0.3061$ & $0.3268$ & $0.6617$ & $0.4658$ & $0.2177$ & $0.5196$ & $0.5257$ & $0.5448$ & $0.3454$\\
		\midrule
		\multirow{2}*{Average} & RMSE & $\bm{0.1221}$ & $0.2156$ & $0.1865$ & $0.2031$ & $0.7034$ & $0.2989$ & $0.3176$ & $0.4652$ & $0.5018$ & $0.5101$ & $0.2010$\\
		~ & MAE & $\bm{0.2003}$ & $0.3571$ & $0.2902$ & $0.3152$ & $0.6343$ & $0.4332$ & $0.3282$ & $0.5044$ & $0.5337$ & $0.5420$ & $0.3202$\\
		\bottomrule
	\end{tabular}
\end{table*}

\begin{table*}
\color{black}
\small
	\label{tab:ppg}
	\centering
	\caption{The MAE and MSE on Heart Rate Forecasting dataset for the baselines and the proposed method.}
	\begin{tabular}{p{1.0cm}<{\centering}|p{0.85cm}<{\centering}|p{0.85cm}<{\centering} p{0.85cm}<{\centering} p{0.85cm}<{\centering} p{0.85cm}<{\centering} p{0.85cm}<{\centering} p{0.85cm}<{\centering} p{1.20cm}
    <{\centering} p{1.15cm}<{\centering} p{1.20cm}
    <{\centering} p{0.85cm}<{\centering} p{1.20cm}<{\centering}}
	    \toprule
	    \footnotesize{Task} & \footnotesize{Metric} & \footnotesize{GCA} & \footnotesize{DAF}& \footnotesize{LIRR}& \footnotesize{SASA}& \footnotesize{VRADA}& \footnotesize{CMTN}& \footnotesize{Autoformer} &
     \footnotesize{S4D} &
     \footnotesize{R-DANN} &
     \footnotesize{RDC} & \footnotesize{LSTM\_S+T}\\
		\toprule
		\multirow{2}*{$C\!\rightarrow\! S$} & RMSE & $\bm{0.0051}$ & 
		$0.0095$&
		$0.0144$& 
		$0.0076$& 
		$0.0164$& 
		$0.0109$& $0.0098$ & $0.0086$ &
		$0.0223$&
            $0.0118$& 
            $0.0147$ \\
		~ & MAE & $\bm{0.0448}$ &$0.0689$& $0.0905$& 
		$0.0633$& 
		$0.0942$& 
		$0.0733$& $0.0687$ & $0.0653$ &  
		$0.1131$& 
        $0.0765$& $0.0836$ \\
		\midrule
		\multirow{2}*{$C\!\rightarrow\! W$} & RMSE & $\bm{0.0122}$ & 
		$0.0173$&
		$0.0273$& 
		$0.0125$& 
		$0.0185$&
		$0.0222$& $0.0263$ & $0.0219$ &
	  $0.0316$  & $0.0224$  &$0.0201$\\
		~ & MAE & $\bm{0.0675}$ &$0.0918$& $0.1239$& 
		$0.0825$& 
		$0.0991$& 
		$0.0852$& $0.1070$ & $0.0853$ & 
		$0.1331$ & $0.1159$ &$0.0998$\\
		\midrule
		\multirow{2}*{$C\!\rightarrow\! D$} & RMSE & $\bm{0.0117}$ & 
		$0.0183$&
		$0.0275$& 
		$0.0150$& 
		$0.0327$& 
		$0.0191$& $0.0129$ & $0.0169$ & 
		$0.0382$& 
            $0.0194$&
            $0.0159$\\
		~ & MAE & $\bm{0.0710}$ &$0.0967$& $0.1238$& 
		$0.0781$& 
		$0.1428$& 
		$0.1024$& $0.0775$ & $0.0869$ & 
		$0.1488$& 
            $0.1044$&
            $0.0735$\\
		\midrule
		\multirow{2}*{$S\!\rightarrow\! C$} & RMSE & $\bm{0.0064}$ & 
		$0.0196$&
		$0.0256$& 
		$0.0099$&
		$0.0236$& 
		$0.0231$& $0.0465$ & $0.0112$ & 
		$0.0295$ & $0.0276$ & $0.0243$\\
		~ & MAE & $\bm{0.0402}$ & 
		$0.1019$&$0.1150$& 
		$0.0752$& 
		$0.1048$& 
		$0.1119$& $0.1590$ & $0.0610$ &
		$0.1237$ & $0.1164$&$0.1107$\\
		\midrule
		\multirow{2}*{$S\!\rightarrow\! W$} & RMSE & $\bm{0.0122}$ &
		$0.0268$&
		$0.0241$& 
		$0.0140$&
		$0.0282$&
		$0.0328$& $0.0396$ & $0.0211$ &
		$0.0361$ &$0.0287$ & $0.0433$\\
		~ & MAE & $\bm{0.0675}$ &
		$0.1213$& $0.1108$& 
		$0.0870$&
		$0.1268$&
		$0.1313$& $0.0983$ & $0.0838$ & 
		$0.1328$ &$0.1174$ &$0.1444$\\
		\midrule
		\multirow{2}*{$S\!\rightarrow \!D$} & RMSE & $\bm{0.0134}$ &
		$0.0185$&
		$0.0224$& 
		$0.0150$&
		$0.0270$&
		$0.0316$& $0.0167$ & $0.0182$ & 
		$0.0352$ &$0.0219$ & $0.0324$\\
		~ & MAE & $\bm{0.0785}$ &
		$0.0985$& $0.1088$&
		$0.0942$&
		$0.1305$&
		$0.1340$& $0.0911$ & $0.0878$ &
		$0.1396$ & $0.1052$ &$0.1315$\\
		\midrule
		\multirow{2}*{$W\!\rightarrow\! C$} & RMSE & $\bm{0.0052}$ &
		$0.0479$&
		$0.0224$&
		$0.0090$&
		$0.0134$&
		$0.0150$& $0.0163$ & $0.0089$ &
		$0.0181$ & $0.0146$&$0.0131$\\
		~ & MAE & $\bm{0.0297}$ & 
		$0.0654$&$0.1060$&
		$0.0676$& 
		$0.0790$& 
		$0.0831$& $0.0850$ & $0.0534$ &
		$0.0935$ & $0.0812$&$0.0761$\\
		\midrule
		\multirow{2}*{$W\!\rightarrow\! S$} & RMSE & $\bm{0.0047}$ &
		$0.0080$&
		$0.0172$& 
		$0.0102$&
		$0.0118$&
		$0.0121$& $0.0094$ & $0.0067$ &
		$0.0191$ &$0.0152$ & $0.0283$\\
		~ & MAE & $\bm{0.0412}$ & 
		$0.0643$&$0.0979$&
		$0.0752$&
		$0.0703$&
		$0.0795$& $0.0706$ & $0.0557$ &
		$0.0960$ &$0.0810$ &$0.0971$\\
		\midrule
		\multirow{2}*{$W\!\rightarrow\! D$} & RMSE & $\bm{0.0118}$ &
		$0.0157$&
		$0.0272$&
		$0.0137$&
		$0.0150$&
		$0.0182$& $0.0158$ & $0.0203$ &
		$0.0185$ &$0.0143$ & $0.0184$\\
		~ & MAE & $\bm{0.0712}$ & 
		$0.0921$&$0.1252$&
		$0.0863$&
		$0.0866$&
		$0.0978$& $0.0881$ & $0.0912$ &
		$0.0993$ &$0.0842$ &$0.0961$\\
		\midrule
		\multirow{2}*{$D\!\rightarrow\! C$} & RMSE & $\bm{0.0052}$ &
		$0.0105$&
		$0.0153$&
		$0.0086$&
		$0.0120$&
		$0.0161$& $0.0192$ & $0.0137$ &
		$0.0169$ &$0.0157$ & $0.0165$\\
		~ & MAE & $\bm{0.0309}$ & 
		$0.0644$&$0.0850$&
		$0.0672$&
		$0.0751$&
		$0.0863$& $0.0975$ & $0.0751$ &
		$0.0911$ &$0.0855$ &$0.0876$\\
		\midrule
		\multirow{2}*{$D\!\rightarrow\! S$} & RMSE & $\bm{0.0048}$ &
		$0.0066$&
		$0.0097$&
		$0.0084$&
		$0.0087$&
		$0.0141$& $0.0070$ & $0.0127$ &
		$0.0170$ &$0.0135$ & $0.0131$\\
		~ & MAE & $\bm{0.0415}$ &
		$0.0561$& $0.0709$&
		$0.0667$&
		$0.0672$&
		$0.0890$& $0.0574$ & $0.0772$ &
		$0.0895$ &$0.0790$ &$0.0794$\\
		\midrule
		\multirow{2}*{$D\!\rightarrow\! W$} & RMSE & $\bm{0.0113}$ &
		$0.0190$&
		$0.0203$&
		$0.0114$&
		$0.0246$&
		$0.0232$& $0.0273$ & $0.0254$ & 
		$0.0235$ &$0.0152$ &$0.0207$ \\
		~ & MAE & $\bm{0.0606}$ &
		$0.0986$& $0.1008$& 
		$0.0783$&
		$0.1256$&
		$0.1117$& $0.1240$ & $0.0972$ &
		$0.1135$ &$0.0826$ &$0.1001$\\
		\midrule
		\multirow{2}*{Average} & RMSE & $\bm{0.0086}$ & 
		$0.0181$&
		$0.0211$&
		$0.0113$& 
		$0.0194$&
		$0.0199$& $0.0206$ & $0.0155$ &
		$0.0255$ & $0.0185$ &$0.0217$ \\
		~ & MAE & $\bm{0.0537}$ &
		$0.0849$& $0.1049$&
		$0.0766$& 
		$0.1002$& 
		$0.0988$& $0.0937$ & $0.0767  $ &
		$0.1157$ &$0.0941$ &$0.0983$\\
		\bottomrule
	\end{tabular}
\end{table*}

\subsection{Result on Air Quality Forecasting}

In this section, we will report the experimental results on real-world data. The experiment results on the air quality forecasting are shown in Table 4. We conduct the Wilcoxon signed-rank test on the reported accuracies, our method significantly outperforms the baselines, with a p-value threshold of 0.05. According to the experiment results, we can find that the methods based on relationship modeling (GCA and SASA) outperform most of the baselines. And Autoformer also performs well since it captures the periodicity information. Moreover, the proposed GCA method outperforms the SASA and LIRR by a remarkable margin, which proves that the Granger Causality has more advantages than associative structures in modeling time-series data with time lags. However, we also find that the improvements of some domain adaptation tasks such as $G\rightarrow T$, $G\rightarrow B$, and $G\rightarrow S$ are not so remarkable. This is because the domain Guangzhou (G) contains many missing value data, which makes it difficult to reconstruct the real Granger-Causality.

\subsection{Result on Human Motion Foresting}
We also evaluate our method of human motion forecasting, another popular real-world time-series task. The experiment results are shown in Table 5. We conduct the Wilcoxon signed-rank test on the reported accuracies, our method significantly outperforms the baselines, with a p-value threshold of 0.05. Compared with the air quality forecasting dataset, the improvement of our method is even larger, possibly for the following reasons: (1) The human skeleton structure is a very sparse causal structure, which is why the proposed GCA method and SASA can outperform the other methods. What's more, our method can well remove the side effect of superfluous variables. (2) The human motion forecasting dataset contains more variables than the air quality forecasting dataset. (3) Different from the previous tasks that predict only one dimension time series, multivariate time series need to be predicted simultaneously, and it is difficult to achieve for the other baselines. 

\subsection{\textcolor{black}{Results on PPG-DaLiA dataset}}
\textcolor{black}{Finally, we analyze the experimental results of the PPG-DaLiA datasets, which are shown in Table 6. We conduct the Wilcoxon signed-rank test on the reported accuracies, our method significantly outperforms the baselines, with a p-value threshold of 0.05. According to the experiment results, we can find that the proposed GCA model still achieves the best performance. And the recently proposed DAF model also achieves comparative results, this is because the DAF model employs the domain-shared attention mechanism, which can capture the latent causal process and obtain the ideal performance. Moreover, we also find that our model performs better when we take the ``Sitting'' and ``Cycling'' as target domains. This is because the heart rating is influenced by not only the body signals collected by the sensors but also some unobserved variables like emotions and external environments. And the heart rating might be easily influenced by the body signal under the activities of ``Sitting'' and ``Cycling'', so we can obtain better prediction accuracy with the body signals.}


\subsection{Ablation Study and Visualization}

In order to evaluate the modules of the proposed method, we devise the following model variants.
\begin{itemize}[leftmargin=*]
  \item \textbf{GCA-r:} In order to evaluate the effectiveness of the causal graph alignment regularization, we further devise a variant that removes the Granger causality Discrepancy regularization term $\mathcal{L}_r$.
  \item \textcolor{black}{\textbf{GCA-e:} In the case where the predicted results are univariate time-series and the Granger-causal structure is desired for interpretability, we aim to evaluate the effectiveness of the extra strengthen term, so we devise a variant that removes the strengthen term $\mathcal{L}_e$.}
  \item \textcolor{black}{\textbf{GCA-s:} In the case where the predicted results are univariate time-series and the Granger-causal structure is not necessary to be obtained accurately, we only predict and minimize the target dimension time-series, \textcolor{black}{.e.g, minimize $MSE(\widehat{\bm{z}}_{t+1}^d, \bm{z}_t^d)$ instead of $MSE(\widehat{\bm{z}}_{t+1},\bm{z}_t)$}}
  \item \textcolor{black}{\textbf{GCA-$\alpha$:} To evaluate the effectiveness of the trainable structural domain latent variables $\alpha^S,\alpha^T$, we remove the $\alpha^*$ from the standard GCA and add an extra domain of simulation dataset with different time lags.}
\end{itemize}

\subsubsection{The study of the Effectiveness of Extra strengthen term.}
\begin{figure}[t]
\label{fig:strengthen}
	\centering
	\includegraphics[width=0.9\columnwidth]{./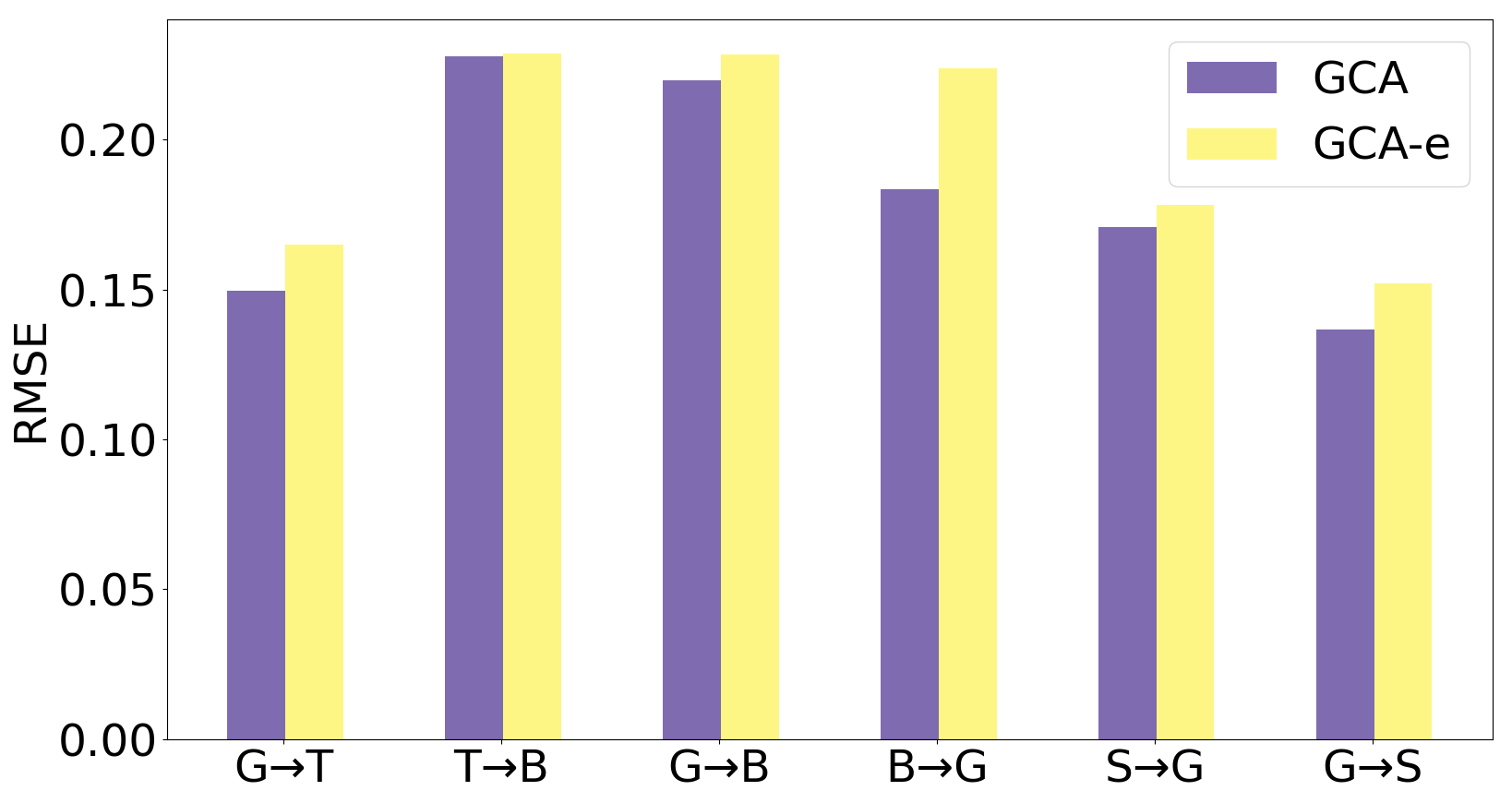}
	\caption{The Sample of three domains in the air quality forecasting dataset, we can find that the data are totally different, but they share the same summary graph. \textit{(best view in color.)}}
	\label{fig:data}
\end{figure}

Essentially, the proposed method is an autoregressive model, so all the variables are taken into consideration, which might lead to the performance degradation of the prediction performance on the target variable. In order to address this issue, we use an extra strengthen loss. As shown in Figure 9, the red and green bars respectively denotes the standard GCA method and the GCA-e method. We can find that (1) that the performance of the proposed GCA method is better than GCA-e, which reflects the effectiveness of the extra strengthen term, (2) that furthermore, the performance of GCA-e is comparable with the other baselines, which shows the stability of our method.

\begin{figure}[t]
\label{fig:alignment}
	\centering
	\includegraphics[width=0.9\columnwidth]{./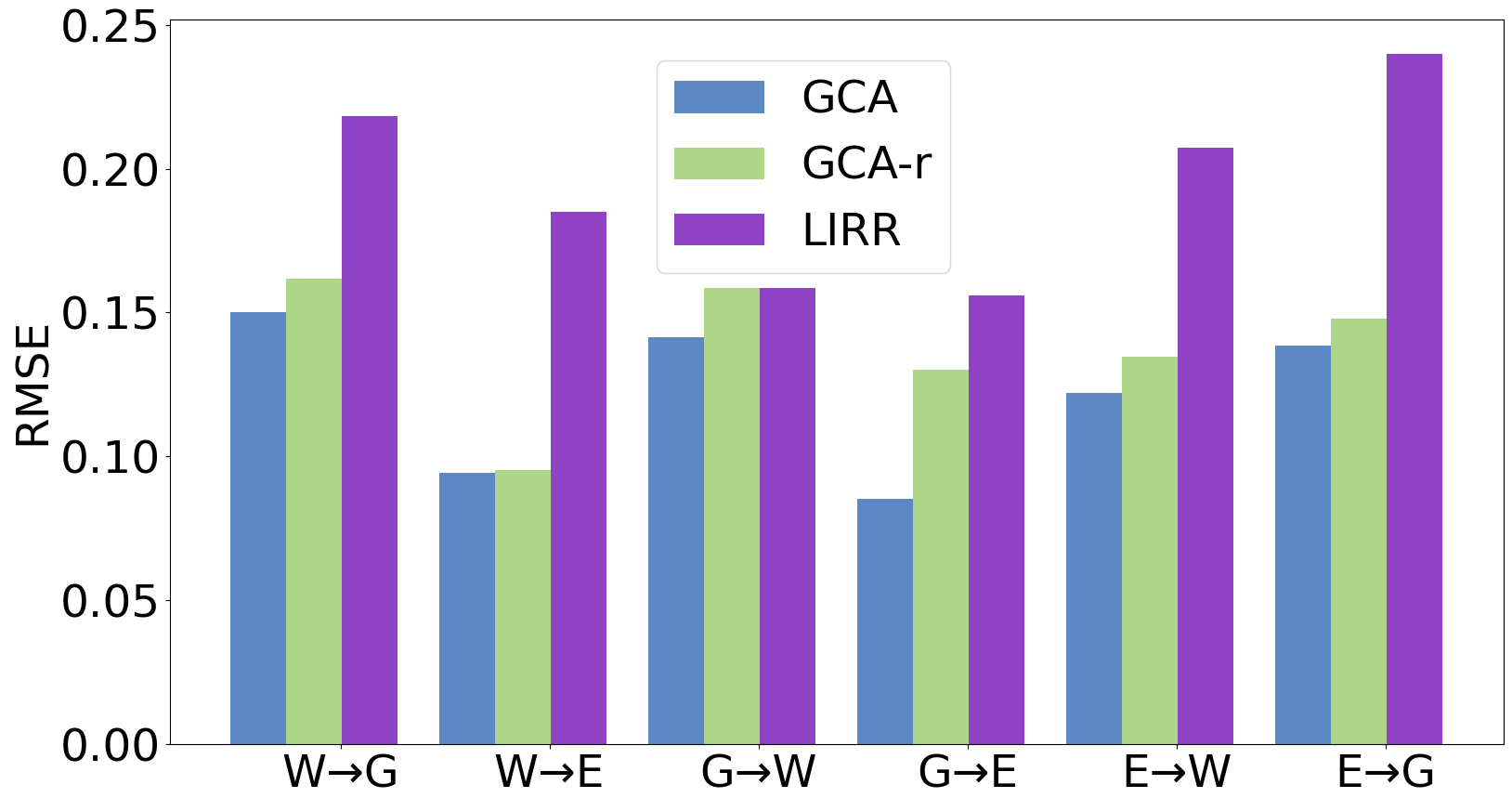}
	\caption{The Sample of three domains in the air quality forecasting dataset, we can find that the data are totally different, but they share the same summary graph. \textit{(best view in color.)}}
	\label{fig:gca_r}
\end{figure}

\subsubsection{The study of the Effectiveness of the Granger Causality Alignment.} To evaluate the effectiveness of the Granger Causality alignment, we remove the Granger Causality Discrepancy regularization term $\mathcal{L}_r$ and devise the variant model named GCA-r. According to the experiment result shown in Figure \ref{fig:gca_r}, we can find that the performance of GCA-r is almost better than that of LIRR. This is because the GCA-r model can leverage the source domain data and the limited labeled target domain data to reconstruct and predict the future value, which shows that the inference process with Granger Causality can exclude the obstruction of redundant information. We also see that the performance of GCA-r is slightly lower than that of GCA, which is because the small size of target domain data and the difference between the source and the target domain lead to the diversity and the inaccuracy Granger Causality of the target domain, further leading to the degeneration of the experiment results. In fact, The restriction of the discrepancy of the Granger Causality plays a key role in transferring the domain-invariant module from the source to the target domain.

\textcolor{black}{\subsubsection{Comparing the Multivariate Prediction and the Univariate Prediction}}
\textcolor{black}{In the scenario of only forecasting the object univariate time-series without interpretability, we devise the \textbf{GCA-s}, and the experiment results on the Air Quality Forecasting dataset are shown in Table 7. The experiment results suggest the following observations: \textcolor{black}{(1) The experiment results of all the model variants outperform the other baselines, which also proves the advantages of our method.} (2) Compared the experiment result between \textbf{GAC-e} and \textbf{GCA-s}, we can find that \textbf{GCA-s} performs better than \textbf{GCA-e}, which shows that simultaneously optimizing the other variables will hinder achieving the best results. (3) Compared to the experiment results between \textbf{GCA} and \textbf{GCA-s}, we can find that the experiment results of these two methods are quite comparable, which reflects that the extra strengthen loss can mitigate the side-effect of simultaneously regressing all variables.}

\textcolor{black}{
\begin{table}
	\centering
	\label{tab:gca-s}
	\caption{\textcolor{black}{The MAE and RMSE on simulated datasets for the different model variants. The values presented are averaged over 5 replicated with different random seeds. The standard deviation is in the subscript.}}
	\scalebox{0.8}{
	\begin{tabular}{c|c|ccc}
	    \toprule
	    Task & Metric & GCA & GCA-s & GCA-e\\
		\midrule
		\multirow{2}*{$B\rightarrow T$} & RMSE & $0.1684_{\pm 0.0100}$ & $\bm{0.1663_{\pm 0.0036}}$ & $0.1685_{\pm 0.0080}$ \\
		~ & MAE & $0.2669_{\pm 0.0127}$ & $\bm{0.2532_{\pm 0.0009}}$ & $0.2682_{\pm 0.0082}$ \\
		\midrule
		\multirow{2}*{$G\rightarrow T$} & RMSE & $0.1497_{\pm 0.0128}$ & $\bm{0.1452_{\pm 0.0063}}$ & $0.1500_{\pm 0.0103}$ \\
		~ & MAE & $0.2618_{\pm 0.0155}$ & $\bm{0.2436_{\pm 0.0047}}$ & $0.2629_{\pm 0.0130}$\\
		\midrule
		\multirow{2}*{$S\rightarrow T$} & RMSE & $\bm{0.1673_{\pm 0.0040}}$ & $\bm{0.1670_{\pm 0.0043}}$ & $0.1714_{\pm 0.0064}$ \\
		~ & MAE & $\bm{0.2639_{\pm 0.0040}}$ & $\bm{0.2626_{\pm 0.0077}}$ & $0.2671_{\pm 0.0088}$\\
		\midrule
		\multirow{2}*{$T\rightarrow B$} & RMSE & $\bm{0.2276_{\pm 0.0055}}$ & $\bm{0.2276_{\pm 0.0050}}$ & $0.2337_{\pm 0.0084}$ \\
		~ & MAE & $\bm{0.2841_{\pm 0.0087}}$ & $\bm{0.2817_{\pm 0.0065}}$ & $0.2846_{\pm 0.0106}$\\
		\midrule
		\multirow{2}*{$G\rightarrow B$} & RMSE & $\bm{0.2169_{\pm 0.0056}}$ & $\bm{0.2150_{\pm 0.0068}}$ & $0.2273_{\pm 0.0135}$ \\
		~ & MAE & $0.2844_{\pm 0.0058}$ & $\bm{0.2731_{\pm 0.0087}}$ & $0.2922_{\pm 0.0125}$\\
		\midrule
		\multirow{2}*{$S\rightarrow B$} & RMSE & $\bm{0.2312_{\pm 0.0053}}$ & $\bm{0.2314_{\pm 0.2801}}$ & $0.2318_{\pm 0.0046}$ \\
		~ & MAE & $\bm{0.2802_{\pm 0.0056}}$ & $\bm{0.2901_{\pm 0.0043}}$ & $0.2807_{\pm 0.0050}$\\
		\midrule
		\multirow{2}*{$B\rightarrow G$} & RMSE & $0.1996_{\pm 0.0046}$ & $\bm{0.1969_{\pm 0.0060}}$ & $0.2013_{\pm 0.0052}$ \\
		~ & MAE & $\bm{0.2999_{\pm 0.0114}}$ & $\bm{0.2983_{\pm 0.0075}}$ & $0.3024_{\pm 0.0128}$\\
		\midrule
		\multirow{2}*{$T\rightarrow G$} & RMSE & $0.1845_{\pm 0.0110}$ & $\bm{0.1796_{\pm 0.0135}}$ & $0.1891_{\pm 0.0136}$ \\
		~ & MAE & $\bm{0.2819_{\pm 0.0166}}$ & $\bm{0.2808_{\pm 0.0075}}$ & $0.2991_{\pm 0.0195}$\\
		\midrule
		\multirow{2}*{$S\rightarrow G$} & RMSE & $\bm{0.1769_{\pm 0.0112}}$ & $\bm{0.1780_{\pm 0.0032}}$ & $0.1871_{\pm 0.0119}$ \\
		~ & MAE & $\bm{0.2887_{\pm 0.0138}}$ & $\bm{0.2880_{\pm 0.0059}}$ & $0.2923_{\pm 0.0106}$\\
		\midrule
		\multirow{2}*{$B\rightarrow S$} & RMSE & $\bm{0.1558_{\pm 0.0058}}$ & $\bm{0.1568_{\pm 0.0032}}$ & $0.1570_{\pm 0.0045}$ \\
		~ & MAE & $\bm{0.2640_{\pm 0.0095}}$ & $\bm{0.2640_{\pm 0.0062}}$ & $0.2652_{\pm 0.0151}$\\
		\midrule
		\multirow{2}*{$T\rightarrow S$} & RMSE & $0.1536_{\pm 0.0172}$ & $\bm{0.1440_{\pm 0.0065}}$ & $0.1543_{\pm 0.0174}$ \\
		~ & MAE & $0.2640_{\pm 0.0025}$ & $\bm{0.2532_{\pm 0.0095}}$ & $0.2683_{\pm 0.0231}$\\
		\midrule
		\multirow{2}*{$G\rightarrow S$} & RMSE & $\bm{0.1360_{\pm 0.0121}}$ & $\bm{0.1356_{\pm 0.0035}}$ & $0.1359_{\pm 0.0082}$ \\
		~ & MAE & $0.2574_{\pm 0.0107}$ & $\bm{0.2545_{\pm 0.0085}}$ & $0.2625_{\pm 0.0106}$\\
		\midrule
		\multirow{2}*{Average} & RMSE & $\bm{0.1810}$ & $\bm{0.1783}$ & $0.1839$ \\
		~ & MAE & $\bm{0.2750}$ & $\bm{0.2704}$ & $0.2788$\\
		\bottomrule
	\end{tabular}}
\end{table}}

\subsubsection{\textcolor{black}{The Study of the Effectiveness of the Trainable Structural Domain Latent Variables $\alpha^*$}}

\textcolor{black}{To evaluate the effectiveness of the trainable structural domain latent variables $\alpha^*$, we further add an extra domain with a time lag of 10 to the simulation data. Experiment results are shown in Figures 11 and 12. According to the experiment results, we can draw the following conclusions: 1) According to the experiment results in Figure 11, we can find that the performance of the standard GCA is better than the GCA-$\alpha$, which means that the trainable structural domain latent variables can mitigate the influence of different time lags. 2) According to Figure 11, we also find that the standard GCA achieves a higher score of AUPRC than the GCA-$\alpha$, which shows that the better reconstruction of causal structures can benefit the model performance and the reconstruction of causal structures also draws advantages from the trainable structural domain latent variables. }

\begin{figure}[t]
	\centering
	\includegraphics[width=0.9\columnwidth]{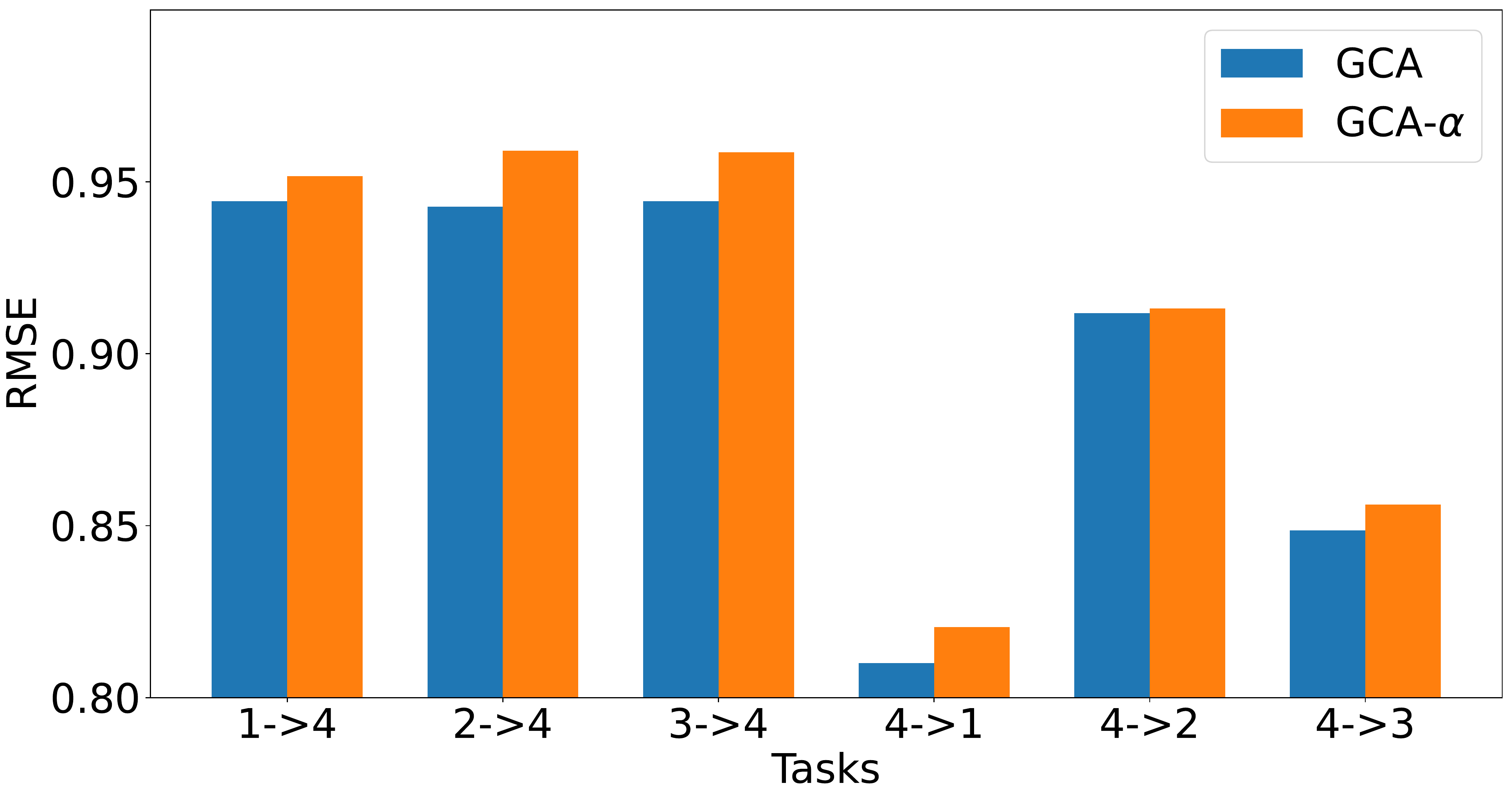}
	\caption{\textcolor{black}{The RMSE results of the standard GCA and the GCA-$\alpha$ between an extra domain with a time lag of 10 and domains with a time lag of 5. The lower the RMSE, the better the model performance.}}
	\label{fig:gca_alpha_rmse}
\end{figure}
\begin{figure}[t]
\label{fig:alpha_auprc}
	\centering
	\includegraphics[width=0.9\columnwidth]{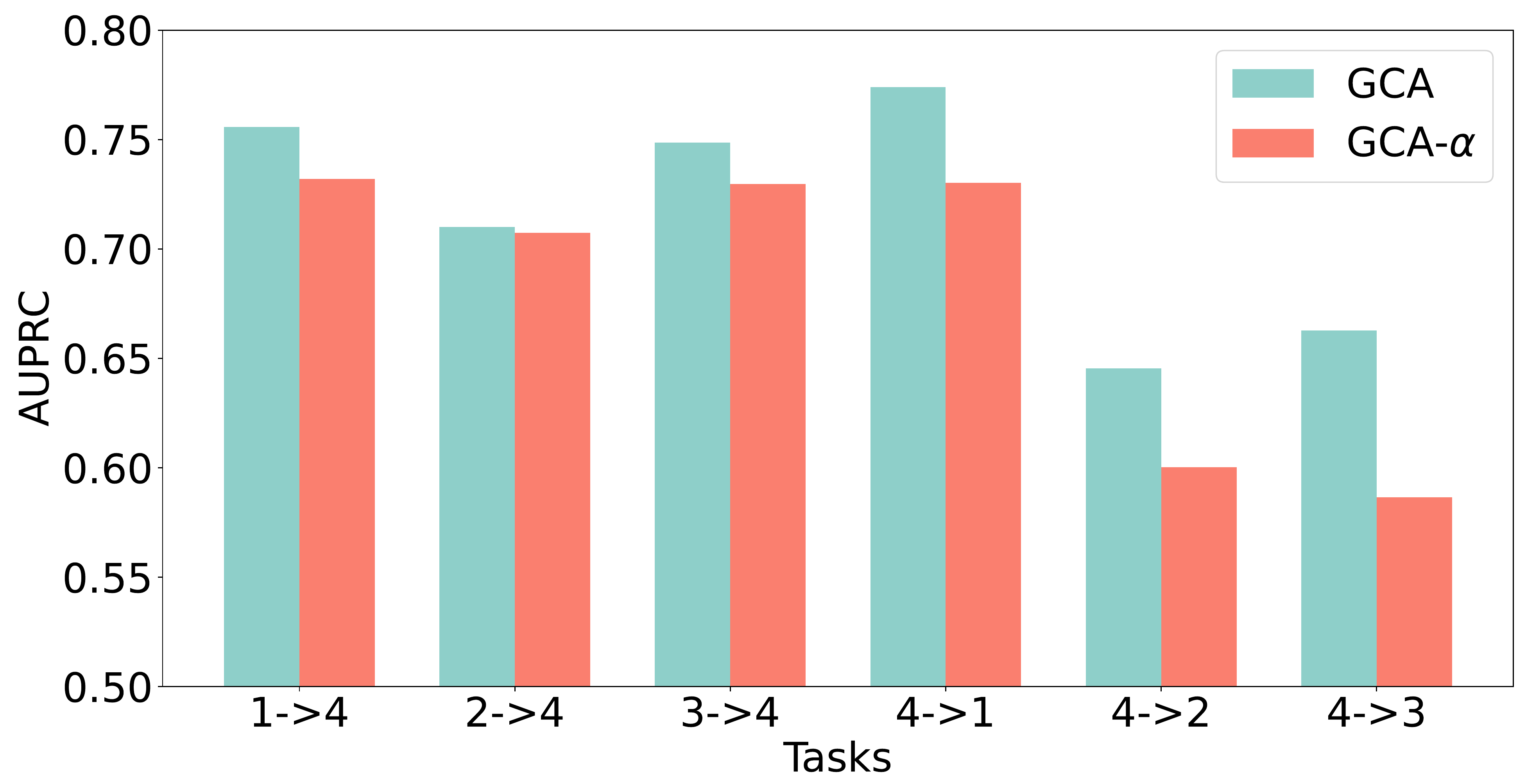}
	\caption{\textcolor{black}{The AUPRC results of the standard GCA and the GCA-$\alpha$ between an extra domain with a time lag of 10 and domains with a time lag of 5. The higher the RMSE, the better the model performance.}}
	\label{fig:gca_alpha_auprc}
\end{figure}

\subsubsection{\textcolor{black}{Sensitive Analysis of Hyper-parameters}} 

\begin{figure}[t]
\centering
\label{fig:exp1}
\subfigure[\textcolor{black}{The RMSE performance of the proposed GCA model with different valuses $\lambda$ with the range $(0.0 \sim 0.14)$}.]{
\begin{minipage}[t]{\linewidth}
\centering
\includegraphics[width=1.0\columnwidth]{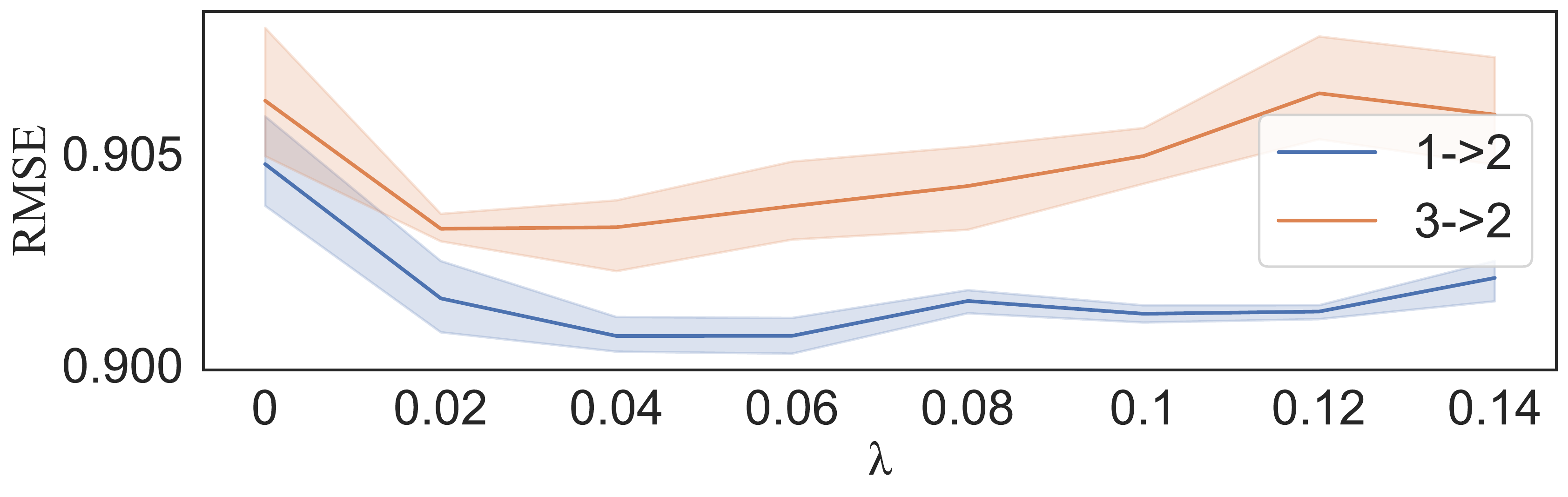}
\end{minipage}%
}%

\subfigure[\textcolor{black}{The RMSE performance of the proposed GCA on the Air Quality Forecasting dataset with different values of $\gamma$ with the range $0.0 \sim 3.5$}]{
\begin{minipage}[t]{\linewidth}
\includegraphics[width=\columnwidth]{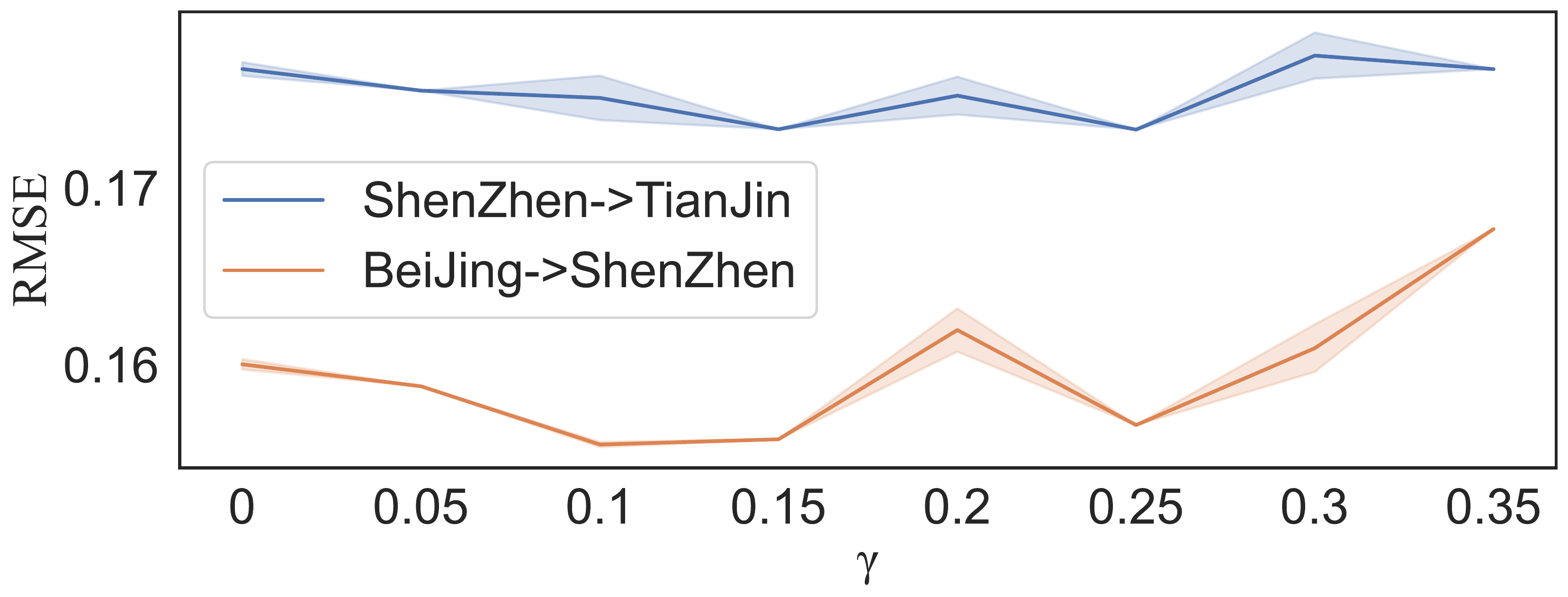}
\end{minipage}%
}%
\centering
\caption{\textcolor{black}{The experiment results of the sensitive analysis of the hyper-parameters $\lambda$ and $\gamma$. (a) The RMSE results of the standard GCA with different values of $\lambda$. (b) The RMSE results of the standard GCA with different values of $\gamma$.}}
\end{figure}

\textcolor{black}{To explore the importance of $\lambda$ and $\gamma$ in Equation (\ref{equ:loss2}), we conduct experiments for the sensitivity of these hyper-parameters. }

\textcolor{black}{First, to explore the importance of the $\lambda$, we try different values of $\lambda$, and the experiment results of task $1\rightarrow 2$ and $3\rightarrow 2$ are shown in Figure 13 (a). According to the experiment results, we can draw the following conclusions: 1) Our GCA method outperforms the other compared methods under different values of $alpha$, which reflects the stability of our method. 2) When the value of $\alpha$ is around $0.4$, the proposed GCA achieve the best results, this is because an appropriate hyperparameter can lead to reasonable sparsity, which benefits the reconstruction of Granger Causal structures and finally results in the ideal model performance.}

\textcolor{black}{We further evaluate how the different value of $\gamma$ affect the model performance. To achieve this, we try different values of $\gamma$, and the experiment results of task ShenZhen $\rightarrow$ TianJin and Beijing $\rightarrow$ ShenZhen are shown in Figure 13 (b). According to the experiment results, we can draw the following conclusions: 1) Compared with the experiment results of GCA ($\gamma=0$) and GCA ($\gamma=0.15$), we can find that the GCA model performs better with $\gamma=0.15$, which shows that the effectiveness of the causal structure discrepancy regularization term. 2) Note that the GCA-$\alpha$ still performs better than several baselines, this is because the frameworks of GCA can reconstruct the rough structures and the trainable domain-sensitive latent variables can also benefit the model prediction. 3) With the increase of the value of $\gamma$, the performance drops. This is because the too-heavy penalty from the causal structure discrepancy regularization term might lead to the loss of domain-specific information, which further influences the model generalization. }

\subsubsection{\textcolor{black}{The study of missing value processing.}}

\textcolor{black}{We further provide the robustness analysis of missing data. In detail, we randomly mask the data of the Heart Rate Forecasting dataset with the ratio of $0\%$, $10\%$, and $20\%$. Experiment results on several transfer tasks are shown in Figures \ref{fig:missing_rmse} and \ref{fig:missing_mae}. According to the experiment results, we can find that the performance of GCA drops as the increment of the missing ratio. This is because it is hard the reconstruct the accurate Granger causal structures with the incomplete dataset. And the suboptimal Granger causal structures further lead to worse forecasting performance.} 

\begin{figure}[t]
\label{fig:alpha_auprc}
	\centering
	\includegraphics[width=0.9\columnwidth]{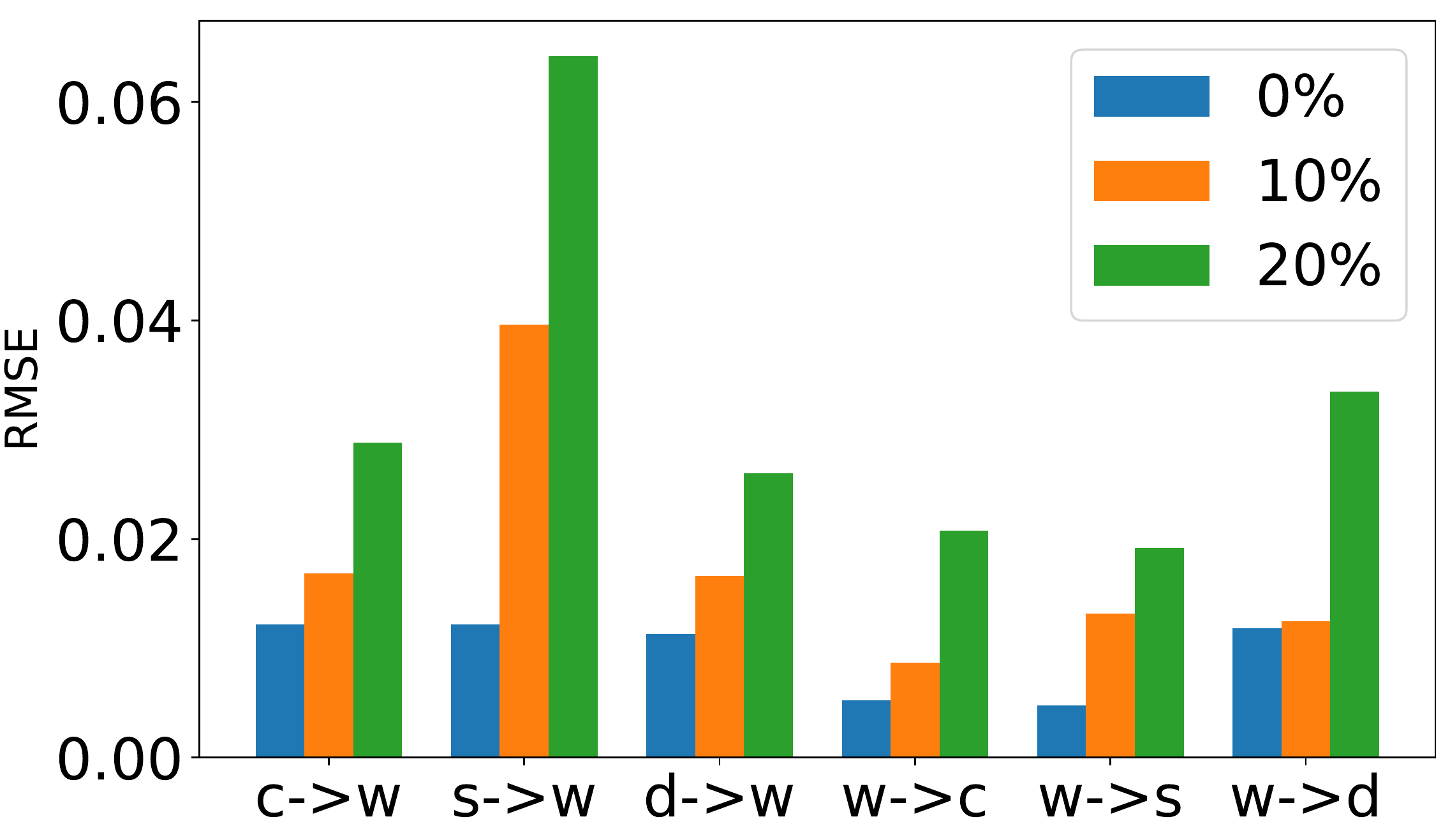}
	\caption{\textcolor{black}{The RMSE results of the proposed GCA on Heart Rate Forecasting dataset with different the missing ratio of $0\%$, $10\%$, $20\%$.}}
	\label{fig:missing_rmse}
\end{figure}

\begin{figure}[t]
	\centering
	\includegraphics[width=0.9\columnwidth]{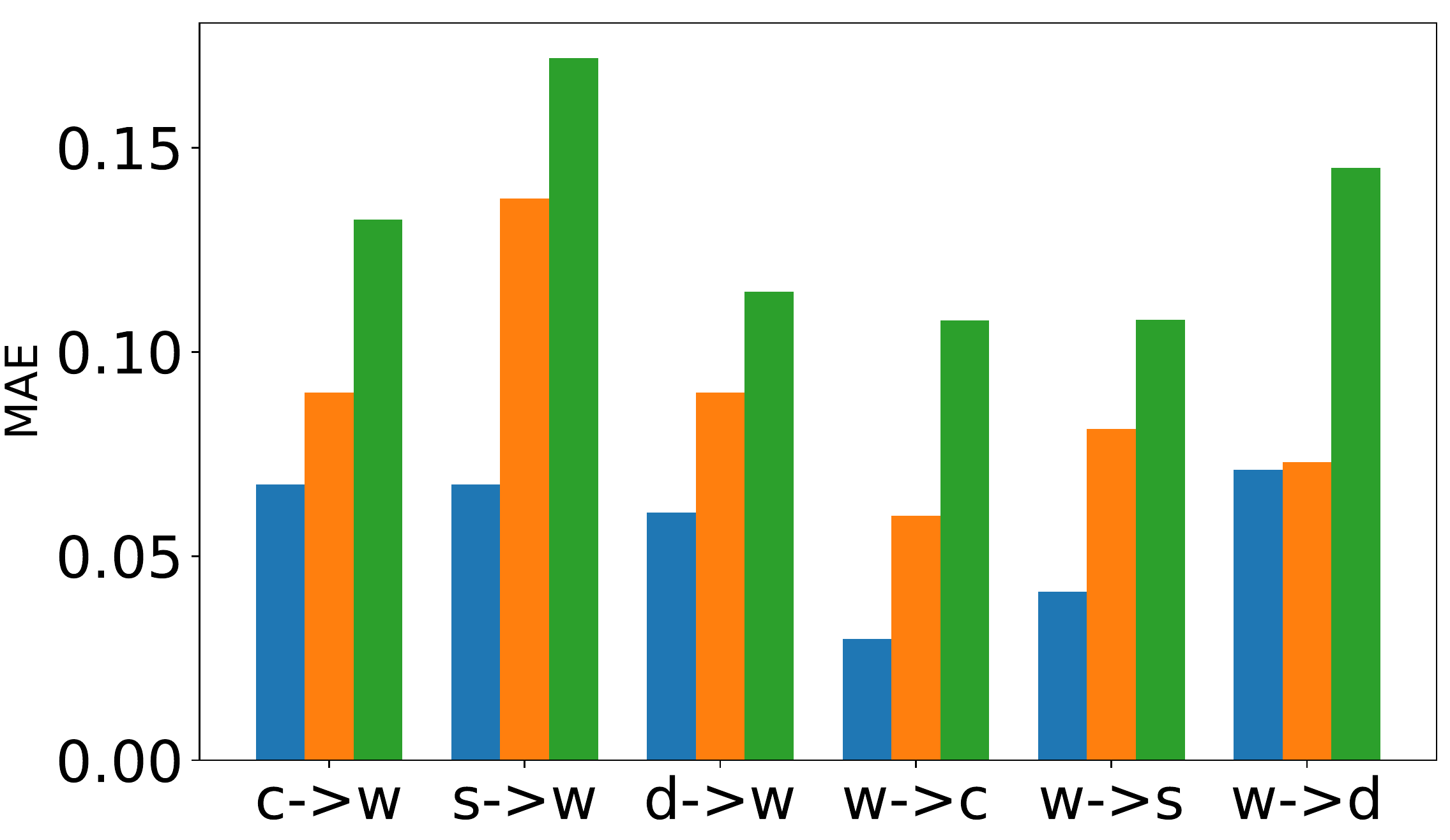}
	\caption{\textcolor{black}{The MAE results of the proposed GCA on Heart Rate Forecasting dataset with different the missing ratios of $0\%$, $10\%$, $20\%$.}}
	\label{fig:missing_mae}
\end{figure}

\subsubsection{Visualization and Interpretability}

\begin{figure}[t]
\label{fig:alignment}
	\centering
	\includegraphics[width=\columnwidth]{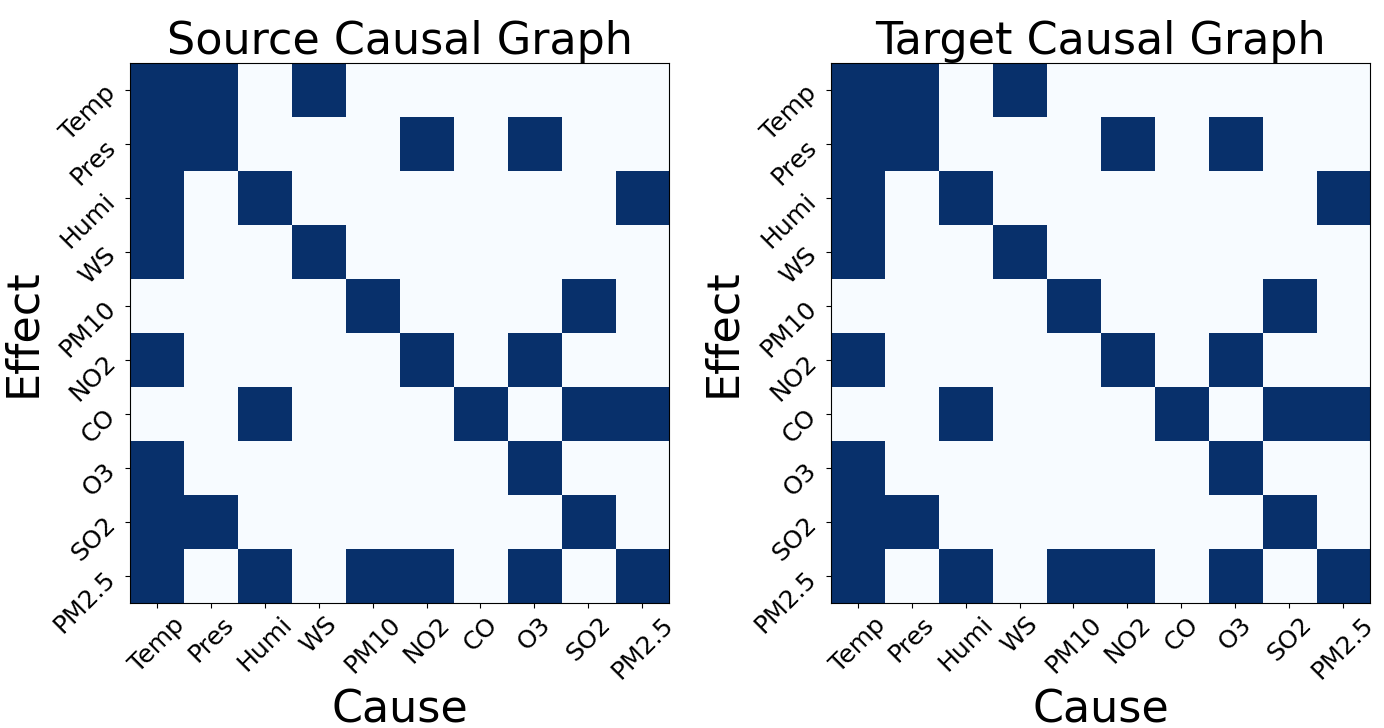}
	\caption{The illustration of visualization of the summary Granger-Causal structure of TianJin $\rightarrow$ BeiJing. The Blue blocks denote the Granger-causal relationship among different variables. Note that `Temp', `Pres', `Humi' and `WS' denote respectively `Temperature', `Pressure', `Humidity' and `Wind Speed'.}
	\label{fig:air_causal}
\end{figure}

\textbf{Visualization of the Air Quality Forecasting dataset.} To further investigate our approach, we extract the Granger-Causal structures and perform visualization over the air quality forecasting dataset. Figure 14 shows the Granger-causal structures whose task is TianJin $\rightarrow$ BeiJing. According to the visualization, we can find that the Granger-Causal relations are very sparse, which means that the PM2.5 is generated by a similar causal mechanism. The aforementioned visualization also provides some interpretability to a certain extent. For example, in Figure 14, we can find that the ``humidity'' and ``PM10\_Concentration'' simultaneously have an influence on the generation of PM2.5, while the other variables like ``pressure'' and ``Co'' have little influence on PM2.5. The results also provide the suggestion that we can mitigate the effect of PM2.5 by controlling the release of PM10, NO, and O3.

\begin{figure}[t]
\label{fig:motion}
\subfigure[GCA]{
\begin{minipage}[t]{0.48\linewidth}
\flushleft
\includegraphics[width=\columnwidth]{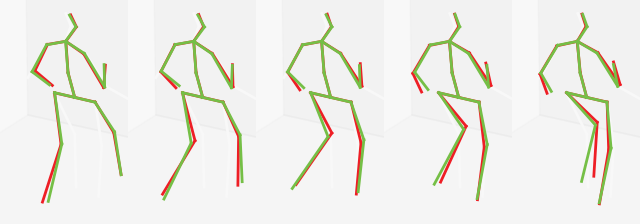}
\end{minipage}%
}%
\subfigure[SASA]{
\begin{minipage}[t]{0.48\linewidth}
\flushleft
\includegraphics[width=\columnwidth]{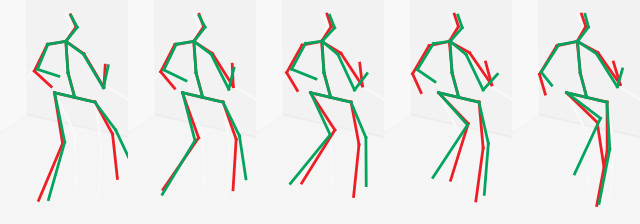}
\end{minipage}%
}%

\subfigure[LSTM\_S+T]{
\begin{minipage}[t]{0.48\linewidth}
\flushleft
\includegraphics[width=\columnwidth]{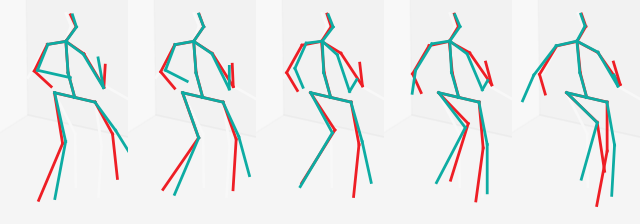}
\end{minipage}%
}%
\subfigure[R-DANN]{
\begin{minipage}[t]{0.48\linewidth}
\flushleft
\includegraphics[width=\columnwidth]{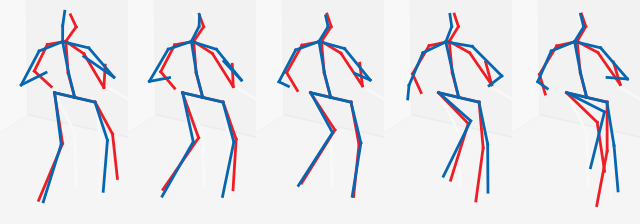}
\end{minipage}%
}%

\subfigure[RDC]{
\begin{minipage}[t]{0.48\linewidth}
\flushleft
\includegraphics[width=\columnwidth]{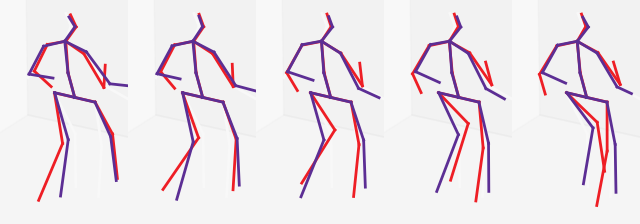}
\end{minipage}%
}%
\subfigure[VRADA]{
\begin{minipage}[t]{0.48\linewidth}
\flushleft
\includegraphics[width=\columnwidth]{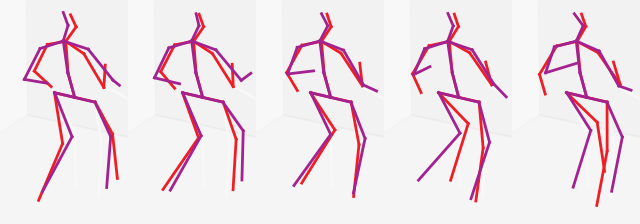}
\end{minipage}%
}%
\caption{The illustration of visualization of the human Motion forecasting of the task Guesting$\rightarrow$Walking. The red lines denote the ground truth motion and the other colors denote the different methods.}
\end{figure}

\textbf{Visualization of the Human Motion Forecasting dataset. }We further conduct visualization on the human motion forecasting dataset, and Figure 15 illustrates the walking motion 5 steps later. We can find that the visualized 3D motions of the proposed GCA method are consistent with the ground truth motions, while the other baselines are not. For some joints like the legs and the elbows, our method markedly surpasses the other methods, reflecting that the Granger Causality is beneficial to capture the relationship between different joints and results in better prediction. 

Note that the proposed method aims to leverage the Granger Causality to address the time-series domain adaptation problem, so we do not follow the same experiment setting of the standard human motion estimation works that forecast a long-term human motion but just predict the future a few steps (e.g., 5 steps).


\section{Conclusion}\label{conclusion}
In this paper, we present a Granger Causality Alignment method for semi-supervised time-series forecasting. By simultaneously reconstructing the Granger Causality and forecasting future values based on the Granger-causal structure, we well transfer domain knowledge via the \textcolor{black}{recurrent Granger Causality reconstruction and the domain-sensitive Granger Causality based prediction. We further leverage the Granger Causality discrepancy regularization to relax the causal conditional shift assumption.}. The proposed method not only addresses the semi-supervised domain adaptation problem on the time-series forecasting task but also provides insightful interpretability. 
\textcolor{black}{However, the proposed method needs to take the time lag as a prior or hyperparameter when we apply our method to real-world datasets. Therefore, exploring how to learn the Granger Causality with adaptive time lag would be a future direction.}

\ifCLASSOPTIONcompsoc
  \section*{Acknowledgments}
\else
  \section*{Acknowledgment}
\fi

The authors would like to thank Jie Qiao and Jiahao Li from the Guangdong University of Technology for their help and support in this work.

\bibliographystyle{IEEEtran}
\bibliography{gca_v3}


\ifCLASSOPTIONcaptionsoff
  \newpage
\fi

\clearpage

\end{document}


%
\title{Appendix for ``Transferable Time-Series Forecasting under Causal Conditional Shift''}
%
%
%
%

\author{Zijian Li,~\IEEEmembership{Member,~IEEE,}
        Ruichu Cai*,~\IEEEmembership{Member,~IEEE,}
        Tom Z. J. Fu, Zhifeng Hao and Kun Zhang
}
\onecolumn
%
%

\markboth{Journal of \LaTeX\ Class Files,~Vol.~14, No.~8, August~2015}%
{Shell \MakeLowercase{\textit{et al.}}: Bare Demo of IEEEtran.cls for Computer Society Journals}
%







\maketitle

\IEEEdisplaynontitleabstractindextext

%
{ 
\tableofcontents
}

\section{Proof}
\subsection{Proof of Theorem 1}
\begin{theorem}
    \textcolor{black}{Suppose that the maximum lag is $k$, $\ln P(z_t|z_{t-1}, z_{t-2}, \cdots, z_{t-k})$ can be derived as follows:}
\begin{equation}
\small
\color{black}
\label{equ:variation}
\begin{split}
    lnP(&\bm{z}_t|\bm{z}_{t-1}, \bm{z}_{t-2}, \cdots, \bm{z}_{t-k}) =\\&\mathbb{E}_{Q(A_1|\cdot)},\cdots\mathbb{E}_{Q(A_k|\cdot)}\left[\ln P(\bm{z}_t|\bm{z}_{t-1},\cdots,\bm{z}_{t-k},A_1,\cdots, A_k) + \ln \frac{P(A_1|\cdot)\prod_{j=2}^k P(A_j|\cdot)}{Q(A_1|\cdot)\prod_{j=2}^k Q(A_j|\cdot)}\right] + \sum_{j=1}^k{D_{KL}(Q(A_j|\cdot)||P(A_j|\cdot)})
\end{split}
\end{equation}
\textcolor{black}{in which $P(A_j|\cdot)=P(A_j|A_1,\cdots,A_{j-1},\bm{z}_{t-1},\cdots,\bm{z}_{t-k})$ and $Q(A_j|\cdot) = Q(A_j|A_1,\cdots,A_{j-1},\bm{z}_{t-1},\cdots,\bm{z}_{t-k})$ denote the prior distribution and the approximated distribution, respectively.}
\end{theorem}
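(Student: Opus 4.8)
The plan is to recognize the claimed expression as the exact evidence decomposition of a conditional log-likelihood whose latent variables $A_1,\dots,A_k$ have been marginalized out, and then to split the resulting posterior divergence into per-lag contributions by the chain rule for relative entropy. Throughout I abbreviate the history $\bm z_{t-1},\dots,\bm z_{t-k}$ by $\bm z_{<t}$ and write $A_{1:k}$ for the tuple $A_1,\dots,A_k$. I assume, consistently with the conditioning fixed in the statement, that both the prior and the variational family factorize autoregressively, i.e. $P(A_{1:k}\mid\cdot)=P(A_1\mid\cdot)\prod_{j=2}^{k}P(A_j\mid\cdot)$ and likewise for $Q$, with each $A_j$ conditioned on $A_1,\dots,A_{j-1},\bm z_{<t}$.

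First I would observe that $\ln P(\bm z_t\mid\bm z_{<t})$ does not depend on $A_{1:k}$, so it is left unchanged by the nested expectation $\mathbb{E}_{Q(A_1\mid\cdot)}\cdots\mathbb{E}_{Q(A_k\mid\cdot)}$. Inside that expectation I would rewrite the log-likelihood through Bayes' rule as $\ln P(\bm z_t\mid\bm z_{<t})=\ln P(\bm z_t\mid\bm z_{<t},A_{1:k})+\ln P(A_{1:k}\mid\bm z_{<t})-\ln P(A_{1:k}\mid\bm z_t,\bm z_{<t})$, and then add and subtract $\ln Q(A_{1:k}\mid\cdot)$ to manufacture exactly the two ratios appearing in the statement: the prior-over-variational ratio $\ln\big(P(A_{1:k}\mid\bm z_{<t})/Q(A_{1:k}\mid\cdot)\big)$ and the variational-over-posterior ratio $\ln\big(Q(A_{1:k}\mid\cdot)/P(A_{1:k}\mid\bm z_t,\bm z_{<t})\big)$.

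Taking the expectation under $Q$, the last ratio collapses by definition to $D_{KL}\big(Q(A_{1:k}\mid\cdot)\,\|\,P(A_{1:k}\mid\bm z_t,\bm z_{<t})\big)$, while the $\ln P(\bm z_t\mid\bm z_{<t},A_{1:k})$ term together with the prior-over-variational ratio assembles the bracketed expression. The key structural step is then to substitute the autoregressive factorizations: $\ln\big(P(A_{1:k}\mid\cdot)/Q(A_{1:k}\mid\cdot)\big)=\sum_{j=1}^{k}\ln\big(P(A_j\mid\cdot)/Q(A_j\mid\cdot)\big)$, which recovers the product ratio written inside the bracket, and, via the chain rule for relative entropy, $D_{KL}\big(Q(A_{1:k}\mid\cdot)\,\|\,P(A_{1:k}\mid\cdot)\big)=\sum_{j=1}^{k}\mathbb{E}_{Q(A_{1:j-1}\mid\cdot)}\big[D_{KL}\big(Q(A_j\mid\cdot)\,\|\,P(A_j\mid\cdot)\big)\big]$, which produces the final summation $\sum_{j=1}^{k}D_{KL}\big(Q(A_j\mid\cdot)\,\|\,P(A_j\mid\cdot)\big)$.

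I expect the main obstacle to be the bookkeeping in this chain-rule step: each factor $Q(A_j\mid\cdot)$ and $P(A_j\mid\cdot)$ carries the realized values $A_1,\dots,A_{j-1}$, so the per-lag summands are genuinely conditional divergences that must be averaged over the preceding latents under $Q$, and one must verify that these nested averages telescope into the $\mathbb{E}_{Q(A_1\mid\cdot)}\cdots\mathbb{E}_{Q(A_k\mid\cdot)}$ prefix of the statement without leaving cross terms. A second, more delicate point is tracking whether $\bm z_t$ enters the conditioning of the divergence summands: the decomposition is exact only when the chain rule is applied to the \emph{same} joint on both sides, so I would check carefully that the final KL terms are taken against the posterior factors $P(A_j\mid A_{1:j-1},\bm z_t,\bm z_{<t})$ rather than the prior factors, reconciling this with the conditioning notation declared after the statement.
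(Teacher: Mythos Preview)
Your proposal is correct and mirrors the paper's proof exactly: Bayes' rule to separate the likelihood and the prior-over-posterior ratio, insertion of $\ln Q$, expectation under $Q$, and then the autoregressive chain-rule split of the resulting KL. Your closing caveat is well placed --- in the paper's own derivation the KL summands are indeed taken against the posterior factors $P(A_j\mid A_{1:j-1},\bm z_t,\bm z_{<t})$, even though the shorthand $P(A_j\mid\cdot)$ declared after the statement omits $\bm z_t$.
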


\begin{proof}
\textcolor{black}{The proof of the derivation of the logarithm of the joint likelihood $\ln P(z_t|z_{t-1}, z_{t-2}, \cdots, z_{t-k})$ is composed of two steps, which are shown as follows.} 

\noindent \textcolor{black}{First, We factorize the conditional distribution according to the Bayes theorem.}
\begin{equation}
\small
\color{black}
\begin{split}
& \ln P(\bm{z}_t|\bm{z}_{t-1},\bm{z}_{t-2}, \cdots, \bm{z}_{t-k})=\ln \frac{P(\bm{z}_t, A_1, A_2, \cdots, A_k|\bm{z}_{t-1},\bm{z}_{t-2},\cdots,\bm{z}_{t-k})}{P(A_1, A_2, \cdots, A_k|\bm{z}_{t},\bm{z}_{t-1},\cdots,\bm{z}_{t-k})}\\
=& \ln P(\bm{z}_t|\bm{z}_{t-1}, \cdots, \bm{z}_{t-k},A_1, \cdots, A_k) + ln\frac{P(A_1|\bm{z}_{t-1}, \cdots, \bm{z}_{t-k})\prod_{j=2}^{k}P(A_j|A_1,\cdots,A_{j-1},\bm{z}_{t-1},\cdots,\bm{z}_{t-k})}{P(A_1|\bm{z}_{t},\!\cdots,\bm{z}_{t-k})\prod_{j=2}^{k}P(A_j|A_1,\cdots,A_{j-1},\bm{z}_t,\bm{z}_{t-1},\cdots,\bm{z}_{t-k})}
\end{split}
\end{equation}

\noindent \textcolor{black}{Second, we add the expectation operator on both sides of the equation and reformalize the equation as follows:}
\begin{equation}
\color{black}
\begin{split}
\small
\ln{P(\bm{z}_t|\bm{z}_{t-1},\bm{z}_{t-2},\cdots,\bm{z}_{t-k})}=\mathbb{E}_{Q(A_1|\bm{z}_{t-1},\cdots,\bm{z}_{t-k})}\cdots \mathbb{E}_{Q(A_k|\bm{z}_{t-1},\cdots,\bm{z}_{t-k},A_1,\cdots,A_{k-1})}\ln{P(\bm{z}_t|\bm{z}_{t-1},\bm{z}_{t-2},\cdots,\bm{z}_{t-k})}.
\end{split}
\end{equation}
\textcolor{black}{By combining Equation (4) with Equation (3), we can obtain Equation (5) as follows:}
\begin{equation}
\color{black}
\small
\begin{split}
&\mathbb{E}_{Q(A_1|\bm{z}_{t-1},\cdots,\bm{z}_{t-k})}\cdots \mathbb{E}_{Q(A_k|\bm{z}_{t-1},\cdots,\bm{z}_{t-k},A_1,\cdots,A_{k-1})}\ln{P(\bm{z}_t|\bm{z}_{t-1},\bm{z}_{t-2},\cdots,\bm{z}_{t-k})}\\
=&\mathbb{E}_{Q(A_1|\bm{z}_{t-1},\cdots,\bm{z}_{t-k})}\cdots \mathbb{E}_{Q(A_k|\bm{z}_{t-1},\cdots,\bm{z}_{t-k},A_1,\cdots,A_{k-1})} \Bigg[\ln P(\bm{z}_t|\bm{z}_{t-1}, \cdots, \bm{z}_{t-k},A_1, \cdots, A_k).
\\&+ \left. \ln \frac{P(A_1|\bm{z}_{t-1}, \cdots,\bm{z}_{t-k})\prod_{j=2}^k P(A_j|A_1,\cdots,A_{j-1},\bm{z}_{t-1},\cdots,\bm{z}_{t-k})}{P(A_1|\bm{z}_{t},\cdots,\bm{z}_{t-k})\prod_{j=2}^k P(A_j|A_1, \cdots,A_{j-1},\bm{z}_{t},\cdots,\bm{z}_{t-k})} \right.
\\&+ \left.\ln \frac{Q(A_1|\bm{z}_{t-1},\cdots,\bm{z}_{t-k})\prod_{j-2}^k Q(A_j|A_1,\cdots,A_{j-1},\bm{z}_{t-1},\cdots,\bm{z}_{t-k})}{Q(A_1|\bm{z}_{t-1},\cdots,\bm{z}_{t-k})\prod_{j-2}^k Q(A_j|A_1,\cdots,A_{j-1},\bm{z}_{t-1},\cdots,\bm{z}_{t-k})}\right]\\
=&
\mathbb{E}_{Q(A_1|\bm{z}_{t-1},\cdots,\bm{z}_{t-k})}\cdots \mathbb{E}_{Q(A_k|\bm{z}_{t-1},\cdots,\bm{z}_{t-k},A_1,\cdots,A_{k-1})}\left[\ln P(\bm{z}_t|\bm{z}_{t-1}, \cdots, \bm{z}_{t-k},A_1, \cdots, A_k) \right. \\&+ \left. \ln \frac{P(A_1|\bm{z}_{t-1},\cdots,\bm{z}_{t-k})\prod_{j=2}^k P(A_j|A_1,\cdots,A_{j-1},\bm{z}_{t-1},\cdots,\bm{z}_{t-k})}{Q(A_1|\bm{z}_{t-1},\cdots,\bm{z}_{t-k})\prod_{j=2}^k Q(A_j|A_1,\cdots,A_{j-1},\bm{z}_{t-1},\cdots,\bm{z}_{t-k})} \right. 
\\&+ \left. \ln \frac{Q(A_1|\bm{z}_{t-1},\cdots,\bm{z}_{t-k})\prod_{j=2}^k Q(A_j|A_1,\cdots,A_{j-1},\bm{z}_{t-1},\cdots,\bm{z}_{t-k})}{P(A_1|\bm{z}_{t-1},\cdots,\bm{z}_{t-k})\prod_{j=2}^k P(A_j|A_1,\cdots,A_{j-1}, \bm{z}_t,\bm{z}_{t-1},\cdots,\bm{z}_{t-k})}  \right],
\\=&\mathbb{E}_{Q(A_1|\bm{z}_{t-1},\cdots,\bm{z}_{t-k})}\cdots \mathbb{E}_{Q(A_k|\bm{z}_{t-1},\cdots,\bm{z}_{t-k},A_1,\cdots,A_{k-1})}\left[\ln P(\bm{z}_t|\bm{z}_{t-1}, \cdots, \bm{z}_{t-k},A_1, \cdots, A_k) \right. \\&+ \left. \ln \frac{P(A_1|\bm{z}_{t-1},\cdots,\bm{z}_{t-k})\prod_{j=2}^k P(A_j|A_1,\cdots,A_{j-1},\bm{z}_{t-1},\cdots,\bm{z}_{t-k})}{Q(A_1|\bm{z}_{t-1},\cdots,\bm{z}_{t-k})\prod_{j=2}^k Q(A_j|A_1,\cdots,A_{j-1},\bm{z}_{t-1},\cdots,\bm{z}_{t-k})} \right] + \sum_{j=1}^k{D_{KL}(Q_j||P_j}),
\\=&\mathbb{E}_{Q(A_1|\cdot)},\cdots\mathbb{E}_{Q(A_k|\cdot)}\left[\ln P(\bm{z}_t|\bm{z}_{t-1},\cdots,\bm{z}_{t-k},A_1,\cdots, A_k) + \ln \frac{P(A_1|\cdot)\prod_{j=2}^k P(A_j|\cdot)}{Q(A_1|\cdot)\prod_{j=2}^k Q(A_j|\cdot)}\right] + \sum_{j=1}^k{D_{KL}(Q(A_j|\cdot)||P(A_j|\cdot)}),
\end{split}
\end{equation}

\textcolor{black}{in which $P(A_j|\cdot)=P(A_j|A_1,\cdots,A_{j-1},\bm{z}_{t-1},\cdots,\bm{z}_{t-k})$ and $Q(A_j|\cdot) = Q(A_j|A_1,\cdots,A_{j-1},\bm{z}_{t-1},\cdots,\bm{z}_{t-k})$.}

\end{proof}

\subsection{Proof of Corollary 1.1}
\begin{corollary}
\textcolor{black}{Assuming that the maximum lag is $k$, we can model the logarithm of the joint likelihood $\ln P(z_t|z_{t-1}, z_{t-2}, \cdots, z_{t-k})$ by maximizing the evidence lower bound (\textit{ELBO}) as shown as follows:}
\begin{equation}
\color{black}
\begin{split}
\mathcal{L}_{\textit{ELBO}}=&\mathbb{E}_{Q(A_1|\bm{z}_{t-1},\cdots,\bm{z}_{t-k})}\cdots \mathbb{E}_{Q(A_k|\bm{z}_{t-1},\cdots,\bm{z}_{t-k},A_1,\cdots,A_{k-1})}\ln P(\bm{z}_t|\bm{z}_{t-1}, \cdots, \bm{z}_{t-k},A_1, \cdots, A_k) \\& - D_{KL}(Q(A_1|\cdot)||P(A_1|\cdot)) -\sum_{j=2}^k \mathbb{E}_{Q(A_1|\cdot)},\cdots\mathbb{E}_{Q(A_k|\cdot)} D_{KL}(Q(A_j|\cdot)||P(A_j|\cdot)),
\end{split}
\end{equation}
\begin{proof}
\textcolor{black}{According to Theorem 1, the logarithm of the joint likelihood can be derived into the expectation term and the KL-divergence term $\sum_{j=1}^k{D_{KL}(Q(A_j|\cdot)||P(A_j|\cdot)})$. Since the  $\sum_{j=1}^k{D_{KL}(Q(A_j|\cdot)||P(A_j|\cdot)})\geq 0$, we can obtain:}
\begin{equation}
\color{black}
\begin{split}
    lnP(&\bm{z}_t|\bm{z}_{t-1}, \bm{z}_{t-2}, \cdots, \bm{z}_{t-k}) =\\&\mathbb{E}_{Q(A_1|\cdot)},\cdots\mathbb{E}_{Q(A_k|\cdot)}\left[\ln P(\bm{z}_t|\bm{z}_{t-1},\cdots,\bm{z}_{t-k},A_1,\cdots, A_k) + \ln \frac{P(A_1|\cdot)\prod_{j=2}^k P(A_j|\cdot)}{Q(A_1|\cdot)\prod_{j=2}^k Q(A_j|\cdot)}\right] + \sum_{j=1}^k{D_{KL}(Q(A_j|\cdot)||P(A_j|\cdot)}) \\\geq& \mathbb{E}_{Q(A_1|\cdot)},\cdots\mathbb{E}_{Q(A_k|\cdot)}\left[\ln P(\bm{z}_t|\bm{z}_{t-1},\cdots,\bm{z}_{t-k},A_1,\cdots, A_k) + \ln \frac{P(A_1|\cdot)\prod_{j=2}^k P(A_j|\cdot)}{Q(A_1|\cdot)\prod_{j=2}^k Q(A_j|\cdot)}\right]\\=&\mathbb{E}_{Q(A_1|\cdot)},\cdots\mathbb{E}_{Q(A_k|\cdot)}\ln P(\bm{z}_t|\bm{z}_{t-1}, \cdots, \bm{z}_{t-k},A_1, \cdots, A_k) - D_{KL}(Q(A_1|\cdot)||P(A_1|\cdot)) \\&-\sum_{j=2}^k \mathbb{E}_{Q(A_1|\cdot)},\cdots\mathbb{E}_{Q(A_k|\cdot)} D_{KL}(Q(A_j|\cdot)||P(A_j|\cdot))=\mathcal{L}_{\textit{ELBO}},
\end{split}
\end{equation}
\textcolor{black}{in which $P(A_j|\cdot)=P(A_j|A_1,\cdots,A_{j-1},\bm{z}_{t-1},\cdots,\bm{z}_{t-k})$ and $Q(A_j|\cdot) = Q(A_j|A_1,\cdots,A_{j-1},\bm{z}_{t-1},\cdots,\bm{z}_{t-k})$. Hence, by maximizing Equation (5), we can model the logarithm of the joint likelihood $\ln P(z_t|z_{t-1}, z_{t-2}, \cdots, z_{t-k})$. }
\end{proof}
\end{corollary}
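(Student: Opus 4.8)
The plan is to derive the corollary directly from the exact decomposition established in Theorem 1, using only the non-negativity of the Kullback--Leibler divergence together with a routine rearrangement of the nested-expectation term into a sum of conditional KL divergences. No new machinery is needed; the corollary is essentially a consequence of reading Theorem 1 as an ELBO.

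First I would invoke Theorem 1 to write $\ln P(\bm{z}_t|\bm{z}_{t-1},\cdots,\bm{z}_{t-k})$ as the sum of the nested-expectation term and the aggregate divergence $\sum_{j=1}^k D_{KL}(Q(A_j|\cdot)\|P(A_j|\cdot))$. Since each summand satisfies $D_{KL}(Q(A_j|\cdot)\|P(A_j|\cdot)) \geq 0$, dropping this non-negative sum yields the lower bound $\ln P(\bm{z}_t|\cdot) \geq \mathbb{E}_{Q(A_1|\cdot)}\cdots\mathbb{E}_{Q(A_k|\cdot)}\left[\ln P(\bm{z}_t|\cdots,A_1,\cdots,A_k) + \ln \frac{P(A_1|\cdot)\prod_{j=2}^k P(A_j|\cdot)}{Q(A_1|\cdot)\prod_{j=2}^k Q(A_j|\cdot)}\right]$. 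This is the standard lower-bounding move that converts the exact identity into an evidence lower bound.

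Next I would split the logarithm of the product ratio as $\ln\frac{P(A_1|\cdot)}{Q(A_1|\cdot)} + \sum_{j=2}^k \ln\frac{P(A_j|\cdot)}{Q(A_j|\cdot)}$ and push it through the nested expectations, handling each index according to its conditioning. For $j=1$, because $Q(A_1|\cdot)$ and $P(A_1|\cdot)$ depend only on the lagged variables $\bm{z}_{t-1},\cdots,\bm{z}_{t-k}$ and not on any $A_i$, the expectation $\mathbb{E}_{Q(A_1|\cdot)}\ln\frac{P(A_1|\cdot)}{Q(A_1|\cdot)}$ collapses to exactly $-D_{KL}(Q(A_1|\cdot)\|P(A_1|\cdot))$, with the remaining outer expectations acting trivially. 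For $j \geq 2$, since $Q(A_j|\cdot)$ is conditioned on $A_1,\cdots,A_{j-1}$, the innermost expectation over $A_j$ yields a conditional divergence $-D_{KL}(Q(A_j|\cdot)\|P(A_j|\cdot))$ that must still be averaged over $A_1,\cdots,A_{j-1}$, so the outer expectations $\mathbb{E}_{Q(A_1|\cdot)}\cdots$ survive in front of those terms.

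Carefully tracking which expectations survive for each index $j$ is the main, and essentially the only, obstacle, and it is pure bookkeeping rather than a genuine difficulty. Once the negated divergences are collected and combined with the surviving expectation of the reconstruction term $\ln P(\bm{z}_t|\cdots,A_1,\cdots,A_k)$, the right-hand side is exactly $\mathcal{L}_{\textit{ELBO}}$. This completes the chain $\ln P(\bm{z}_t|\cdot) \geq \mathcal{L}_{\textit{ELBO}}$, confirming that maximizing the ELBO lower-bounds and hence models the log joint likelihood.
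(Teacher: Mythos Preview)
Your proposal is correct and follows essentially the same route as the paper: invoke Theorem 1, drop the non-negative aggregate KL term to obtain the lower bound, and then split the log-ratio under the nested expectations into the separate $-D_{KL}$ pieces. Your bookkeeping of which outer expectations survive for each index $j$ is in fact more careful than the paper's presentation, but the argument is the same.
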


\clearpage
\section{\textcolor{black}{Realworld Dataset Descriptions}}

\subsection{\textcolor{black}{Air Quality Forecasting Dataset}}
\begin{wraptable}{r}{7cm}
\vspace{-1.5cm}
\caption{Features of Air Quality Forecast Dataset.}
	\centering
    \begin{tabular}{c|c}
		\toprule
		Feature Type & Feature Name  \\
		\midrule
		\multirow{5}{*}{Air quality} & PM10\\
		~ & PM2.5\\
        ~ & NO2 Concentration\\
        ~ & CO Concentration\\
        ~ & O3 Concentration\\
        ~ & SO2 Concentration\\
        \midrule
        \multirow{6}*{Meteorology} & Weather\\
        ~ & Temperature\\
        ~ & Pressure\\
        ~ & Humidity\\
        ~ & Wind Speed\\
		\bottomrule
	\end{tabular}
 \vspace{-0.5cm}
	\label{tab:air_features}
\end{wraptable}

\textcolor{black}{The air quality forecast dataset \cite{10.1145/2783258.2788573} is collected in the Urban Air project \footnote{https://www.microsoft.com/en-us/research/project/urban-air/} from 2014/05/01 to 2015/04/30, which contains the air quality data, the meteorological data, and the weather forecast data, etc. This dataset contains four major Chinese cities: Beijing (B), Tianjin (T), Guangzhou (G), and Shenzhen (S). We employ air quality data as well as meteorological data to predict PM2.5. The details of the features in this data are shown in Table \ref{tab:air_features}. We use all air quality stations and take each city as a domain. We use this dataset because the sensors in smart city systems usually contain complex and compact causality. \textcolor{black}{Introducing causality into the prediction model will not only enhance the robustness of the model but also provide the interpretability of the PM2.5-producing reasons.}  Since the value range of this dataset varies across cities, we use Z-score Normalization to preprocess the data from each city. }

\subsection{\textcolor{black}{Human Motion Capture Dataset}}
\textcolor{black}{We further show the effectiveness of our method on the human motion capture dataset. Human motion has been studied in many fields, e.g., Granger Causality Inference \cite{9376668,zhang2017causal}, the linear and the nonlinear dynamical systems \cite{fox2014joint}. Many researchers use it to study the human motion capture dataset for Granger Causality since the relationships among the joints of humans are a natural causal structure. In this paper, we use the Human3.6M dataset \cite{ionescu2013human3} \footnote{http://vision.imar.ro/human3.6m/description.php}, one of the most popular benchmarks for human motion prediction \cite{barsoum2018hp}, for cross-domain human motion prediction. This dataset contains 15 motions and we choose three of them as three different domains: ``Walking'', ``Greeting'' and ``Eating'', the examples of this dataset are shown in Figure 1 (a). We choose 9 primary sensors which record the three-dimensional coordinate and the dimension of the processed dataset is 27. \textcolor{black}{In this dataset, we predict the value of all the dimensions, so it is a transferable multivariate forecasting problem.}} 

\subsection{\textcolor{black}{
PPG-DaLiA Dataset}}
\textcolor{black}{We also consider the PPG-DaLiA dataset,\footnote{https://archive.ics.uci.edu/ml/datasets/PPG-DaLiA} which is publicly available for PPG-based heart rate estimation. This multimodal dataset features physiological and motion data, recorded with both a wrist- and a chest-worn device from 15 volunteers while they perform a wide range of activities. Raw sensor data was recorded with two devices: a chest-worn device and a wrist-worn device. The chest-worn device provides an electrocardiogram, respiration, and three-axis acceleration. All signals are sampled at 700 Hz. The wrist-worn device provides the following sensor data: blood volume pulse, electrodermal activity, body temperature, and three-axis acceleration.}
\textcolor{black}{Though different activities lead to the variance of heart rate, how the body signals influence the heart rates usually follows the same causal mechanism, so we can employ this dataset to evaluate the performance of the proposed GCA model. To reduce the computation complexity, we randomly choose four of them as different domains: ``Cycling (C)'', ``Sitting (S)'', ``Working (W)'', and ``Driving (D)''. An example of this dataset is shown in Figure 1 (b) (c) and (d). We employ the data collected from all volunteers and use Z-score Normalization to process all the datasets for each domain, respectively.}

\begin{figure}[t]
\centering
\label{fig:human_motion}
\subfigure[PPG-DaLiA Dataset]{
\begin{minipage}[t]{0.4\linewidth}
\centering
\includegraphics[width=2.9in]{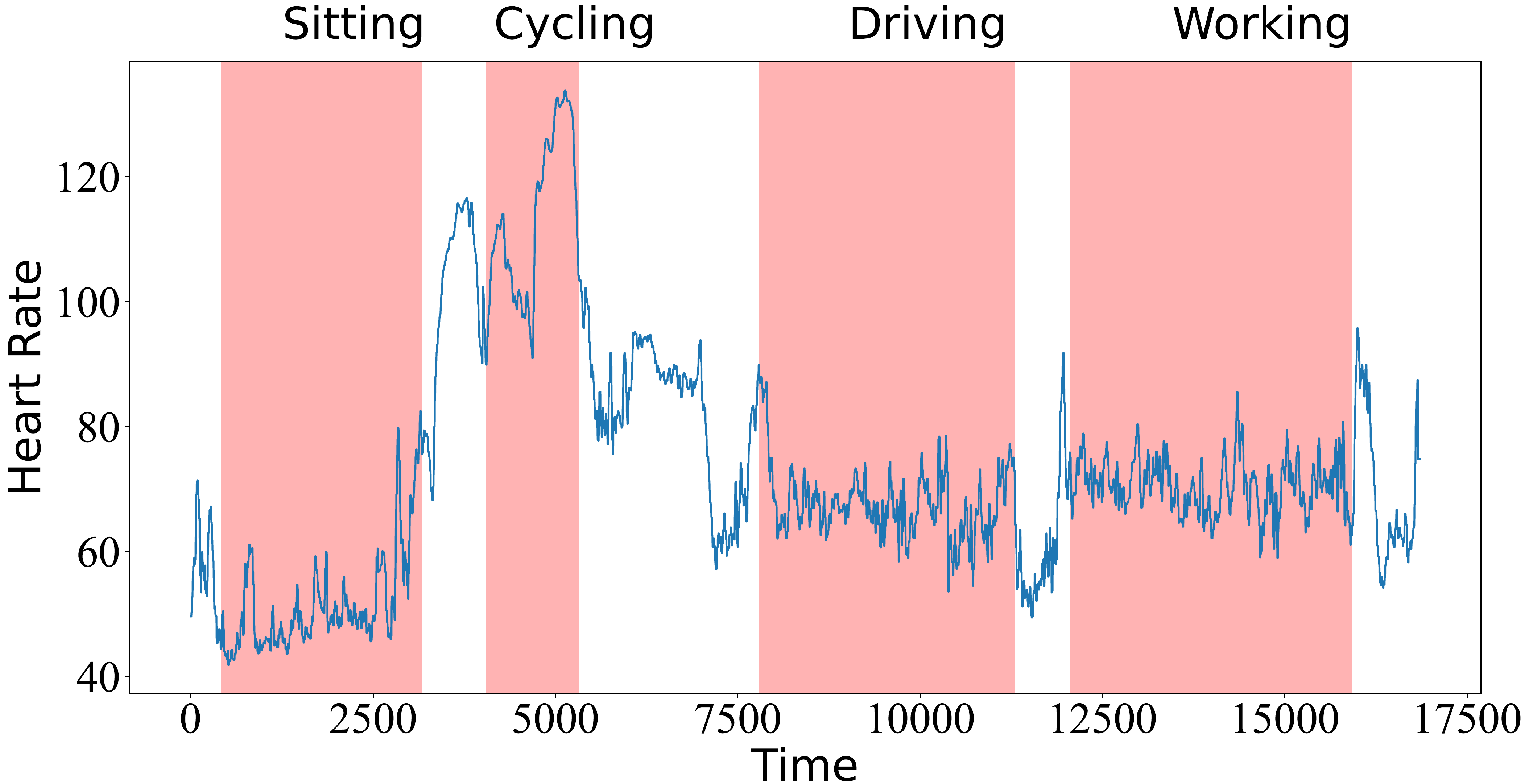}
\end{minipage}%
}%
\label{fig:human_motion}
\subfigure[Walking]{
\begin{minipage}[t]{0.18\linewidth}
\centering
\includegraphics[width=1.2in]{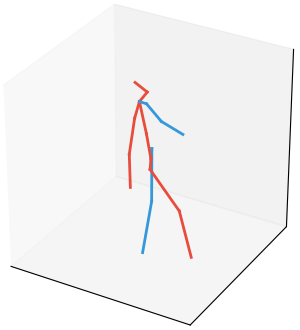}
\end{minipage}%
}%
\subfigure[Eating]{
\begin{minipage}[t]{0.18\linewidth}
\centering
\includegraphics[width=1.2in]{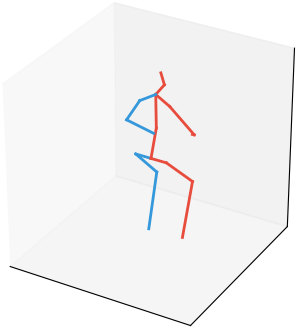}
\end{minipage}%
}%
\subfigure[Greeting]{
\begin{minipage}[t]{0.18\linewidth}
\centering
\includegraphics[width=1.2in]{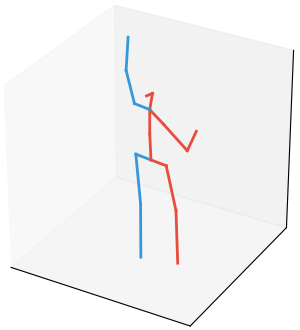}
\end{minipage}
}
\centering
\caption{\textcolor{black}{Illustration of the PPG-DaLiA dataset and human motion capture dataset. (a) denotes the Sample of the PPG-DaLiA dataset. The x-coordinate denotes the time and the y-coordinate denotes the heart rate. The red areas denote four different activities. (b), (c) and (d) denote the visualized examples of a frame in the human motion capture dataset. (b), (c) and (d) respective denote ``Walking'', ``Eating'' and ``Greeting''.}}
\end{figure}

\subsection{\textcolor{black}{Electricity Transformer Temperature Dataset}}
\textcolor{black}{The ETT (Electricity Transformer Temperature) \footnote{https://github.com/zhouhaoyi/ETDataset} is a crucial indicator in the electric power long-term deployment, so Zhou et.al \cite{zhou2021informer} collect 2-year data from two separated counties
in China, which consists of seven features,
including power load features and oil temperature. We employ the ETT-small subset which contains two stations. We consider each station as a domain and use Z-score Normalization to process the dataset in each domain.}

\subsection{\textcolor{black}{Electricity Load Diagrams Datasets}}
\textcolor{black}{The Electricity Load Diagrams 2011-2014 dataset \footnote{https://archive.ics.uci.edu/dataset/321/electricityloaddiagrams20112014} was created by Artur Trindade and shared on the UCI Machine Learning Repository. This comprehensive dataset captures the electricity consumption of 370 substations in Portugal from January 2011 to December 2014, with a sampling period of 15 minutes. We use Z-score Normalization to process the dataset and select 90 substations. Since this dataset is not devised for domain adaptation problem, we consider that the domain shift occurs in different seasons and divide them into four different domains based on the months: domain1 (January, February, March), domain2 (April, May, June), domain3 (July, August, September), and domain4 (October, November, December). We employ this dataset to evaluate the performance of the GCA model on high-dimensional multivariate time series datasets and aim to predict the electricity load of the substations.}

\subsection{\textcolor{black}{PEMS-BAY Dataset}}
\textcolor{black}{PEMS-BAY \footnote{https://pems.dot.ca.gov/} is a dataset of traffic speeds collected from the California Transportation Agencies (CalTrans) Performance Measurement System (PeMS) \cite{chen2001freeway}. It includes data collected from 325 sensors located throughout the Bay Area, covering a period of 6 months from January 1st, 2017 to May 31st, 2017 \cite{li2017diffusion}. The dataset provides detailed traffic information recorded at a frequency of every 5 minutes. we use Z-score Normalization to process the dataset and select 100 sensors. Since this dataset is not devised for domain adaptation problems, we consider that the domain shift occurs in different seasons and divide them into three domains based on months: domain1 (January), domain2 (February), and domain3 (March). Since the sensors are deployed in the traffic networks, the speed of the sensors is influenced causally. }

\section{\textcolor{black}{Other Experiment Results}}

\subsection{\textcolor{black}{Experiments Results on Electricity Transformer Dataset}}

\textcolor{black}{In this subsection, we show the experiment results on the Electricity Transformer Temperature dataset as shown in Table \ref{tab:ett}. According to the experiment results, we can find that the proposed GCA model achieves the ideal performance in most of the metrics. However, compared with other datasets, the improvement of our method is less obvious. This is because electricity power is influenced by many latent factors like weekdays, holidays, seasons, weather, and temperatures, which might result in redundant causal structures and further suboptimal performance.}

\begin{table*}[t]
\color{black}
	\centering
 \small
	\caption{The MAE and MSE on Electricity Transformer Temperature dataset for the baselines and the proposed method. }
 \label{tab:ett}
	\begin{tabular}{p{1.0cm}<{\centering}|p{0.85cm}<{\centering}|p{0.85cm}<{\centering} p{0.85cm}<{\centering} p{0.85cm}<{\centering} p{0.85cm}<{\centering} p{0.85cm}<{\centering} p{0.85cm}<{\centering} p{1.20cm}
    <{\centering} p{1.15cm}<{\centering} p{1.20cm}
    <{\centering} p{0.85cm}<{\centering} p{1.20cm}<{\centering}}
	    \toprule
	    \footnotesize{Task} & \footnotesize{Metric} & \footnotesize{GCA} & \footnotesize{DAF}& \footnotesize{LIRR}& \footnotesize{SASA}& \footnotesize{VRADA}& \footnotesize{CMTN}& \footnotesize{Autoformer} &
     \footnotesize{S4D} &
     \footnotesize{R-DANN} &
     \footnotesize{RDC} & \footnotesize{LSTM\_S+T}\\
		\midrule
		\multirow{2}*{$1\!\rightarrow \!2$} & RMSE & $\bm{0.1185}$ & $0.1217$ & $0.1493$ & $0.1332$ & $0.1391$ & $0.1327$ & $0.1219$ & $0.1937$ & $0.1489$ & $0.1380$ & $0.1513$\\
		~ & MAE & $0.2677$ & $\bm{0.2572}$ & $0.2827$ & $0.2646$ & $0.2799$ & $0.2767$ & $0.2699$ & $0.3321$ & $0.2860$ & $0.2789$ & $0.2807$\\
		\midrule
		\multirow{2}*{$2\!\rightarrow\! 1$} & RMSE & $\bm{0.0563}$ & $0.1058$ & $0.0990$ & $0.0866$ & $0.1067$ & $0.0840$ & $0.0765$ & $0.0889$ & $0.0729$ & $0.0544$ & $0.0803$\\
		~ & MAE & $0.1822$ & $0.2540$ & $0.2267$ & $0.1880$ & $0.2395$ & $\bm{0.1820}$ & $0.2104$ & $0.1996$ & $0.2090$ & $0.1713$ & $0.2114$\\
		\midrule
		\multirow{2}*{Average} & RMSE & $\bm{0.0874}$ & $0.1137$ & $0.1241$ & $0.1099$ & $0.1229$ & $0.1083$ & $0.0992$ & $0.1413$ & $0.1109$ & $0.0962$ & $0.1158$\\
		~ & MAE & $\bm{0.2249}$ & $0.2556$ & $0.2547$ & $0.2263$ & $0.2597$ & $0.2293$ & $0.2401$ & $0.2658$ & $0.2475$ & $0.2251$ & $0.2460$\\
		\bottomrule
	\end{tabular}
\end{table*}

\subsection{\textcolor{black}{Experiments Results on Electricity Load Diagrams Datasets}}
\textcolor{black}{We also evaluate the GCA model on high-dimension time-series datasets like Electricity Load Diagrams, which are shown in Table \ref{tab:eld}. According to the experiment results, the GCA model achieves comparable performance in several tasks like $1 \rightarrow 2$ and $1 \rightarrow 3$. However, we can also find that our method does not perform well in some tasks like $1 \rightarrow 4$,  $3 \rightarrow 2$, and $4 \rightarrow 1$. This is because the Electricity Load Diagrams datasets do not follow a stable data generation process mentioned in Section 3. And each substation (variable) in this dataset tends to be independent of each other.}

\begin{table*}
\color{black}
\small
	\centering
	\caption{The MAE and MSE on Electricity Load Diagrams Datasets for the baselines and the proposed method. 
 }
 \label{tab:eld}
	\begin{tabular}{p{1.0cm}<{\centering}|p{0.85cm}<{\centering}|p{0.85cm}<{\centering} p{0.85cm}<{\centering} p{0.85cm}<{\centering} p{0.85cm}<{\centering} p{0.85cm}<{\centering} p{0.85cm}<{\centering} p{1.20cm}
    <{\centering} p{1.15cm}<{\centering} p{1.20cm}
    <{\centering} p{0.85cm}<{\centering} p{1.20cm}<{\centering}}
	    \toprule
	    \footnotesize{Task} & \footnotesize{Metric} & \footnotesize{GCA} & \footnotesize{DAF}& \footnotesize{LIRR}& \footnotesize{SASA}& \footnotesize{VRADA}& \footnotesize{CMTN}& \footnotesize{Autoformer} &
     \footnotesize{S4D} &
     \footnotesize{R-DANN} &
     \footnotesize{RDC} & \footnotesize{LSTM\_S+T}\\
		\midrule
		\multirow{2}*{$1\!\rightarrow\! 2$} & RMSE & $\bm{0.0783}$ & $0.0889$ & $0.1114$ & $0.0787$ & $0.1010$ & $0.0857$ & $0.1035$ & $0.0811$ & $0.1166$ & $0.0896$ & $0.0973$ \\
		~ & MAE & $0.2154$ & $0.2284$ & $0.2545$ & $\bm{0.1967}$ & $0.2470$ & $0.2454$ & $0.2551$ & ${0.2151}$ & $0.2470$ & $0.2175$ & $0.2242$ \\
		\midrule
		\multirow{2}*{$1\!\rightarrow\! 3$} & RMSE & $\bm{0.0768}$ & $0.0832$ & $0.0871$ & $0.0771$ & $0.0993$ & $0.0969$ & $0.0832$ & $0.0870$ & $0.0944$ & $0.0927$ & $0.0866$\\
		~ & MAE & ${0.2209}$ & $0.2272$ & $0.2274$ & $\bm{0.2162}$ & $0.2405$ & $0.2293$ & $0.2318$ & $0.2216$ & $0.2322$ & $0.2269$ & $0.2266$\\
		\midrule
		\multirow{2}*{$1\!\rightarrow\! 4$} & RMSE & $0.0933$ & $0.1082$ & $0.0950$ & $\bm{0.0808}$ & $0.1035$ & $0.1076$ & $0.1222$ & $0.0920$ & $0.0952$ & $0.0939$ & $0.0979$\\
		~ & MAE & $0.2286$ & $0.2398$ & $0.2286$ & $\bm{0.2124}$ & $0.2294$ & $0.2387$ & $0.2701$ & $0.2203$ & $0.2276$ & $0.2293$ & $0.2291$\\
		\midrule
		\multirow{2}*{$2\!\rightarrow\! 1$} & RMSE & $\bm{0.0603}$ & $0.0904$ & $0.0830$ & $0.0615$ & $0.0624$ & $0.0688$ & $0.0917$ & $0.0612$ & $0.0710$ & $0.0668$ & $0.0708$\\
		~ & MAE & $\bm{0.1808}$ & $0.2457$ & $0.2084$ & $0.1859$ & $0.1815$ & $0.1896$ & $0.2199$ & $0.1831$ & $0.1879$ & $0.1861$ & $0.1882$\\
		\midrule
		\multirow{2}*{$2\!\rightarrow\! 3$} & RMSE & $0.0647$ & $0.0827$ & $0.0640$ & $0.0684$ & $0.0640$ & $0.0709$ & $0.0843$ & $0.0665$ & $\bm{0.0627}$ & $0.0704$ & $0.0669$\\
		~ & MAE & $0.1915$ & $0.2111$ & $0.1857$ & $0.1982$ & $0.1850$ & $0.1962$ & $0.2159$ & $0.1941$ & $\bm{0.1797}$ & $0.1980$ & $0.1948$\\
		\midrule
		\multirow{2}*{$2\!\rightarrow \!4$} & RMSE & $\bm{0.0923}$ & $0.1052$ & $0.1149$ & $0.0999$ & $0.1034$ & $0.1018$ & $0.1318$ & $0.0947$ & $0.1033$ & $0.0975$ & $0.1003$\\
		~ & MAE & $0.2232$ & $0.2420$ & $0.2583$ & $0.2361$ & $\bm{0.2075}$ & $0.2360$ & $0.2652$ & $0.2248$ & $0.2430$ & $0.2271$ & $0.2335$\\
		\midrule
		\multirow{2}*{$3\!\rightarrow\! 1$} & RMSE & $\bm{0.0830}$ & $0.0899$ & $0.0962$ & $0.0898$ & $0.0874$ & $0.0913$ & $0.1145$ & $0.0831$ & $0.0861$ & $0.0864$ & $0.4633$\\
		~ & MAE & $0.2199$ & $0.2228$ & $0.2226$ & $\bm{0.1966}$ & $0.2163$ & $0.2301$ & $0.2389$ & $0.2024$ & $0.2071$ & $0.2118$ & $0.2100$\\
		\midrule
		\multirow{2}*{$3\!\rightarrow\! 2$} & RMSE & $0.0649$ & $0.0762$ & $0.0709$ & $0.0620$ & $\bm{0.0590}$ & $0.0764$ & $0.0850$ & $0.0693$ & $0.0613$ & $0.0670$ & $0.0596$\\
		~ & MAE & $0.1947$ & $0.2087$ & $0.1999$ & $0.1857$ & $0.1790$ & $0.2022$ & $0.2185$ & $0.1977$ & $\bm{0.1754}$ & $0.1950$ & $0.1807$\\
		\midrule
		\multirow{2}*{$3\!\rightarrow\! 4$} & RMSE & $0.1072$ & $0.1220$ & $0.1155$ & $\bm{0.0955}$ & $0.1221$ & $0.1148$ & $0.1129$ & $0.1081$ & $0.1124$ & $0.1089$ & $0.1101$\\
		~ & MAE & $0.2452$ & $0.2659$ & $0.2550$ & $\bm{0.2301}$ & $0.2574$ & $0.2548$ & $0.2505$ & $0.2382$ & $0.2474$ & $0.2462$ & $0.2446$\\
		\midrule
		\multirow{2}*{$4\!\rightarrow\! 1$} & RMSE & $0.0559$ & $0.0632$ &$0.0567$  & $0.0539$ & $0.0562$ & $0.0624$ & $0.0724$ & $\bm{0.0504}$ & $0.2940$ & $0.5513$ & $0.0519$\\
		~ & MAE & $0.1759$ & $0.1868$ & $0.1759$ & $\bm{0.1601}$ & $0.1775$ & $0.1826$ & $0.1985$ & $0.1621$ & $0.1710$ & $0.1735$ & $0.1684$\\
		\midrule
		\multirow{2}*{$4\!\rightarrow\! 2$} & RMSE & $\bm{0.0603}$ & $0.0691$ & $0.0915$ & $0.0695$ & $0.0786$ & $0.0790$ & $0.0828$ & $0.0701$ & $0.0756$ & $0.0764$ & $0.0698$\\
		~ & MAE & $\bm{0.1930}$ & $0.1988$ & $0.2303$ & $0.2077$ & $0.2077$ & $0.2065$ & $0.2174$ & $0.1935$ & $0.2095$ & $0.2034$ & $0.1948$\\
		\midrule
		\multirow{2}*{$4\!\rightarrow\! 3$} & RMSE & $\bm{0.0661}$ & $0.0745$ & $0.0737$ & $0.0664$ & $0.0732$ & $0.0793$ & $0.0767$ & $0.0697$ & $0.0718$ & $0.0706$ & $0.0686$ \\
		~ & MAE & $\bm{0.1965}$ & $0.1995$ & $0.2044$ & $0.2003$ & $0.1993$ & $0.2038$ & $0.2089$ & $0.1968$ & $0.1992$ & $0.1986$ & $0.1975$\\
		\midrule
		\multirow{2}*{Average} & RMSE & $\bm{0.0753}$ & $0.0878$ & $0.0883$ & $\bm{0.0753}$ & $0.0842$ & $0.0862$ & $0.0967$ & $0.0778$ & $0.1037$ & $0.1226$ & $0.1119$ \\
		~ & MAE & $0.2071$ & $0.2230$ & $0.2209$ & $0.2024$ & $0.2107$ & $0.2179$ & $0.2325$ & $\bm{0.2042}$ & $0.2106$ & $0.2094$ & $0.2077$\\
		\bottomrule
	\end{tabular}
\end{table*}

\subsection{\textcolor{black}{Experiments Results on PEMS-BAY Dataset}}
\textcolor{black}{We also consider the PEMS-Bay dataset, another high-dimension time-series forecasting dataset. Experiment results are shown in Table \ref{tab:traffic}. Compared with the results on Electricity Load Diagrams datasets the GCA model performs better in the PEMS-Bay dataset. This is because clear causal relationships exist in sensors and our method can capture and further utilize these relationships for adaptive forecasting.}

\begin{table*}[t]
\color{black}
	\centering
 \small
	\caption{The MAE and MSE on Traffic dataset for the baselines and the proposed method. }
 \label{tab:traffic}
	\begin{tabular}{p{1.0cm}<{\centering}|p{0.85cm}<{\centering}|p{0.85cm}<{\centering} p{0.85cm}<{\centering} p{0.85cm}<{\centering} p{0.85cm}<{\centering} p{0.85cm}<{\centering} p{0.85cm}<{\centering} p{1.20cm}
    <{\centering} p{1.15cm}<{\centering} p{1.20cm}
    <{\centering} p{0.85cm}<{\centering} p{1.20cm}<{\centering}}
	    \toprule
	    \footnotesize{Task} & \footnotesize{Metric} & \footnotesize{GCA} & \footnotesize{DAF}& \footnotesize{LIRR}& \footnotesize{SASA}& \footnotesize{VRADA}& \footnotesize{CMTN}& \footnotesize{Autoformer} &
     \footnotesize{S4D} &
     \footnotesize{R-DANN} &
     \footnotesize{RDC} & \footnotesize{LSTM\_S+T}\\
		\midrule
		\multirow{2}*{$1\!\rightarrow \!2$} & RMSE & $\bm{0.1277}$ & $0.1317$ & $0.1392$ & $0.1349$ & $0.1430$ & $0.1430$ & $0.1295$ & $0.1296$ & $0.1353$ & $0.1288$ & $0.1336$\\
		~ & MAE & $\bm{0.1917}$ & $0.2095$ & $0.2120$ & $0.2083$ & $0.2155$ & $0.2061$ & $0.1953$ & $0.1988$ & $0.1971$ & $0.2002$ & $0.1972$\\
		\midrule
		\multirow{2}*{$1\!\rightarrow\! 3$} & RMSE & $0.1277$ & $0.1374$ & $0.1736$ & $0.1553$ & $0.1634$ & $0.1376$ & $\bm{0.0809}$ & $0.1294$ & $0.1406$ & $0.1586$ & $0.1730$\\
		~ & MAE & $0.2180$ & $0.2370$ & $0.2330$ & $0.2194$ & $0.2223$ & $0.2207$ & $\bm{0.1708}$ & $0.2196$ & $0.2167$ & $0.2244$ & $0.2399$\\
		\midrule
		\multirow{2}*{$2\!\rightarrow\! 1$} & RMSE & $\bm{0.0974}$ & $0.1298$ & $0.1407$ & $0.1304$ & $0.1266$ & $0.1289$ & $0.1321$ & $0.1154$ & $0.1350$ & $0.1358$ & $0.1260$\\
		~ & MAE & $\bm{0.1821}$ & $0.2211$ & $0.2313$ & $0.2139$ & $0.2017$ & $0.2144$ & $0.2177$ & $0.1950$ & $0.2098$ & $0.2117$ & $0.2064$\\
		\midrule
		\multirow{2}*{$2\!\rightarrow\! 3$} & RMSE & $\bm{0.1156}$ & $0.1295$ & $0.1540$ & $0.1639$ & $0.1650$ & $0.1338$ & $0.1260$ & $0.1409$ & $0.1427$ & $0.1632$ & $0.1347$\\
		~ & MAE & $\bm{0.1942}$ & $0.2125$ & $0.2237$ & $0.2324$ & $0.2285$ & $0.2152$ & $0.2139$ & $0.2205$ & $0.2061$ & $0.2254$ & $0.2044$\\
		\midrule
		\multirow{2}*{$3\!\rightarrow \!1$} & RMSE & $\bm{0.1104}$ & $0.1197$ & $0.1523$ & $0.1258$ & $0.1294$ & $0.1277$ & $0.1378$ & $0.1173$ & $0.1332$ & $0.1362$ & $0.1364$\\
		~ & MAE & $\bm{0.1911}$ & $0.2052$ & $0.2354$ & $0.2169$ & $0.2140$ & $0.2046$ & $0.2324$ & $0.2034$ & $0.2106$ & $0.2195$ & $0.2068$\\
		\midrule
		\multirow{2}*{$3\!\rightarrow\! 2$} & RMSE & $0.1370$ & $0.1385$ & $0.1369$ & $0.1433$ & $0.1362$ & $\bm{0.1249}$ & $0.1615$ & $0.1415$ & $0.1315$ & $0.1375$ & $0.1346$\\
		~ & MAE & $0.2044$ & $0.2180$ & $0.2103$ & $0.2057$ & $\bm{0.1905}$ & $0.2030$ & $0.2358$ & $0.2075$ & $0.1925$ & $0.1957$ & $0.1920$\\
		\midrule
		\multirow{2}*{Average} & RMSE & $\bm{0.1193}$ & $0.1311$ & $0.1494$ & $0.1423$ & $0.1439$ & $0.1327$ & $0.1279$ & $0.1290$ & $0.1364$ & $0.1433$ & $0.1397$\\
		~ & MAE & $\bm{0.1969}$ & $0.2172$ & $0.2243$ & $0.2161$ & $0.2121$ & $0.2106$ & $0.2110$ & $0.2075$ & $0.2054$ & $0.2128$ & $0.2078$\\
		\bottomrule
	\end{tabular}
\end{table*}

\subsection{\textcolor{black}{The Study of Different Prediction Length.}}
\begin{figure}[t]
  \centering
  \subfigure[Cycling $\rightarrow$ Driving]{\begin{minipage}[t]{0.33\textwidth}
    \centering
    \includegraphics[width=\textwidth]{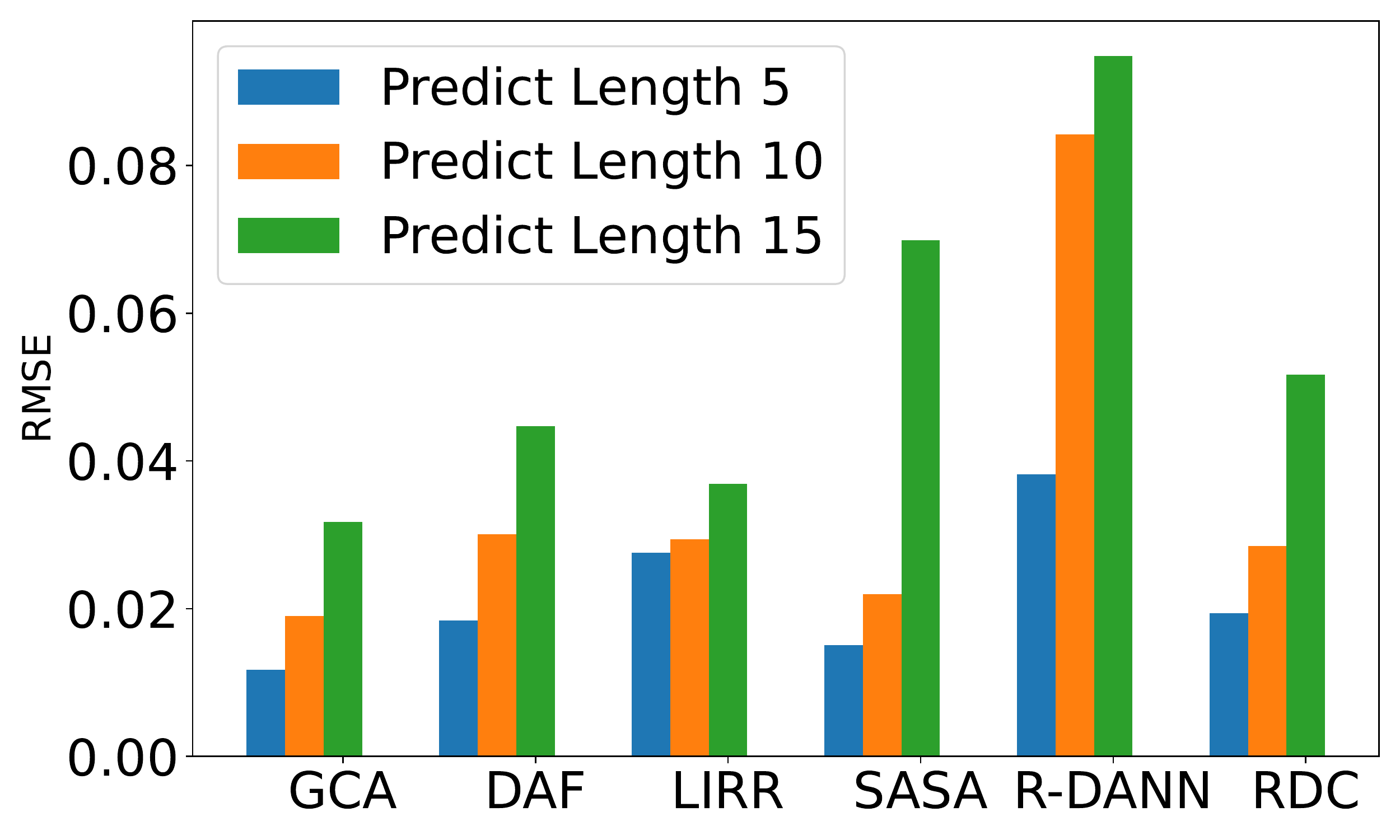}\captionsetup{justification=centering}
  \end{minipage}}
  \hfill
  \subfigure[Cycling $\rightarrow$ Sitting]{\begin{minipage}[t]{0.33\textwidth}
    \centering
\includegraphics[width=\textwidth]{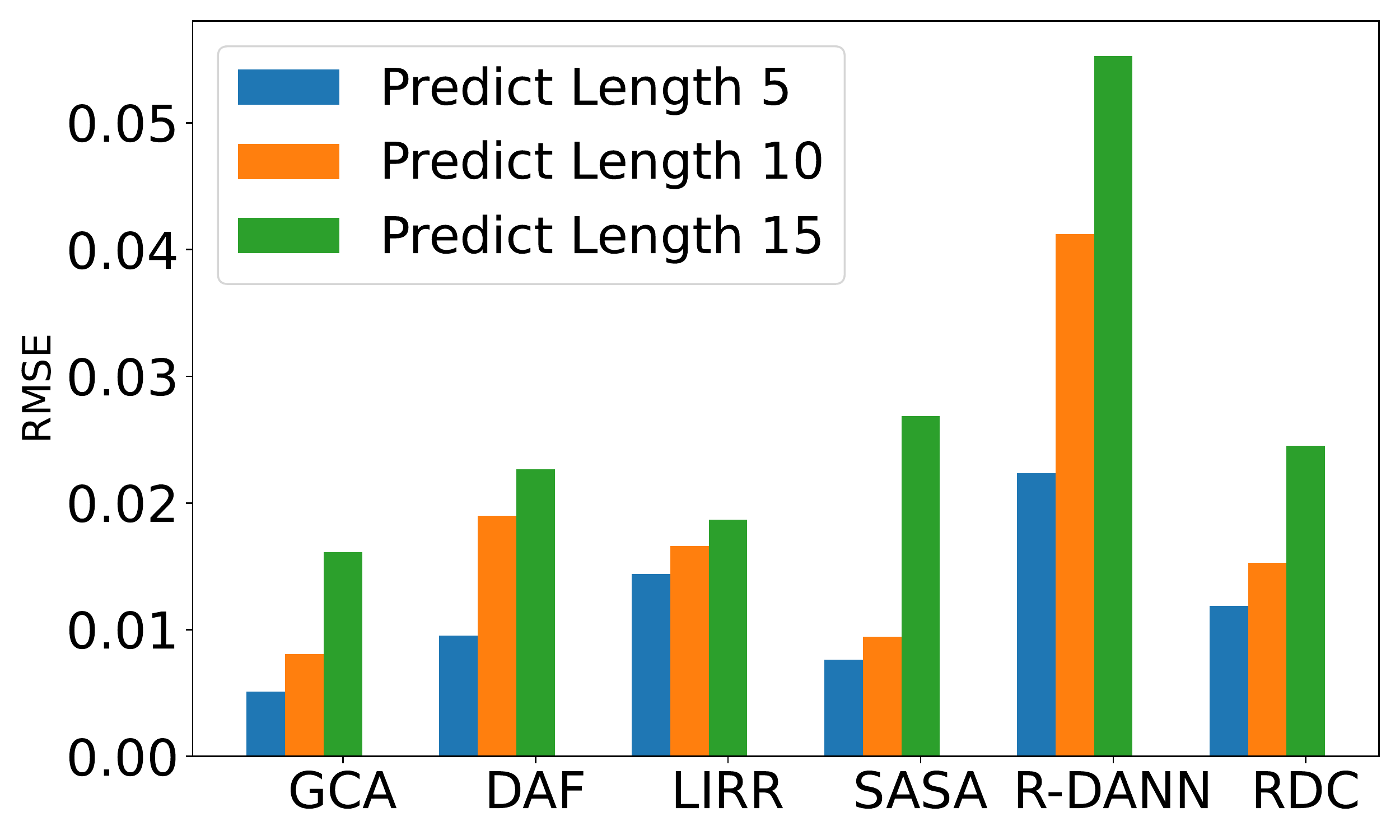}\captionsetup{justification=centering}
  \end{minipage}}
  \hfill
  \subfigure[Cycling $\rightarrow$ Working]{\begin{minipage}[t]{0.33\textwidth}
    \centering
\includegraphics[width=\textwidth]{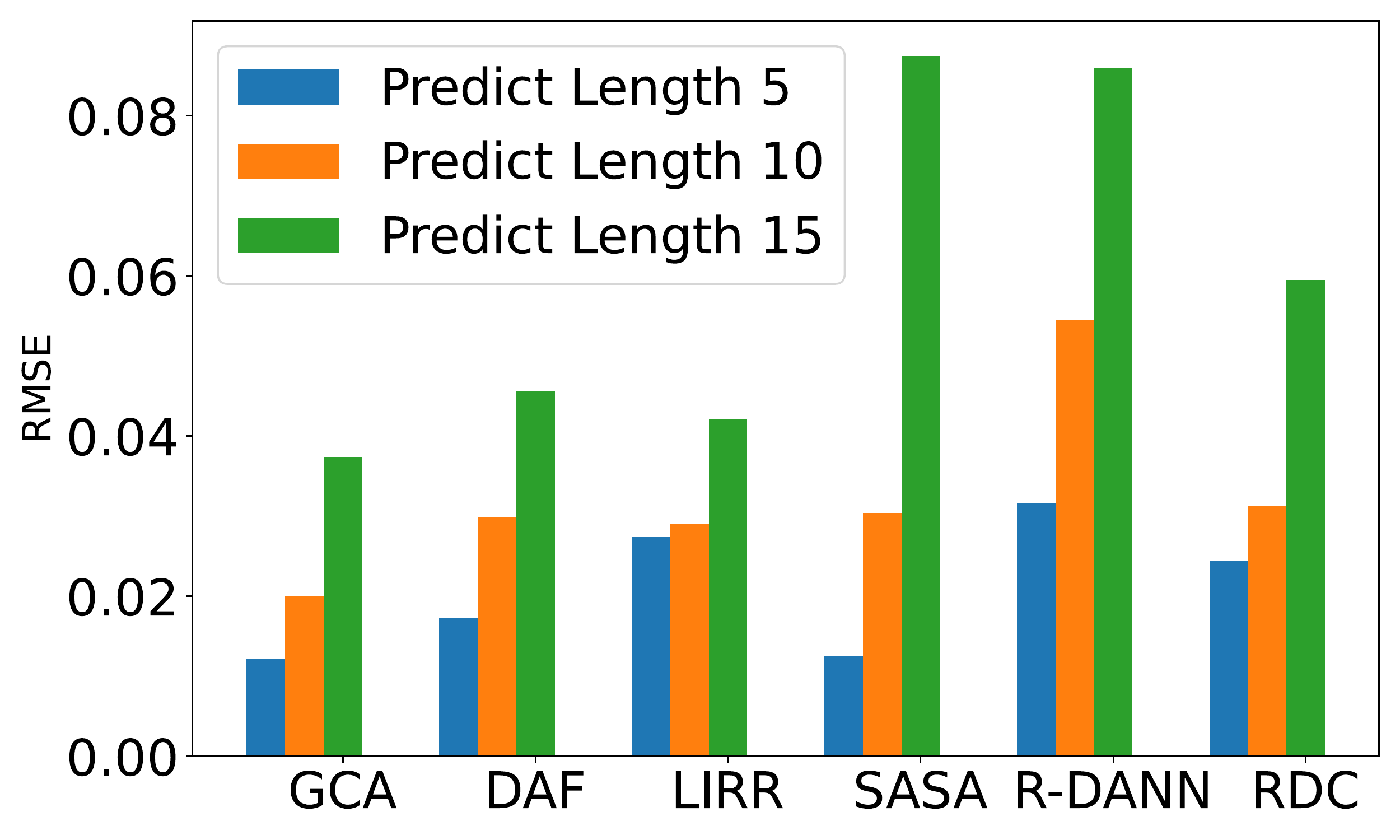}\captionsetup{justification=centering}
  \end{minipage}}
\centering
\caption{\textcolor{black}{Experiment results of different predicted length.}}
\label{fig:length}
\end{figure}

\textcolor{black}{we further provide the robustness analysis of prediction length. In detail, we consider different prediction lengths on the Heart Rate Prediction task, e.g. 5, 10, 
15. Experiment results on several tasks are shown in Figure \ref{fig:length} (a)(b) and (c). According to the experiment results, we can find that the value of RMSE drops as the increment of prediction length, but our GCA method can still achieve the best results, which evaluates its robustness.}

\bibliographystyle{IEEEtran}
\bibliography{ref}